\numberwithin{equation}{section}
\newtheorem{theorem}{Theorem}[section]
\newtheorem{lemma}[theorem]{Lemma}
\newtheorem{remark}[theorem]{Remark}
\newtheorem{definition}[theorem]{Definition}
\newtheorem{corollary}[theorem]{Corollary}
\newcommand{\bR}{\mathbb{R}}
\newcommand{\calX}{\mathcal{X}}
\newcommand{\calC}{\mathcal{C}}
\newcommand{\calD}{\mathcal{D}}
\newcommand{\calF}{\mathcal{F}}
\newcommand{\calG}{\mathcal{G}}
\newcommand{\calH}{\mathcal{H}}
\title[On Representing Linear Programs by Graph Neural Networks]{On Representing Linear Programs by Graph Neural Networks}
\author{Ziang Chen}
\address{(Z. Chen) Department of Mathematics, Duke University, Durham, NC 27708.}
\email{ziang@math.duke.edu}
\author{Jialin Liu}
\address{(J. Liu) Decision Intelligence Lab, Damo Academy, Alibaba US, Bellevue, WA 98004.}
\email{jialin.liu@alibaba-inc.com}
\author{Xinshang Wang}
\address{(X. Wang) Decision Intelligence Lab, Damo Academy, Alibaba US, Bellevue, WA 98004.}
\email{xinshang.w@alibaba-inc.com}
\author{Jianfeng Lu}
\address{(J. Lu) Departments of Mathematics, Physics, and Chemistry, Duke University, Durham, NC 27708.}
\email{jianfeng@math.duke.edu}
\author{Wotao Yin}
\address{(W. Yin) Decision Intelligence Lab, Damo Academy, Alibaba US, Bellevue, WA 98004.}
\email{wotao.yin@alibaba-inc.com}
\date{\today}
\thanks{A major part of the work of Z. Chen was completed during his internship at Alibaba US DAMO Academy. Corresponding author: Jialin Liu, jialin.liu@alibaba-inc.com}
\begin{document}
\begin{abstract}
    Learning to optimize is a rapidly growing area that aims to solve optimization problems or improve existing optimization algorithms using machine learning (ML). In particular, the graph neural network (GNN) is considered a suitable ML model for optimization problems whose variables and constraints are permutation--invariant, for example, the linear program (LP). While the literature has reported encouraging numerical results, this paper establishes the theoretical foundation of applying GNNs to solving LPs. Given any size limit of LPs, we construct a GNN that maps different LPs to different outputs. We show that properly built GNNs can reliably predict feasibility, boundedness, and an optimal solution for each LP in a broad class. Our proofs are based upon the recently--discovered connections between the Weisfeiler--Lehman isomorphism test and the GNN. To validate our results, we train a simple GNN and present its accuracy in mapping LPs to their feasibilities and solutions.
\end{abstract}

\maketitle

\section{Introduction}

Applying machine learning (ML) techniques to accelerate optimization, also known as \emph{Learning to Optimize (L2O)},
is attracting increasing attention.
It has been reported 
that L2O shows great potentials on both continuous optimization~\cites{monga2021algorithm,chen2021learning,amos2022tutorial} and combinatorial optimization~\cites{bengio2021machine,mazyavkina2021reinforcement}.
Many of the L2O works train a parameterized model that takes the optimization problem as input and outputs information useful to classic algorithms, such as a good initial solution and branching decisions~\cite{nair2020solving}, and some even directly generate an approximate optimal solution~\cite{gregor2010learning}. 

In these works, one is building an ML model to approximate the mapping from an explicit optimization instance either to its key properties or directly to its solution. The ability to achieve accurate approximation is called the representation power or expressive power of the model. When the approximation is accurate, the model can solve the problem or provide useful information to guide an optimization algorithm. This paper tries to address a fundamental but open theoretical problem for linear programming (LP): 
\begin{equation}
\label{eq:p0}
\tag{P0}
\begin{aligned}
& \textit{Which neural network can represent LP and predict its key properties and solution?}
\end{aligned} 
\end{equation}
To clarify, by solution we mean the optimal solution. 
Let us also remark that this question is not only of theoretical interest. Although currently neural network models may not be powerful enough to replace those mathematical-grounded LP solvers and obtain an exact LP solution, they are still useful in helping LP solvers from several perspectives, including warm-start and configuration. It requires that neural networks have sufficient power to recognize key characteristics of LPs. Some very recent papers \cites{deka2019learning,pan2020deepopf,chen2022learning} on DC optimal power flow (DC-OPF), an important type of LP, experimentally show the possibility of fast approximating LP solutions with deep neural networks. Practitioners may initialize an LP solver with those approximated solutions.
We hope the answer to \eqref{eq:p0} paves the way toward answering this question for other optimization types.

\paragraph{\textbf{Linear Programming (LP)}} LP is an important type of optimization problem with a wide range of applications, such as scheduling~\cite{hanssmann1960linear}, signal processing~\cite{candes2005decoding}, machine learning~\cite{dedieu2022solving}, etc. A general LP problem is defined as:
\begin{equation}\label{MILP}
	\min_{x\in\bR^n} ~~ c^\top x,\quad \textup{s.t.} ~~ Ax\circ b,\ l\leq x\leq u,
\end{equation}
where $A\in\bR^{m\times n}$, $c\in\bR^n$, $b\in\bR^m$, $l\in(\bR\cup\{-\infty\})^n$, $u\in(\bR\cup\{+\infty\})^n$, and $\circ\in\{\leq, = ,\geq\}^m$. 
Any LP problems must follow one of the following three cases~\cite{bertsimas1997introduction}:
\begin{itemize}[leftmargin=*]
    \item \emph{Infeasible.} The feasible set $\calX_{F} := \{x\in \bR^n: Ax\circ b,\ l\leq x\leq u\}$ is empty. In another word, there is no point in $\bR^n$ that satisfies the constraints in LP \eqref{MILP}.
    \item \emph{Unbounded.} The feasible set is non-empty, but the objective value can be arbitrarily good, i.e., unbounded from below. For any $R>0$, there exists an $x\in\calX_{F}$ such that $c^\top x < -R$.
    \item \emph{Feasible and bounded.} There exists $x^\ast \in \calX_{F}$ such that $c^\top x^\ast \leq c^\top x$ for all $x \in \calX_{F}$. Such $x^\ast$ is named as an optimal solution, and $c^\top x^\ast$ is the optimal objective value.
\end{itemize}
Thus, considering \eqref{eq:p0}, an ideal ML model is expected to be able to predict the three key characteristics of LP: \emph{feasibility, boundedness, and one of its optimal solutions} (if exists), by taking the LP features $(A,b,c,l,u,\circ)$ as input. 
Actually, such input has a strong mathematical structure.
If we swap the positions of the $i,j$-th variable in \eqref{MILP}, elements in vectors $b,c,l,u,\circ$ and columns of matrix $A$ will be reordered. The reordered features $(\hat{A},b,\hat{c},\hat{l},\hat{u},\hat{\circ})$ actually represent an exactly equivalent LP problem with the original one $(A,b,c,l,u,\circ)$. Such property is named as \emph{permutation invariance}.
If we do not explicitly restrict ML models with a permutation invariant structure, the models may overfit to the variable/constraint orders of instances in the training set.
Motivated by this point, we adopt \emph{Graph Neural Networks (GNNs)} that are permutation invariant naturally. 

\paragraph{\textbf{GNN in L2O}} GNN is a type of neural networks defined on graphs and widely applied in many areas, for example, recommender systems, traffic, chemistry, etc \cites{wu2020comprehensive,zhou2020graph}. 
Accelerating optimization solvers with GNNs attracts rising interest recently \cites{peng2021graph,ijcai2021-595}.
Many graph-related optimization problems, like minimum vertex cover, traveling salesman, vehicle routing, can be represented and solved approximately with GNNs due to their problem structures \cites{khalil2017learning,kool2018attention,joshi2019efficient,drori2020learning}. Besides that, one may solve a general LP or mixed-integer linear programming (MILP) with the help of GNNs.
\cite{gasse2019exact} proposed to represent an MILP with a bipartite graph and apply a GNN
on this graph to guide an MILP solver. \cite{ding2020accelerating} proposed a tripartite graph to represent MILP.
Since that, many approaches have been proposed to guide MILP or LP solvers with GNNs
\cites{nair2020solving,gupta2020hybrid,gupta2022lookback,shen2021learning,khalil2022mip,liu2022learning,paulus2022learning,qu2022improved,li2022learning}.
Although encouraging empirical results have been observed, theoretical foundations are still lack for this approach. 
Specifying \eqref{eq:p0}, we ask:
\begin{equation}
\label{eq:p1}
\tag{P1}
   \begin{aligned}
&\textit{Are there GNNs that can predict the feasibility, boundedness}\\
&\textit{and an optimal solution of LP?}
\end{aligned} 
\end{equation}

\paragraph{\textbf{Related works and contributions}} To answer \eqref{eq:p1}, one needs the theory of \emph{separation power} and \emph{representation power}. 
Separation power of a neural network (NN) means its ability to distinguish two different inputs. 
In our settings, a NN with strong separation power means that it can outputs different results when it is applied on any two different LPs.
Representation power of NN means its ability to approximate functions of interest. 
The theory of representation power is established upon the separation power. Only functions with strong enough separation power may possess strong representation power. 
The power of GNN has been studied in the literature (see \cites{sato2020survey,jegelka2022theory,Li2022} for comprehensive surveys), and some theoretical efforts have been made to represent some graph-related optimization problems with GNN~\cites{sato2019approximation,Loukas2020What}. However, there are still gaps to answer question \eqref{eq:p1} since the relationships between characteristics of LP and properties of graphs are not well established.
Our contributions are listed below: 
\begin{itemize}[leftmargin=*]
    \item (Separation Power). In the literature, it has been shown that the separation power of GNN is equal to the WL test~\cites{xu2019powerful,azizian2020expressive,geerts2022expressiveness}. However, there exist many pairs of LPs that cannot be distinguished by the WL test. We show that those puzzling LP pairs share the same feasibility, boundedness, and even an optimal solution if exists. Thus, GNN has strong enough separation power.
    \item (Representation Power). To the best of our knowledge, we established the first complete proof
    that GNN can universally represent a broad class of LPs. 
    More precisely, we prove that, there exist GNNs that can be arbitrarily close to the following three mappings: LP $\to$ feasibility, LP $\to$ optimal objective value ($-\infty$ if unbounded and $\infty$ if infeasible), and LP $\to$ an optimal solution (if exists),
    although they are not continuous functions and cannot be covered by the literature~\cites{keriven2019universal,chen2019equivalence,maron2019provably,maron2019universality,keriven2021universality}.
    \item (Experimental Validation). We design and conduct experiments that demonstrate the power of GNN on representing LP.
\end{itemize}

The rest of this paper is organized as follows. In Section \ref{sec:preliminary}, we provide preliminaries, including related notions, definitions and concepts. In Section \ref{sec:mainthm}, we present our main theoretical results. The sketches of proofs are provided in Section\ref{sec:sketch}.  We validate our results with numerical experiments in Section \ref{sec:numerical} and we conclude this paper with Section \ref{sec:conclusions}.

\section{Preliminaries}
\label{sec:preliminary}

In this section, we present concepts and definitions that will be used throughout this paper. We first describe how to represent an LP with a weighted bipartite graph, then we define GNN on those LP-induced graphs, and finally we further clarify question (\ref{eq:p1}) with strict mathematical definitions.

\subsection{LP represented as weighted bipartite graph} 
\label{sec:lp-graph}

Before representing LPs with graphs, we first define the graph that we will adopt in this paper: \emph{weighted bipartite graph}. 
A weighted bipartite graph $G = (V\cup W,E)$ consists of a vertex set $V\cup W$ that are divided into two groups $V$ and $W$ with $V\cap W = \emptyset$, and a collection $E$ of weighted edges, where 
each edge connects exactly one vertex in $V$ and one vertex in $W$. Note that there is no edge connecting vertices in the same vertex group. $E$ can also be viewed as a function $E:V\times W\rightarrow \bR$. We use $\calG_{m,n}$ to denote the collection of all weighted bipartite graphs $G = (V\cup W,E)$ with $|V| = m$ and $|W| = n$. We always write $V = \{v_1,v_2,\dots,v_m\}$, $W = \{w_1,w_2,\dots,w_n\}$, and $E_{i,j} = E(v_i,w_j)$, for $i\in\{1,2,\dots,m\},\ j\in\{1,2,\dots,n\}$. 

One can equip each vertex with a feature vector. Throughout this paper, we denote $h_i^V\in \calH^V$ as the feature vector of 
vertex $v_i\in V$ and denote $h_j^W\in\calH^W$ as the feature vector of 
vertex $w_j\in W$, where $\calH^V,\calH^W$ are feature spaces.
Then we define $\calH^V_m := (\calH^V)^m, \calH^W_n := (\calH^W)^n$ and concatenate all the vertex features together as $H = (h_1^V,h_2^V,\dots,h_m^V,h_1^W,h_2^W,\dots,h_n^W) \in \calH^V_m\times\calH^W_n$. Finally, a weighted bipartite graph with vertex features is defined as a tuple $(G,H)\in \calG_{m,n}\times\calH^V_m\times\calH^W_n$.

With the concepts described above, one can represent an LP \eqref{MILP} as a bipartite graph~\cite{gasse2019exact}: Each vertex in $W$ represents a variable in LP and each vertex in $V$ represents a constraint. The graph topology is defined with the matrix $A$ in the linear constraint.  More specifically, let us set
\begin{itemize}[leftmargin=*]
    \item Vertex $v_i$ represents the $i$-th constraint in $Ax\circ b$, and vertex $w_j$ represents the $j$-th variable $x_j$.
    \item Information of constraints is involved in the feature of $v_i$: $h_i^{V} = (b_i,\circ_i)$.
    \item The space of constraint features is defined as $\calH^V := \bR\times \{\leq,=,\geq\}$.
    \item Information of variables is involved in the feature of $w_i$: $h_j^{W} = (c_j,l_j,u_j)$.
    \item The space of variable features is defined as $\calH^W := \bR\times (\bR\cup\{-\infty\})\times (\bR\cup\{+\infty\})$.
    \item The edge connecting $v_i$ and $w_j$ has weight $E_{i,j} = A_{i,j}$.
\end{itemize}
Then an LP is represented as a graph $(G,H)\in \calG_{m,n}\times \calH^V_m\times\calH^W_n$. In the rest of this paper, we coin such graphs as \emph{LP-induced Graphs} or \emph{LP-Graphs} for simplicity. We present an LP instance and its corresponding LP-graph in Figure~\ref{fig:ex_LPgraph}.

\begin{figure}[htb!]
	\centering
	\begin{minipage}{0.35\textwidth}
\begin{equation*}
	\begin{split}
		\min_{x\in\bR^2} ~~ & x_1 + 2 x_2,\\
		\text{s.t.} ~~ & x_1 + 2 x_2 \geq 1,\\
		& 2 x_1 + x_2 =  2, \\
		&  x_1\geq 0 ,\ x_2\geq -1.
	\end{split}
\end{equation*}	
\end{minipage}
\begin{minipage}{0.6\textwidth}
	\begin{tikzpicture}[
		constraintb/.style={circle, draw = blue, scale = 0.8},
		variablex/.style={circle, draw = red, scale = 0.8},
		]
			
		\draw (-1,0.6) node[constraintb] (b1) {$v_1$};
		\draw (-1,-0.6) node[constraintb] (b2) {$v_2$};
		\draw (1,0.6) node[variablex] (x1) {$w_1$};
		\draw (1,-0.6) node[variablex] (x2) {$w_2$};
		\draw (-2.5,0.6) node {$h_1^V = (1,\geq)$};
		\draw (-2.5,-0.6) node {$h_2^V = (2, = )$};
		\draw (3,0.6) node {$h_1^W = (1,0,+\infty)$};
		\draw (3,-0.6) node {$h_2^W = (2,-1,+\infty)$};

		\draw[-] (x1.west) -- (b1.east) node[pos = 1/2] {$\textcolor{red}{1}$};
		\draw[-] (x2.west) -- (b1.east) node[pos = 1/3] {$\textcolor{red}{2}$};
		\draw[-] (x1.west) -- (b2.east) node[pos = 1/3] {$\textcolor{red}{2}$};
		\draw[-] (x2.west) -- (b2.east) node[pos = 1/2] {$\textcolor{red}{1}$};
			
	\end{tikzpicture}
\end{minipage}
\caption{An example of LP-graph}
\label{fig:ex_LPgraph}
\end{figure}
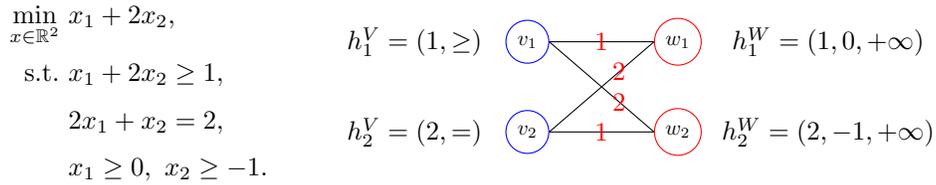

\subsection{Graph neural networks for LP}
The GNNs in this paper always take an LP-Graph as input and the output has two cases:
\begin{itemize}[leftmargin=*]
    \item The output is a single real number. In this case, GNN is a function $\calG_{m,n}\times \calH^V_m\times\calH^W_n \to \bR$ and usually used to predict the properties of the whole graph.
    \item Each vertex in $W$ has an output. Consequently, GNN is a function $\calG_{m,n}\times \calH^V_m\times\calH^W_n \to \bR^n$. Since $W$ represents variables in LP, GNN is used to predict properties of each variable in this case.
\end{itemize}
Now we define the GNN structure precisely. First we encode the input features into the embedding space with learnable functions $f_{\mathrm{in}}^V:\calH^V\to \bR^{d_0}$ and $f_{\mathrm{in}}^W:\calH^W\to \bR^{d_0}$:
\begin{equation}
    \label{eq:gnn-in}
    h_i^{0,V} = f_{\mathrm{in}}^V(h_i^V),~~h_j^{0,W} = f_{\mathrm{in}}^W(h_j^W),~~i=1,2,\dots,m,~j=1,2,\dots,n,
\end{equation}
where $h_i^{0,V},h_j^{0,W} \in \bR^{d_{0}}$ are initial embedded vertex features and $d_{0}$ is their dimension.
Then we choose learnable functions $f_l^V,f_l^W:\bR^{d_{l-1}} \rightarrow \bR^{d_{l}}$ and $g_l^V,g_l^W: \bR^{d_{l-1}} \times \bR^{d_{l}} \rightarrow \bR^{d_{l}}$ and update the hidden states with\footnote{Note that the update rules in \eqref{eq:gnn-update-v} and \eqref{eq:gnn-update-w} follow a message-passing way, where each vertex only collects information from its neighbors. Since $E_{i,j}=0$ if there is no connection between vertices $v_i$ and $w_j$, the sum operator in \eqref{eq:gnn-update-v} can be rewritten as $\sum_{j \in \mathcal{N}(v_i)}$, where $\mathcal{N}(v_i)$ denotes the set of neighbors of vertex $v_i$.}:
\begin{align}
    h_i^{l,V} = g_l^V\bigg( h_i^{l-1,V},\sum_{j=1}^n E_{i,j} f_l^W(h_j^{l-1, W}) \bigg),\quad i=1,2,\dots,m,\label{eq:gnn-update-v}\\
    h_j^{l,W} = g_l^W\bigg( h_j^{l-1,W},\sum_{i=1}^m E_{i,j} f_l^V(h_i^{l-1, V}) \bigg),\quad j=1,2,\dots,n,\label{eq:gnn-update-w}
\end{align}
where $h_i^{l,V},h_j^{l,W} \in \bR^{d_{l}}$ are vertex features at layer $l\ (1 \leq l \leq L)$ and their dimensions are $d_{1},\cdots,d_{L}$ respectively. 
The output layer of the single-output GNN is defined with a learnable function $f_\text{out}: \bR^{d_{L}} \times \bR^{d_{L}}\to\bR$:
\begin{equation}
\label{eq:gnn-out-single}
	y_{\mathrm{out}} = f_\text{out}\bigg(\sum_{i=1}^m h_i^{L,V},\sum_{j=1}^n h_j^{L,W}\bigg).
\end{equation}
The output of the vertex-output GNN is defined with $f^W_{\text{out}}: \bR^{d_{L}} \times\bR^{d_{L}} \times \bR^{d_{L}} \rightarrow\bR$:
\begin{equation}
\label{eq:gnn-out-vertex}
y_{\mathrm{out}}(w_j) = f^W_\text{out}\bigg(\sum_{i=1}^m h_i^{L,V},\sum_{j=1}^n h_j^{L,W},h_j^{L,W}\bigg),\quad j = 1,2,\cdots,n.
\end{equation}
We denote collections of single-output GNNs and vertex-output GNNs with $\calF_{\text{GNN}}$ and $\calF_{\text{GNN}}^W$, respectively:
\begin{equation}
    \label{eq:gnn-sets}
    \begin{aligned}
    \calF_{\text{GNN}} = & \{F:\calG_{m,n}\times \calH^V_m\times\calH^W_n \to \bR\ |\ F \text{ yields } \eqref{eq:gnn-in},\eqref{eq:gnn-update-v},\eqref{eq:gnn-update-w},\eqref{eq:gnn-out-single} \}, \\
    \calF_{\text{GNN}}^W = & \{F:\calG_{m,n}\times \calH^V_m\times\calH^W_n \to \bR^n\ |\ F \text{ yields } \eqref{eq:gnn-in},\eqref{eq:gnn-update-v},\eqref{eq:gnn-update-w},\eqref{eq:gnn-out-vertex} \}.
    \end{aligned}
\end{equation}
In practice, all the learnable functions in GNN $f_{\mathrm{in}}^V,f_{\mathrm{in}}^W,f_\text{out}, f^W_{\text{out}}, \{f_l^V,f_l^W,g_l^V,g_l^W\}_{l=0}^{L}$ are usually parameterized with multi-linear perceptrons (MLPs). In our theoretical analysis, we assume for simplicity that those functions may take all continuous functions on given domains, following the settings in \cite{azizian2020expressive}*{Section C.1}. Thanks to the universal approximation properties of MLP \cites{hornik1989multilayer,cybenko1989approximation}, one can extend our theoretical results by taking those learnable functions as large enough MLPs. 

\subsection{Revisiting question (\ref{eq:p1})}
\label{sec:p1-math-define}

With the definitions of LP-Graph and GNN above, we revisit the question (\ref{eq:p1}) and provide its precise mathematical description here. First we define three mappings that respectively describe the feasibility, optimal objective value and an optimal solution of an LP (if exists).

\paragraph{\textbf{Feasibility mapping}} The feasibility mapping is a classification function
\begin{equation}\label{feas_map}
    \Phi_{\text{feas}}: \calG_{m,n}\times\calH^V_m\times \calH^W_n \rightarrow \{0,1\},
\end{equation}
where $\Phi_{\text{feas}}(G,H) = 1$ if the LP associated with $(G,H)$ is feasible and $\Phi_{\text{feas}}(G,H) = 0$ otherwise.

\paragraph{\textbf{Optimal objective value mapping}} Denote
\begin{equation}\label{obj_map}
    \Phi_{\text{obj}}: \calG_{m,n}\times\calH^V_m\times \calH^W_n \rightarrow \bR\cup\{\infty,-\infty\},
\end{equation}
as the optimal objective value mapping, i.e., for any $(G,H)\in \calG_{m,n}\times\calH^V_m\times \calH^W_n$, $\Phi_{\text{obj}}(G,H)$ is the optimal objective value of the LP problem associated with $(G,H)$.

\begin{remark}
The optimal objective value of a LP problem can be a real number or $\infty/-\infty$. The ``$\infty$'' case corresponds to infeasible problems, while the ``$-\infty$'' case consists of LP problems whose objective function is unbounded from below in the feasible region. The preimage of all finite real numbers under $\Phi_{\text{obj}}$, $\Phi_{\text{obj}}^{-1}(\bR)$, actually describes all LPs with finite optimal objective value.
\end{remark}

\begin{remark}\label{rmk:soluLeast2norm}
In the case that a LP problem has finite optimal objective value, it is possible that the problem admits multiple optimal solutions. However, the optimal solution with the smallest $\ell_2$-norm must be unique. In fact, if $x\neq x'$ are two different solutions with $\|x\| = \|x'\|$, where $\|\cdot\|$ denotes the $\ell_2$-norm throughout this paper. Then $\frac{1}{2}(x+x')$ is also an optimal solution due to the convexity of LPs, and it holds that $\|\frac{1}{2}(x+x')\|^2 < \frac{1}{2}\|x\|^2 + \frac{1}{2}\|x'\|^2 = \|x\|^2 = \|x'\|^2$, i.e., $\|\frac{1}{2}(x+x')\| < \|x\| = \|x'\|$, where the inequality is strict since $x\neq x'$. Therefore, $x$ and $x'$ cannot be optimal solutions with the smallest $\ell_2$-norm.
\end{remark}

\paragraph{\textbf{Optimal solution mapping}} For any $(G,H)\in \Phi_{\text{obj}}^{-1}(\bR)$, we have remarked before that the LP problem associated with $(G,H)$ has a unique optimal solution with the smallest $\ell_2$-norm. Let
\begin{equation}\label{solu_map}
    \Phi_{\text{solu}}: \Phi_{\text{obj}}^{-1}(\bR)\rightarrow \bR^n,
\end{equation}
be the mapping that maps $(G,H)\in \Phi_{\text{obj}}^{-1}(\bR)$ to the optimal solution with the smallest $\ell_2$-norm.

\paragraph{\textbf{Invariance and Equivariance}} 
We denote $S_m,S_n$ as the group consisting of all permutations on vertex groups $V,W$ respectively. In another word, $S_m$ involves all permutations on the constraints of LP and $S_n$ involves all permutations on the variables. In this paper, we say a function $F:\calG_{m,n}\times\calH^V_m\times \calH^W_n \to \bR$ is invariant if it satisfies
\begin{equation*}
    F(G,H) = F\left(  (\sigma_V,\sigma_W)\ast (G,H)  \right),~~\forall \sigma_V \in S_m, \sigma_W \in S_n,
\end{equation*}
and a function $F_W:\calG_{m,n}\times\calH^V_m\times \calH^W_n \to \bR^n$ is equivariant if it satisfies
\begin{equation*}
    \sigma_W( F_W(G,H)) = F_W \left(  (\sigma_V, \sigma_W)\ast (G,H))  \right),~~\forall \sigma_V \in S_m, \sigma_W \in S_n,
\end{equation*}
where $(\sigma_V, \sigma_W)\ast (G,H)$ is the permuted graph obtained from reordering indices in $(G,H)$ using $(\sigma_V,\sigma_W)$, which is the group action of $S_m\times S_n$ on $\calG_{m,n}\times\calH^V_m\times \calH^W_n$. One can check that $\Phi_{\text{feas}}$, $\Phi_{\text{obj}}$, and any $F\in\calF_{\text{GNN}}$ are invariant, and that $\Phi_{\text{solu}}$ and any $F_W \in \calF_{\text{GNN}}^W$ are equivariant.

Question \eqref{eq:p1} actually asks: Does there exist $F \in \calF_{\mathrm{GNN}}$ that well approximates $\Phi_{\text{feas}}$ or $\Phi_{\text{obj}}$? And does there exist $F_W \in \calF_{\mathrm{GNN}}^W$ that well approximates $\Phi_{\text{solu}}$?

\section{Main Results}
\label{sec:mainthm}

This section presents our main theorems that answer question \eqref{eq:p1}. As we state in the introduction, representation power is built upon separation power in our paper. We first present with the following theorem that GNN has strong enough separation power to represent LP.
\begin{theorem}\label{thm:sameGNN2sameFeas}
    Given any two LP instances $(G,H), (\hat{G},\hat{H})\in \calG_{m,n}\times\calH^V_m \times\calH^W_n$, if $F(G,H) = F(\hat{G},\hat{H})$ for all $F\in\calF_{\text{GNN}}$, then they share some common characteristics:
    \begin{itemize}[leftmargin=8mm]
        \item[(i)] Both LP problems are feasible or both are infeasible, i.e., $\Phi_{\text{feas}}(G,H) = \Phi_{\text{feas}}(\hat{G},\hat{H})$.
        \item[(ii)] The two LP problems have the same optimal objective value, i.e., $\Phi_{\text{obj}}(G,H) = \Phi_{\text{obj}}(\hat{G},\hat{H})$.
        \item[(iii)] If both problems are feasible and bounded, they have the same optimal solution with the smallest $\ell_2$-norm up to a permutation, i.e., $\Phi_{\text{solu}}(G,H) = \sigma_W(\Phi_{\text{solu}}(\hat{G},\hat{H}))$ for some $\sigma_W\in S_n$.
    \end{itemize}
    Furthermore, if $F_W(G,H) = F_W(\hat{G},\hat{H}),\ \forall~F_W\in\calF_{\text{GNN}}^W$, then (iii) holds without taking permutations, i.e., $\Phi_{\text{solu}}(G,H) = \Phi_{\text{solu}}(\hat{G},\hat{H})$.
\end{theorem}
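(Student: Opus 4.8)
The plan is to route the entire argument through the equivalence between the separation power of $\calF_{\text{GNN}}$ and the one-dimensional Weisfeiler--Lehman (color refinement) test, which I would invoke from the cited literature. Because the learnable maps are allowed to be arbitrary continuous functions, one can build GNNs whose layer-$l$ embeddings encode exactly the color of each vertex after $l$ refinement rounds (the injectivity needed to separate the relevant multisets being supplied by universal approximation), so the hypothesis $F(G,H)=F(\hat{G},\hat{H})$ for all $F\in\calF_{\text{GNN}}$ forces $(G,H)$ and $(\hat{G},\hat{H})$ to produce identical stable color multisets. I would then extract the structural object that drives everything: a common equitable partition. The stable coloring partitions the variables of $G$ into classes $I_1,\dots,I_p$ and its constraints into $J_1,\dots,J_q$ (and likewise for $\hat{G}$), where vertices in a common class share identical features, corresponding classes across the two graphs have equal sizes, and the partition is equitable, meaning $\sum_{j\in I_k}A_{ij}=\beta_{lk}$ for every $i\in J_l$ and $\sum_{i\in J_l}A_{ij}=\gamma_{kl}$ for every $j\in I_k$, with the reduced data (the coefficients $\beta_{lk}$, the class sizes, and the features) coinciding between $G$ and $\hat{G}$. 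In other words, the two LPs share a single \emph{reduced LP} posed on the color classes.

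The technical engine is a symmetrization operator $\Pi$ that averages a vector within each variable class: $(\Pi x)_j=|I_k|^{-1}\sum_{j'\in I_k}x_{j'}$ for $j\in I_k$. I would verify that $\Pi$ maps the feasible set of $(G,H)$ into itself while preserving the objective. The key computation is that for any constraint $i\in J_l$,
\begin{equation*}
\sum_{j=1}^n A_{ij}(\Pi x)_j=\frac{1}{|J_l|}\sum_{i'\in J_l}\sum_{j=1}^n A_{i'j}x_j,
\end{equation*}
which follows from the counting identity $|J_l|\beta_{lk}=|I_k|\gamma_{kl}$. Since all constraints in $J_l$ share the same sense $\circ$ and right-hand side, and averaging respects $\leq$, $=$, and $\geq$, feasibility is preserved, while the box constraints survive because $l,u$ are class-constant and $\Pi x$ is a within-class average; the objective is unchanged because $c$ is class-constant. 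Thus optimizing the original LP is equivalent to optimizing over class-constant points, i.e., to the reduced LP.

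With this reduction, (i) and (ii) follow by transfer. Any class-constant feasible point of $(G,H)$ lifts to a feasible point of the shared reduced LP and hence of $(\hat{G},\hat{H})$; applying $\Pi$ to an arbitrary feasible point first shows that feasibility of one graph yields feasibility of the reduced LP, giving (i) by symmetry. For (ii), $\Pi$ preserves objective values, so the optimal value of each LP equals that of the common reduced LP; the unbounded case transfers a sequence of class-constant feasible points with objective tending to $-\infty$, and the infeasible case is $+\infty$ by (i), so $\Phi_{\text{obj}}(G,H)=\Phi_{\text{obj}}(\hat{G},\hat{H})$ in $\bR\cup\{\infty,-\infty\}$.

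For (iii), let $x^\ast=\Phi_{\text{solu}}(G,H)$. Since $\Pi x^\ast$ is again optimal and $\|\Pi x^\ast\|\leq\|x^\ast\|$ with equality only if $x^\ast$ is already class-constant (as $\Pi$ is the orthogonal projection onto the class-constant subspace), minimality of the norm forces $x^\ast$ to be class-constant. Hence $x^\ast$ corresponds to the minimizer of the strictly convex weighted problem $\min\sum_k|I_k|y_k^2$ over the optimal reduced points, which is unique and identical for both graphs; lifting its $\hat{G}$-counterpart and invoking the same argument there yields $\hat{x}^\ast=\Phi_{\text{solu}}(\hat{G},\hat{H})$ with the same class values, so $x^\ast$ and $\hat{x}^\ast$ agree up to the permutation $\sigma_W$ matching $I_k$ with $\hat{I}_k$. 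Finally, if also $F_W(G,H)=F_W(\hat{G},\hat{H})$ for all $F_W\in\calF_{\text{GNN}}^W$, then reading the $j$-th output forces $w_j$ and $\hat{w}_j$ to carry the same stable color for every $j$; since $x^\ast$ and $\hat{x}^\ast$ are constant on color classes, $x^\ast_j$ and $\hat{x}^\ast_j$ depend only on that common color, giving $x^\ast=\hat{x}^\ast$ without any permutation. I expect the main obstacle to be the structural lemma of the first paragraph, namely rigorously upgrading ``identical stable color multisets'' for these weighted, feature-laden bipartite graphs into the matched equitable-partition data that certifies a genuinely common reduced LP, together with checking that $\Pi$ preserves feasibility exactly rather than merely preserving class aggregates.
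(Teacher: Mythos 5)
Your proposal is correct and follows essentially the same route as the paper: it reduces the hypothesis to WL-indistinguishability via the GNN/WL equivalence, extracts the matched stable (equitable) partition, transfers feasibility and objective value by averaging within variable color classes, and pins down the minimal-$\ell_2$-norm solution by observing that the class-averaging map is a norm-nonincreasing projection, with the final no-permutation claim obtained from the vertex-wise color matching forced by $\calF_{\text{GNN}}^W$. Your symmetrization operator $\Pi$ and the counting identity $|J_l|\beta_{lk}=|I_k|\gamma_{kl}$ are exactly the computation in the paper's Lemmas on WL-indistinguishable LPs, just packaged as a ``reduced LP'' on color classes.
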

This theorem demonstrates that the function spaces $\calF_{\mathrm{GNN}}$ and $\calF_{\mathrm{GNN}}^W$ are rich enough to distinguish the characteristics of LP. Given two LP instances $(G,H), (\hat{G},\hat{H})$, as long as their feasibility or boundedness are different, there must exist $F \in \calF_{\mathrm{GNN}}$ that can distinguish them: $F(G,H)\neq F(\hat{G},\hat{H})$. Moreover, as long as their optimal solutions with the smallest $\ell_2$-norm are different, there must exist $F_W \in \calF_{\mathrm{GNN}}^W$ that can distinguish them: $F_W(G,H)\neq F_W(\hat{G},\hat{H})$. With Theorem \ref{thm:sameGNN2sameFeas} served as a foundation, we can prove that GNN can approximate the three mappings $\Phi_{\text{feas}}$, $\Phi_{\text{obj}}$ and $\Phi_{\text{solu}}$ to arbitrary precision. Before presenting those results, we first define some concepts of the space $\calG_{m,n}\times\calH^V_m\times \calH^W_n$.

\paragraph{\textbf{Topology and measure}} Throughout this paper, we consider $\calG_{m,n}\times\calH^V_m\times\calH^W_n$, where $\calH^V = \bR \times\{\leq, =, \geq\}$ and $\calH^W = \bR \times (\bR\cup\{-\infty\})\times (\bR\cup\{+\infty\})$, as a topology space with product topology and a measurable space with product measure. It's enough to define the topology and measure of each part separately. Since  each graph in this paper (without vertex features) can be represented with matrix $A\in\bR^{m \times n}$, the graph space $\calG_{m,n}$ is isomorphic to the Euclidean space $\bR^{m \times n}$: $\calG_{m,n} \cong \bR^{m\times n}$ and we equip $\calG_{m,n}$ with the standard Euclidean topology and the standard Lebesgue measure. The real spaces $\bR$ in $\calH^W$ and $\calH^V$ are also equipped with the standard Euclidean topology and Lebesgue measure. 
All the discrete spaces $\{\leq, =, \geq\}$, $\{-\infty\}$, and $\{+\infty\}$ have the discrete topology, and all unions are disjoint unions.
We equip those spaces with a discrete measure $\mu(S) = |S|$, where $|S|$ is the number of elements in a finite set $S$. 
This finishes the whole definition and we denote $\text{Meas}(\cdot)$ as the measure on $\calG_{m,n}\times\calH^V_m\times \calH^W_n$.

\begin{theorem}\label{thm:GNNapproxPhi_feas}
    Given any measurable $X\subset \calG_{m,n}\times\calH^V_m\times \calH^W_n$ with finite measure, for any $\epsilon>0$, there exists some $F\in\calF_{\text{GNN}}$, such that
	\begin{equation*}
	\label{eq:GNNapproxPhi_feas}
		\text{Meas}\left(\left\{(G,H)\in X:\mathbb{I}_{F(G,H)>1/2}\neq \Phi_{\text{feas}}(G,H)\right\}\right)<\epsilon,
	\end{equation*}
    where $\mathbb{I}_{\cdot}$ is the indicator function, i.e., $\mathbb{I}_{F(G,H)>1/2} = 1$ if $F(G,H)>1/2$ and $\mathbb{I}_{F(G,H)>1/2} = 0$ otherwise.
\end{theorem}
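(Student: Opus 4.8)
The plan is to combine the separation guarantee of Theorem~\ref{thm:sameGNN2sameFeas} with two analytic tools: Lusin's theorem, which tames the unavoidable discontinuity of the $\{0,1\}$-valued map $\Phi_{\text{feas}}$, and a Stone--Weierstrass density argument for GNNs, which performs the actual approximation. Because $\Phi_{\text{feas}}$ is genuinely discontinuous, no $F\in\calF_{\text{GNN}}$ can approximate it uniformly on all of $X$; the theorem only asks for agreement outside a set of measure $<\epsilon$, and this slack is precisely what Lusin's theorem will absorb. As a preliminary I would check that $\Phi_{\text{feas}}$ is measurable so that the target set is well defined: fixing the discrete coordinates $\circ$ together with the pattern of finite versus infinite entries of $l,u$ cuts the space into finitely many cells, on each of which feasibility ``$\exists x:\ Ax\circ b,\ l\le x\le u$'' is the projection of a semialgebraic set, hence semialgebraic by Tarski--Seidenberg and in particular Borel; a finite union of Borel sets is Borel.

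Next, I would exploit that $\calG_{m,n}\times\calH^V_m\times\calH^W_n$ is a locally compact, $\sigma$-compact Polish space on which $\text{Meas}$ is a Radon measure. Applying Lusin's theorem to the finite-measure set $X$ yields a compact set $K\subseteq X$ with $\text{Meas}(X\setminus K)<\epsilon$ on which the restriction $\Phi_{\text{feas}}|_K$ is continuous. Once this is in hand, it suffices to build a single $F\in\calF_{\text{GNN}}$ that matches $\Phi_{\text{feas}}$ after thresholding at $1/2$ on all of $K$, since the discrepancy set will then be contained in $X\setminus K$.

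The heart of the argument is approximation on $K$. I would first note that $\calF_{\text{GNN}}$ is closed under running finitely many GNNs in parallel---stacking their hidden channels, padding the shallower networks with pass-through layers, and folding an arbitrary continuous combination into the free output map $f_{\text{out}}$. Consequently $\calF_{\text{GNN}}|_K$ is a subalgebra of $C(K)$ that contains the constants and is closed under continuous post-composition. By the non-separating form of the Stone--Weierstrass theorem (equivalently, by the density result of \cite{azizian2020expressive}), its uniform closure is exactly the set of continuous functions on $K$ that are constant on each indistinguishability class, where two points are indistinguishable when every $F\in\calF_{\text{GNN}}$ assigns them equal values. By Theorem~\ref{thm:sameGNN2sameFeas}(i) the map $\Phi_{\text{feas}}$ is constant on every such class, and $\Phi_{\text{feas}}|_K$ is continuous by the Lusin step, so $\Phi_{\text{feas}}|_K$ belongs to this closure. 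Choosing $F\in\calF_{\text{GNN}}$ with $\|F-\Phi_{\text{feas}}\|_{\infty,K}<1/2$ forces $F>1/2$ wherever $\Phi_{\text{feas}}=1$ and $F<1/2$ wherever $\Phi_{\text{feas}}=0$, so $\mathbb{I}_{F>1/2}=\Phi_{\text{feas}}$ throughout $K$ and the bad set sits inside $X\setminus K$.

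The step I expect to be hardest is this density claim: one must verify that the parallel-composition construction really makes $\calF_{\text{GNN}}|_K$ a subalgebra rich enough for Stone--Weierstrass, and then invoke the non-separating version correctly so that its closure is precisely the continuous functions respecting the GNN-indistinguishability partition. This is exactly the place where Theorem~\ref{thm:sameGNN2sameFeas} is indispensable: by certifying that $\Phi_{\text{feas}}$ never distinguishes two LPs that the GNNs cannot distinguish, it converts the discontinuous classifier---once localized to $K$---into a legitimately approximable target.
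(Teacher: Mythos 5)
Your proposal is correct and follows essentially the same route as the paper: Lusin's theorem to extract a compact set on which $\Phi_{\text{feas}}$ is continuous, the Stone--Weierstrass-based universal approximation result (Theorem~\ref{thm:uniapprox_scal}) together with the separation guarantee to obtain $\sup|F-\Phi_{\text{feas}}|<1/2$ on that compact set, and then thresholding at $1/2$ so the discrepancy set lies in the removed part of measure less than $\epsilon$. The only cosmetic difference is your measurability argument via Tarski--Seidenberg; the paper instead exhibits the feasible-parameter set as a countable union/intersection of sublevel sets of an explicit continuous violation function (Lemma~\ref{lem:feas_meas}), and both are valid.
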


This theorem shows that GNN is a good classifier for LP instances in $X$ as long as $X$ has finite measure. If we use $F(G,H)>1/2$ as the criteria to predict the feasibility, the classification error rate is controlled by $\epsilon / \text{Meas}(X)$, where $\epsilon$ can be arbitrarily small. Furthermore, we show that GNN can perfectly fit any dataset with finite samples, which is presented in the following corollary.

\begin{corollary}\label{cor:GNNapproxPhi_feas}
    For any $\calD\subset \calG_{m,n}\times\calH^V_m\times \calH^W_n$ with finite instances, there exists $F\in\calF_{\text{GNN}}$ that
    \begin{equation*}
        \mathbb{I}_{F(G,H)>1/2} = \Phi_{\text{feas}}(G,H),\quad\forall~(G,H)\in \calD.
    \end{equation*}
\end{corollary}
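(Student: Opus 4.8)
The plan is to derive the corollary from the \emph{separation power} of Theorem \ref{thm:sameGNN2sameFeas}(i) rather than from the measure-theoretic Theorem \ref{thm:GNNapproxPhi_feas}: note that a finite $\calD$ has $\text{Meas}(\calD)=0$ (the graph factor $\calG_{m,n}\cong\bR^{m\times n}$ carries Lebesgue measure, under which a point is null), so Theorem \ref{thm:GNNapproxPhi_feas}, which only bounds the measure of the error set, gives no control at the individual points of $\calD$. Write $\calD_1 = \{(G,H)\in\calD : \Phi_{\text{feas}}(G,H)=1\}$ and $\calD_0 = \{(G,H)\in\calD : \Phi_{\text{feas}}(G,H)=0\}$, both finite. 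The goal is to construct a single $F^\ast\in\calF_{\text{GNN}}$ with $F^\ast>1/2$ on $\calD_1$ and $F^\ast<1/2$ on $\calD_0$.

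First I would record the separation consequence of Theorem \ref{thm:sameGNN2sameFeas}(i): for every pair $d\in\calD_1$, $d'\in\calD_0$ we have $\Phi_{\text{feas}}(d)\neq\Phi_{\text{feas}}(d')$, so by the contrapositive the two instances cannot agree on all GNNs, i.e.\ there exists $F_{d,d'}\in\calF_{\text{GNN}}$ with $F_{d,d'}(d)\neq F_{d,d'}(d')$. Since $\calD_1\times\calD_0$ is finite, this yields a finite family $\{F_{d,d'}\}$ of distinguishing GNNs.

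The key step is a \textbf{closure property} of $\calF_{\text{GNN}}$: finitely many single-output GNNs can be merged into one. Given $F_1,\dots,F_N\in\calF_{\text{GNN}}$ and a continuous $\rho:\bR^N\to\bR$, I would build a GNN whose layer-$l$ embedding at each vertex is the concatenation of the layer-$l$ embeddings of $F_1,\dots,F_N$; this is achieved by stacking the input maps $f_{\mathrm{in}}^V,f_{\mathrm{in}}^W$ and taking the update maps $f_l^{V},f_l^{W},g_l^{V},g_l^{W}$ to act block-diagonally, so that the message passing of each component runs independently through \eqref{eq:gnn-update-v}--\eqref{eq:gnn-update-w}. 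Because the node-sums in \eqref{eq:gnn-out-single} distribute over concatenation, the summed embedding of the merged network is exactly the tuple of the individual summed embeddings; choosing the final readout $f_{\text{out}}$ to be $\rho$ composed with the individual readouts then shows $\rho(F_1(\cdot),\dots,F_N(\cdot))\in\calF_{\text{GNN}}$.

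Finally I would assemble the classifier. Let $\Psi=\big(F_{d,d'}\big)_{(d,d')\in\calD_1\times\calD_0}$ be the vector collecting the distinguishing GNNs. For any $d\in\calD_1$ and $d'\in\calD_0$ the coordinate indexed by $(d,d')$ gives $F_{d,d'}(d)\neq F_{d,d'}(d')$, hence $\Psi(d)\neq\Psi(d')$, so the finite image sets $\Psi(\calD_1)$ and $\Psi(\calD_0)$ are disjoint. Choosing a continuous $\rho$ equal to $1$ on the finite set $\Psi(\calD_1)$ and $0$ on $\Psi(\calD_0)$ (possible since these are finitely many isolated points, e.g.\ by interpolation), the closure step produces $F^\ast=\rho\circ\Psi\in\calF_{\text{GNN}}$ with $\mathbb{I}_{F^\ast(G,H)>1/2}=\Phi_{\text{feas}}(G,H)$ for all $(G,H)\in\calD$. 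The only genuinely delicate point is verifying the parallelization claim at the level of the update rules \eqref{eq:gnn-update-v}--\eqref{eq:gnn-update-w} and the sum readout \eqref{eq:gnn-out-single}; once that closure is in hand, the remainder reduces to separating finitely many points by a continuous function, which is routine.
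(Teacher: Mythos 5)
Your proposal is correct, but it takes a genuinely different route from the paper. The paper's proof is a one-liner given its machinery: a finite $\calD$ is compact and $\Phi_{\text{feas}}|_{\calD}$ is automatically continuous, so it invokes the Stone--Weierstrass-based universal approximation result (Theorem~\ref{thm:uniapprox_scal}, whose WL-invariance hypothesis is supplied by Lemma~\ref{lem:LP_sameWL2samefeasibility}) with $\epsilon=1/2$ to get $\sup_{\calD}|F-\Phi_{\text{feas}}|<1/2$. You instead work directly from the separation statement of Theorem~\ref{thm:sameGNN2sameFeas}(i), collect a finite family of pairwise-distinguishing GNNs, and merge them by the block-diagonal parallelization of the update maps followed by a continuous interpolating readout. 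Your ``closure property'' is a mild generalization of the paper's Lemma~\ref{lem:subalgebra} (which does exactly this construction with $\rho$ being addition or multiplication), and the only housekeeping you omit is equalizing the layer counts of the $F_{d,d'}$ before stacking, which the paper handles via Lemma~\ref{lem:layer_num}; the degenerate case $\calD_0=\emptyset$ or $\calD_1=\emptyset$ is handled by a constant GNN. What your approach buys is self-containedness and a stronger conclusion: it bypasses the quotient-topology and Stone--Weierstrass argument entirely and produces a GNN taking the exact values $0$ and $1$ on $\calD$ rather than a $1/2$-approximation, at the cost of redoing by hand a finite-point special case of the density argument the paper already has. Both routes ultimately rest on the same separation-power foundation, so I would count your argument as a valid alternative proof rather than a gap.
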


Besides the feasibility, GNN can also approximate $\phi_{\text{obj}}$ and $\phi_{\text{solu}}$. 

\begin{theorem}\label{thm:GNNapproxPhi_obj}
    Given any measurable $X\subset \calG_{m,n}\times\calH^V_m\times \calH^W_n$ with finite measure, for any $\epsilon>0$, there exists $F_1\in\calF_{\text{GNN}}$ such that
        \begin{equation}
        \label{eq:GNNapproxPhi_obj_1}
            \text{Meas}\left(\left\{(G,H)\in X:\mathbb{I}_{F_1(G,H)>1/2}\neq \mathbb{I}_{\Phi_{\text{obj}}(G,H) \in \bR}\right\}\right)<\epsilon,
        \end{equation}
 and for any $\epsilon,\delta>0$, there exists $F_2\in\calF_{\text{GNN}}$ such that
	    \begin{equation}
	    \label{eq:GNNapproxPhi_obj_2}
		    \text{Meas}\left(\left\{(G,H)\in X \cap \Phi_{\text{obj}}^{-1}(\bR):| F_2(G,H) -  \Phi_{\text{obj}}(G,H)|>\delta\right\}\right)<\epsilon.
    	\end{equation}
\end{theorem}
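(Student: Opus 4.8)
The plan is to treat the two assertions \eqref{eq:GNNapproxPhi_obj_1} and \eqref{eq:GNNapproxPhi_obj_2} separately, since the first is a binary classification statement of exactly the type already settled in Theorem~\ref{thm:GNNapproxPhi_feas}, whereas the second requires approximating a genuinely real-valued target. The common backbone is the GNN-induced equivalence relation: declare $(G,H)\sim(\hat G,\hat H)$ when $F(G,H)=F(\hat G,\hat H)$ for every $F\in\calF_{\text{GNN}}$. By Theorem~\ref{thm:sameGNN2sameFeas}(ii), equivalent instances have the same optimal value, so $\Phi_{\text{obj}}$ is constant on each $\sim$-class; consequently the indicator $\mathbb{I}_{\Phi_{\text{obj}}\in\bR}$ (feasible-and-bounded versus infeasible-or-unbounded) is also constant on each class. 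This is precisely the separation hypothesis that lets one transfer the Lusin-plus-Stone--Weierstrass machinery that underlies Theorem~\ref{thm:GNNapproxPhi_feas}.

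For \eqref{eq:GNNapproxPhi_obj_1}, I would run the same argument used for Theorem~\ref{thm:GNNapproxPhi_feas}, now applied to the class-constant $\{0,1\}$-valued map $\Psi:=\mathbb{I}_{\Phi_{\text{obj}}\in\bR}$ in place of $\Phi_{\text{feas}}$. After checking that $\Psi$ is measurable, apply Lusin's theorem on the finite-measure set $X$ to extract a compact $X_1\subset X$ with $\text{Meas}(X\setminus X_1)<\epsilon$ on which $\Psi$ is continuous. The two closed pieces $\{\Psi=0\}\cap X_1$ and $\{\Psi=1\}\cap X_1$ are disjoint compacts lying in distinct unions of $\sim$-classes, so the separation power of $\calF_{\text{GNN}}$ yields an $F_1$ with $\sup_{X_1}|F_1-\Psi|<1/4$, whence $\mathbb{I}_{F_1>1/2}=\Psi$ on $X_1$ and the discrepancy set is contained in $X\setminus X_1$, of measure $<\epsilon$.

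For \eqref{eq:GNNapproxPhi_obj_2}, the target $\Phi_{\text{obj}}:\Phi_{\text{obj}}^{-1}(\bR)\to\bR$ is real-valued and, by Theorem~\ref{thm:sameGNN2sameFeas}(ii), constant on $\sim$-classes. I would first argue measurability of $\Phi_{\text{obj}}$ on $\Phi_{\text{obj}}^{-1}(\bR)$: for each fixed choice of $\circ$ and of which bounds are finite, the optimal-value function is a continuous, piecewise-affine, semialgebraically-structured function of $(A,b,c,l,u)$ wherever finite, hence Borel. Then apply Lusin's theorem to $\Phi_{\text{obj}}$ on the finite-measure set $X\cap\Phi_{\text{obj}}^{-1}(\bR)$, obtaining a compact $K$ with complement of measure $<\epsilon$ on which $\Phi_{\text{obj}}|_K$ is continuous. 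Since $\Phi_{\text{obj}}|_K$ is continuous and constant on the restriction of the $\sim$-classes to $K$, the GNN universal-approximation statement over compacts furnishes an $F_2\in\calF_{\text{GNN}}$ with $\sup_K|F_2-\Phi_{\text{obj}}|\le\delta$; the set where the $\delta$-bound fails is then contained in $(X\cap\Phi_{\text{obj}}^{-1}(\bR))\setminus K$, of measure $<\epsilon$.

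I expect the main obstacle to be the bridge from \emph{separation} to \emph{approximation}: the claim that any continuous function on a compact set that is constant on $\sim$-classes can be uniformly approximated by elements of $\calF_{\text{GNN}}$. This is a Stone--Weierstrass-type statement whose verification relies on $\calF_{\text{GNN}}$ being stable under parallel composition and arbitrary continuous post-processing (so that it forms a structure that separates $\sim$-distinct points and is closed under the operations Stone--Weierstrass requires), together with Theorem~\ref{thm:sameGNN2sameFeas} guaranteeing that the points one wishes to separate do lie in distinct $\sim$-classes. A secondary technical point is confirming Borel measurability of $\Phi_{\text{obj}}$ and of $\Phi_{\text{obj}}^{-1}(\bR)$ so that Lusin's theorem applies, which must account for the discrete $\{\le,=,\ge\}$ and $\pm\infty$ factors of the feature space.
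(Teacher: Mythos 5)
Your proposal is correct and follows essentially the same route as the paper: establish measurability of $\Phi_{\text{obj}}$, apply Lusin's theorem to pass to a compact subset on which the target is continuous, and invoke the Stone--Weierstrass-based universal approximation theorem for $\calF_{\text{GNN}}$ applied to functions constant on indistinguishability classes (the paper's Theorem~\ref{thm:uniapprox_scal}, combined with Lemma~\ref{lem:LP_sameWL2sameobj}). One caveat: your parenthetical claim that the LP optimal-value function is continuous wherever finite is false in general, but it is not load-bearing, since Borel measurability --- which the paper proves via an explicit violation-function construction in Lemma~\ref{lem:obj_meas}, and which also follows from the semialgebraic structure you mention --- is all that Lusin's theorem requires.
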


Recall the definition of $\Phi_{\text{obj}}$ in \eqref{obj_map} that it can take $\{\pm \infty\}$ as its value. Thus, $\Phi_{\text{obj}}(G,H) \in \bR$ means the LP corresponding to $(G,H)$ is feasible and bounded with a finite optimal objective value, and inequality \eqref{eq:GNNapproxPhi_obj_1} illustrates that GNN can identify those feasible and bounded LPs among the whole set $X$, up to a given precision $\epsilon$. Inequality \eqref{eq:GNNapproxPhi_obj_2} shows that GNN can also approximate the optimal value. The measure of the set of LP instances of which the optimal value cannot be approximated with $\delta$-precision is controlled by $\epsilon$.
The following corollary gives the results on dataset with finite instances.

\begin{corollary}\label{cor:GNNapproxPhi_obj}
    For any $\calD\subset \calG_{m,n}\times\calH^V_m\times \calH^W_n$ with finite instances, there exists $F_1\in\calF_{\text{GNN}}$ such that
        \begin{equation*}
            \mathbb{I}_{F_1(G,H)>1/2} = \mathbb{I}_{\Phi_{\text{obj}}(G,H) \in \bR},\quad\forall~(G,H)\in \calD,
        \end{equation*}
and for any $\delta>0$, there exists $F_2\in\calF_{\text{GNN}}$, such that
	    \begin{equation*}
		    | F_2(G,H) -  \Phi_{\text{obj}}(G,H)| < \delta,\quad\forall~(G,H)\in \calD\cap \Phi_{\text{obj}}^{-1}(\bR).
    	\end{equation*}
\end{corollary}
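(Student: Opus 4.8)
The plan is to deduce the corollary from the separation power of Theorem \ref{thm:sameGNN2sameFeas} rather than from Theorem \ref{thm:GNNapproxPhi_obj} as a black box. The reason is that a finite dataset has measure zero: each continuous coordinate of $(A,b,c,l,u)$ contributes Lebesgue measure zero at a single point, so $\text{Meas}(\calD)=0$ and the ``up to measure $\epsilon$'' conclusions of Theorem \ref{thm:GNNapproxPhi_obj} are vacuous on $\calD$. Instead I would exploit the two features special to the finite-sample setting: we may interpolate \emph{exactly} on finitely many points, and GNN-indistinguishable instances are \emph{forced} to share $\Phi_{\text{obj}}$.

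First I would partition $\calD = \calD_1\cup\cdots\cup\calD_K$ into equivalence classes, where two instances lie in the same class iff they are indistinguishable by every $F\in\calF_{\text{GNN}}$. By Theorem \ref{thm:sameGNN2sameFeas}(ii), $\Phi_{\text{obj}}$ is constant on each $\calD_k$; hence both target quantities in the corollary, namely the feasibility--boundedness label $\mathbb{I}_{\Phi_{\text{obj}}\in\bR}$ and (on the bounded classes) the finite value $\Phi_{\text{obj}}$, are well-defined functions of the class index $k$.

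Next I would build a single GNN whose graph-level readout $r(G,H):=\left(\sum_{i}h_i^{L,V},\sum_{j}h_j^{L,W}\right)$ attains $K$ pairwise-distinct values $r_1,\dots,r_K$ on the $K$ classes. For any two distinct classes there is, by the definition of the partition, some $F\in\calF_{\text{GNN}}$ separating them; since such $F$ is $f_{\text{out}}$ applied to its own readout, the equality $f_{\text{out}}(r)=f_{\text{out}}(\hat r)$ would force $F$ to agree, so already the readout $r$ differs on the two classes. Running finitely many such GNNs in parallel (after padding them to a common depth $L$ by identity updates) and concatenating their hidden states layerwise preserves the message-passing form \eqref{eq:gnn-in}--\eqref{eq:gnn-update-w}, because each update acts blockwise on the concatenated coordinates; the resulting concatenated readout separates all $\binom{K}{2}$ pairs at once, so $r_1,\dots,r_K$ are distinct points of a Euclidean space.

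Finally, since the $r_k$ are distinct, I would fix the architecture above and choose the output maps by interpolation. For $F_1$, take a continuous $f_{\text{out}}$ with $f_{\text{out}}(r_k)>1/2$ exactly when $\calD_k\subset\Phi_{\text{obj}}^{-1}(\bR)$ and $f_{\text{out}}(r_k)<1/2$ otherwise, which yields $\mathbb{I}_{F_1>1/2}=\mathbb{I}_{\Phi_{\text{obj}}\in\bR}$ on all of $\calD$. For $F_2$, take a continuous $f_{\text{out}}$ with $f_{\text{out}}(r_k)$ equal to the common finite value of $\Phi_{\text{obj}}$ on the bounded class $\calD_k$, so that $F_2=\Phi_{\text{obj}}$ exactly, hence $|F_2-\Phi_{\text{obj}}|=0<\delta$, on $\calD\cap\Phi_{\text{obj}}^{-1}(\bR)$; both maps exist because any assignment of real values to finitely many distinct points extends to a continuous function, and continuous output layers are admissible in our setting. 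The main obstacle is the parallel-composition step: one must verify that concatenating hidden states across several GNNs remains within the prescribed architecture and that the combined readout genuinely separates every pair of classes simultaneously. The reduction to classes and the final interpolation are then routine.
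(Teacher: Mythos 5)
Your proof is correct, but it takes a genuinely different route from the paper's. The paper disposes of this corollary in one line: a finite set $\calD$ is compact and every real-valued function on it is continuous, so Theorem~\ref{thm:uniapprox_scal} (the Stone--Weierstrass-based universal approximation result) applies directly to $\mathbb{I}_{\Phi_{\text{obj}}\in\bR}$ and to $\Phi_{\text{obj}}|_{\calD\cap\Phi_{\text{obj}}^{-1}(\bR)}$, whose WL-invariance hypothesis is supplied by Lemma~\ref{lem:LP_sameWL2sameobj}; taking $\epsilon=1/2$ and $\epsilon=\delta$ gives the two claims. You instead bypass Theorem~\ref{thm:uniapprox_scal} entirely: you partition $\calD$ into GNN-indistinguishability classes, use Theorem~\ref{thm:sameGNN2sameFeas}(ii) to see that the targets are class functions, build one backbone whose readout separates the classes by running finitely many separating GNNs in parallel, and then interpolate exactly with the output layer. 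Your opening observation is also correct and worth making: $\text{Meas}(\calD)=0$, so the corollary genuinely cannot be read off from Theorem~\ref{thm:GNNapproxPhi_obj} as a black box (and indeed the paper does not do so). What your route buys is a more elementary, constructive argument that avoids invoking Stone--Weierstrass for a finite set and yields \emph{exact} interpolation ($F_2=\Phi_{\text{obj}}$ on $\calD\cap\Phi_{\text{obj}}^{-1}(\bR)$, not merely $\delta$-closeness); what the paper's route buys is brevity, since the heavy lifting was already done once for all compact domains. The step you flag as the main obstacle --- that layerwise concatenation of finitely many GNNs (after padding to a common depth) stays within the architecture \eqref{eq:gnn-in}--\eqref{eq:gnn-out-single} --- is exactly the construction the paper carries out in Lemmas~\ref{lem:layer_num} and~\ref{lem:subalgebra}, so it is not a gap; the only small point you should make explicit is that the concatenated readout is constant on each class (take coordinate projections as admissible output maps), or, more weakly, that instances sharing a readout necessarily share their target value, so the finite interpolation is well defined.
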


Finally, we show that GNN is able to represent the optimal solution mapping $\Phi_{\text{solu}}$.
\begin{theorem}\label{thm:GNNapproxPhi_solu}
 Given any measurable $X\subset \Phi_{\text{obj}}^{-1}(\bR)\subset \calG_{m,n}\times\calH^V_m\times \calH^W_n$ with finite measure, for any $\epsilon,\delta>0$, there exists some $F_W\in\calF_{\text{GNN}}^W$, such that
	\begin{equation*}
		\text{Meas}\left(\left\{(G,H)\in X:\| F(G,H) -  \Phi_{\text{solu}}(G,H)\|>\delta\right\}\right)<\epsilon.
	\end{equation*}
\end{theorem}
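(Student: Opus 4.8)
The plan is to mirror the proof of the invariant approximation results (in particular the second half of Theorem~\ref{thm:GNNapproxPhi_obj}) and upgrade it to the equivariant, vector-valued setting. Three ingredients drive the argument: the separation power of $\calF_{\text{GNN}}^W$ furnished by the ``furthermore'' part of Theorem~\ref{thm:sameGNN2sameFeas}, Lusin's theorem to tame the discontinuity of $\Phi_{\text{solu}}$, and a Stone--Weierstrass density argument adapted to equivariant maps. The key reformulation I would use is to recast an equivariant map $\Psi:\calG_{m,n}\times\calH^V_m\times\calH^W_n\to\bR^n$ as a single scalar function $\tilde{\Psi}$ on the \emph{augmented space} $\Omega\times\{1,\dots,n\}$, where $\Omega:=\calG_{m,n}\times\calH^V_m\times\calH^W_n$ and $\tilde{\Psi}((G,H),j):=\Psi(G,H)_j$; equivariance of $\Psi$ becomes invariance of $\tilde{\Psi}$ under the diagonal $S_m\times S_n$ action that simultaneously permutes $(G,H)$ and the marked index $j$. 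Reading the vertex readout~\eqref{eq:gnn-out-vertex}, each vertex-output GNN contributes exactly such an augmented scalar function $((G,H),j)\mapsto F_W(G,H)_j$, so approximating $\Phi_{\text{solu}}$ inside $\calF_{\text{GNN}}^W$ is equivalent to approximating $\tilde{\Phi}_{\text{solu}}$ by the algebra of these augmented functions.

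First I would record that $\Phi_{\text{solu}}$ is Borel measurable on $\Phi_{\text{obj}}^{-1}(\bR)$: it is the Euclidean projection of the origin onto the (parametrically varying) optimal face of the LP, a minimal-norm selection that is piecewise continuous in the data and hence measurable. Because $\Omega$ is a finite disjoint union of finite-dimensional Euclidean pieces carrying a Radon measure, Lusin's theorem applies: given $\epsilon>0$ there is a compact $K\subset X$ with $\text{Meas}(X\setminus K)<\epsilon$ on which $\Phi_{\text{solu}}|_K$ is continuous. Since $\{1,\dots,n\}$ is discrete and finite, the augmented domain $K\times\{1,\dots,n\}$ is then compact Hausdorff and $\tilde{\Phi}_{\text{solu}}$ is continuous on it (continuity in the marked index is automatic, and in $(G,H)$ it is the continuity of each coordinate $\Phi_{\text{solu}}(\cdot)_j$ on $K$). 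Note that no permutation-symmetrization of $K$ is needed, because the density statement below is applied directly on $K\times\{1,\dots,n\}$.

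On this compact augmented domain I would run a Stone--Weierstrass argument in its refined ``separating relation'' form. The family $\calA:=\{((G,H),j)\mapsto F_W(G,H)_j:F_W\in\calF_{\text{GNN}}^W\}$ is a subalgebra of $C(K\times\{1,\dots,n\})$: it contains the constants, and running two GNNs in parallel (concatenating their hidden states) while choosing the final readout $f^W_{\text{out}}$ to add or to multiply the two coordinates shows $\calA$ is closed under sums and products. Let $\sim$ denote the relation on augmented points that are separated by no element of $\calA$, i.e.\ $((G,H),j)\sim((\hat{G},\hat{H}),k)$ iff $F_W(G,H)_j=F_W(\hat{G},\hat{H})_k$ for all $F_W$. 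By the coordinatewise form of the separation behind Theorem~\ref{thm:sameGNN2sameFeas}, any two $\sim$-related augmented points carry the same minimal-norm solution entry, so $\tilde{\Phi}_{\text{solu}}$ is continuous and constant on the $\sim$-classes; the uniform closure of $\calA$ is precisely the set of such functions, hence contains $\tilde{\Phi}_{\text{solu}}$. This yields $F_W\in\calF_{\text{GNN}}^W$ with $\sup_{(G,H)\in K}\|F_W(G,H)-\Phi_{\text{solu}}(G,H)\|<\delta$, whence $\{(G,H)\in X:\|F_W(G,H)-\Phi_{\text{solu}}(G,H)\|>\delta\}\subset X\setminus K$ has measure below $\epsilon$, as required.

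The main obstacle I anticipate is not the measure-theoretic reduction but the equivariant separation step: I must know that GNN-indistinguishability holds \emph{coordinatewise}, i.e.\ that $F_W(G,H)_j=F_W(\hat{G},\hat{H})_k$ for all $F_W$ forces $\Phi_{\text{solu}}(G,H)_j=\Phi_{\text{solu}}(\hat{G},\hat{H})_k$. This is a refinement of the vector-level ``furthermore'' claim in Theorem~\ref{thm:sameGNN2sameFeas}, and I expect it to emerge from that theorem's proof, which should operate at the level of stabilized WL colors of individual $W$-vertices rather than of the whole output vector. Making this vertex-level invariance precise, and checking that the minimal-$\ell_2$-norm optimal solution respects it entrywise, is where the genuine care is needed.
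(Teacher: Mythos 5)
Your proposal reaches the right conclusion but by a genuinely different route. The paper first enlarges $X$ to be closed under the $S_m\times S_n$ action, applies Lusin's theorem, and then \emph{symmetrizes} the resulting compact set (intersecting all of its group translates, paying a factor $|S_m\times S_n|$ in $\epsilon$) so that the generalized \emph{equivariant} Stone--Weierstrass theorem (Theorem~\ref{thm:equivariant_stone_weierstrass}, packaged as Theorem~\ref{thm:uniapprox_vec}) can be invoked; its four hypotheses are checked against Corollaries~\ref{cor:separate_GNN}, \ref{cor:separate_index}, \ref{cor:LP_sameWL2samesolu} and the unnumbered corollary on solution components following Lemma~\ref{lem:LP_sameWL2samesolu}. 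You instead flatten the equivariant vector-valued problem into a scalar one on the augmented compact space $K\times\{1,\dots,n\}$ and apply the ordinary non-separating Stone--Weierstrass theorem there (the same quotient argument the paper uses for Theorem~\ref{thm:uniapprox_scal}). This buys a real simplification: the group-closedness hypothesis disappears, so no symmetrization of the Lusin set is needed, and equivariance of the approximants is automatic because they are coordinates of genuine elements of $\calF_{\text{GNN}}^W$. The algebra structure of your augmented family $\calA$ does follow from the parallel-GNN construction (Lemma~\ref{lem:subalgebra_vec}), so the reduction is sound.

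Two obligations remain open in your write-up. The substantive one is the cross-graph, coordinatewise separation statement you flag at the end: that $F_W(G,H)_j=F_W(\hat{G},\hat{H})_k$ for all $F_W$ forces $\Phi_{\text{solu}}(G,H)_j=\Phi_{\text{solu}}(\hat{G},\hat{H})_k$. This is true, but it is not literally any of the paper's stated corollaries, which are formulated either for whole output vectors across two graphs or for two coordinates within one graph; you must assemble it from (a) broadcasting, which gives $(G,H)\sim(\hat{G},\hat{H})$, (b) the construction of Lemma~\ref{lem:sameGNN2sameWL} run jointly on both graphs with a readout depending only on its third argument, which shows $w_j$ and $\hat{w}_k$ receive the same stable WL color, and (c) the alignment permutation from Lemma~\ref{lem:LP_sameWL2samesolu} combined with the fact that solution entries depend only on the stable color. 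Routine, but it must be written out. The smaller deferral is measurability of $\Phi_{\text{solu}}$: calling it a ``piecewise continuous minimal-norm selection'' is an assertion, not a proof, and the paper's Lemma~\ref{lem:solu_meas} needs a genuine construction (violation functions and countable unions and intersections over rational balls) to establish it before Lusin's theorem can be applied.
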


\begin{corollary}\label{cor:GNNapproxPhi_solu}
    Given any $\calD\subset \Phi_{\text{obj}}^{-1}(\bR)\subset \calG_{m,n}\times\calH^V_m\times \calH^W_n$ with finite instances, for any $\delta>0$, there exists $F_W\in\calF_{\text{GNN}}^W$, such that
	\begin{equation*}
		\| F(G,H) -  \Phi_{\text{solu}}(G,H)\|<\delta,\quad\forall~(G,H)\in\calD.
	\end{equation*}
\end{corollary}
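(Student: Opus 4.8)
The plan is to prove the corollary directly from the separation power result (Theorem \ref{thm:sameGNN2sameFeas}) rather than by specializing the measure-theoretic Theorem \ref{thm:GNNapproxPhi_solu}. The reason is that a finite set $\calD$ typically has measure zero in $\calG_{m,n}\times\calH^V_m\times\calH^W_n$ (the graph part carries Lebesgue measure, under which finitely many matrices form a null set), so the estimate in Theorem \ref{thm:GNNapproxPhi_solu} gives no control over such a set. On the other hand, since only finitely many instances must be matched, I expect to achieve \emph{exact} interpolation, so that any $\delta>0$ is attained trivially.

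First I would define an equivalence relation on $\calD$ by declaring $(G,H)\sim(\hat{G},\hat{H})$ whenever $F_W(G,H)=F_W(\hat{G},\hat{H})$ for every $F_W\in\calF_{\text{GNN}}^W$. By the final assertion of Theorem \ref{thm:sameGNN2sameFeas}, equivalent instances satisfy $\Phi_{\text{solu}}(G,H)=\Phi_{\text{solu}}(\hat{G},\hat{H})$, so $\Phi_{\text{solu}}$ is constant on each class. A parallel consistency holds at the vertex level: applying Theorem \ref{thm:sameGNN2sameFeas} to a single instance $(G,H)$ and a permuted copy $(\sigma_V,\sigma_W)\ast(G,H)$, together with the equivariance of $\Phi_{\text{solu}}$ and the least-$\ell_2$-norm uniqueness of Remark \ref{rmk:soluLeast2norm}, forces the $j$-th and $j'$-th coordinates of $\Phi_{\text{solu}}(G,H)$ to agree whenever the vertices $w_j$ and $w_{j'}$ are indistinguishable by every $F_W$. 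This is exactly the single-valuedness needed for the readout constructed below.

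Next, since $\calD$ is finite, for each pair of distinct equivalence classes I pick one separating $F_W\in\calF_{\text{GNN}}^W$, and likewise separate each pair of non-indistinguishable vertices. I then combine these finitely many GNNs into one, exploiting that $\calF_{\text{GNN}}^W$ is closed under \emph{stacking}: running several GNNs in parallel and concatenating their layerwise embeddings is again admissible, because the update maps $f_l^{V},f_l^{W},g_l^{V},g_l^{W}$ may be arbitrary continuous functions and hence can act blockwise on a direct-sum embedding. After this step the triple of readout inputs $\bigl(\sum_i h_i^{L,V},\ \sum_j h_j^{L,W},\ h_j^{L,W}\bigr)$ attains only finitely many values over $\calD$, and two (instance, vertex) pairs yield the same triple only when the previous paragraph guarantees an identical target coordinate of $\Phi_{\text{solu}}$. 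Finally I define $f^W_{\text{out}}$ to be any continuous function sending each distinct readout triple to its associated well-defined target value $\Phi_{\text{solu}}(G,H)_j$; such an interpolant exists because the assignment is single-valued on finitely many distinct points (one may use a bump-function partition of unity or Tietze extension on $\bR^{3d_L}$). The resulting $F_W$ reproduces $\Phi_{\text{solu}}$ exactly on $\calD$, giving error zero, in particular less than $\delta$.

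The main obstacle is the well-definedness guaranteed in the second paragraph, namely verifying that whenever two (instance, vertex) pairs collapse to the same GNN readout they must carry identical target values. Instance-level consistency is immediate from Theorem \ref{thm:sameGNN2sameFeas}, but the vertex-level version genuinely requires combining the theorem with the equivariance of $\Phi_{\text{solu}}$ and Remark \ref{rmk:soluLeast2norm}; getting this matching exactly right, and simultaneously confirming that the stacked GNN separates precisely the pairs that \emph{should} be separated (no more, no fewer), is where the argument must be most careful.
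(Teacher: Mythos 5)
Your route is genuinely different from the paper's. The paper symmetrizes $\calD$ under $S_m\times S_n$, observes that a finite set is compact and that $\Phi_{\text{solu}}|_{\calD}$ is automatically continuous, and then invokes the universal approximation theorem for $\calF_{\text{GNN}}^W$ (Theorem~\ref{thm:uniapprox_vec}), whose hypotheses are exactly the instance-level and within-instance vertex-level consistency conditions. Several of your ingredients are correct and even elegant: the observation that Theorem~\ref{thm:GNNapproxPhi_solu} gives no information on a measure-zero set is right (the paper likewise does not deduce the corollary from that theorem's statement), the instance-level well-definedness via the ``furthermore'' part of Theorem~\ref{thm:sameGNN2sameFeas} is correct, and your within-instance vertex argument --- applying that same statement to $(G,H)$ and its copy permuted by the transposition of $w_j$ and $w_{j'}$, then using equivariance and Remark~\ref{rmk:soluLeast2norm} --- is a valid alternative proof of the paper's corresponding corollary on equal solution components.

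However, there is a genuine gap at the readout step. The function $f^W_{\text{out}}$ sees only the triple $\bigl(\sum_i h_i^{L,V}, \sum_j h_j^{L,W}, h_j^{L,W}\bigr)$, and two pairs $((G,H),j)$ and $((\hat{G},\hat{H}),j')$ coming from \emph{different} instances of $\calD$ can produce identical triples for your stacked network: separating the two instances as whole output vectors ($F_W(G,H)\neq F_W(\hat{G},\hat{H})$ for some component) does not force the pooled sums together with the individual features at $j$ and $j'$ to differ. For your interpolant to be well defined you therefore need the cross-instance, vertex-level statement: if $F_W(G,H)_j = F_W(\hat{G},\hat{H})_{j'}$ for every $F_W\in\calF_{\text{GNN}}^W$, then $\Phi_{\text{solu}}(G,H)_j = \Phi_{\text{solu}}(\hat{G},\hat{H})_{j'}$. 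This is covered neither by Theorem~\ref{thm:sameGNN2sameFeas} (which matches whole solution vectors only up to an uncontrolled permutation, or exactly but only under the stronger hypothesis that \emph{all} output coordinates agree) nor by your within-instance permutation trick, which never compares a vertex of one instance with a vertex of another. The statement is true, but proving it requires a two-graph version of Corollary~\ref{cor:separate_index} combined with the color-class averaging argument of Lemma~\ref{lem:LP_sameWL2samesolu}; the paper sidesteps it entirely because the generalized Stone--Weierstrass theorem (Theorem~\ref{thm:equivariant_stone_weierstrass}) demands only the instance-level and single-instance vertex-level conditions, its algebra structure (sums and products of GNNs rather than a single interpolating readout) absorbing the cross-instance collisions. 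Either supply that missing lemma or fall back on Theorem~\ref{thm:uniapprox_vec} as the paper does.
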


\section{Sketch of Proof}
\label{sec:sketch}

In this section, we will present a sketch of our proof lines and provide examples to show the intuitions. The full proof lines are presented in the appendix.

\paragraph{\textbf{Separation power}} 
The separation power measures a neural network with whether it generates different outcomes given different inputs, which serves as a foundation of the representation power. The separation power of GNNs is closely related to the Weisfeiler-Lehman (WL) test \cite{weisfeiler1968reduction}, a classical algorithm to identify whether two given graphs are isomorphic. To apply the WL test on LP-graphs, we describe a modified WL test in Algorithm~\ref{alg:WL}, which is slightly different from the standard WL test.

\begin{algorithm}[htb!]
	\caption[The WL test for LP-Graphs]{The WL test for LP-Graphs\footnotemark (denoted by $\mathrm{WL}_{\mathrm{LP}}$)}\label{alg:WL}
	\begin{algorithmic}[1]
		\Require A graph instance $(G,H) \in \calG_{m,n}\times \calH_{m}^V\times\calH_{n}^W$ and iteration limit $L>0$.
		\State Initialize with $C_i^{0,V} = \text{HASH}_{0,V}(h_i^V)$,  $C_j^{0,W} = \text{HASH}_{0,W}(h_j^W)$.
		\For{$l=1,2,\cdots,L$}
		\State $C_i^{l,V} = \text{HASH}_{l, V} \left(C_i^{l-1,V}, \sum_{j=1}^n E_{i,j}\text{HASH}_{l,W}'\left(C_j^{l-1, W}\right)\right)$.
		\State $C_j^{l,W} = \text{HASH}_{l, W} \left(C_j^{l-1,W}, \sum_{i=1}^m E_{i,j} \text{HASH}_{l,V}'\left(C_i^{l-1, V} \right)\right)$.
		\EndFor
		\State \textbf{return} The multisets 
		containing all colors $\{\{ C_i^{L,V} \}\}_{i=0}^m, \{\{ C_j^{L,W} \}\}_{j=0}^n$.
	\end{algorithmic}
\end{algorithm}
\footnotetext{In Algorithm~\ref{alg:WL}, multisets, denoted by $\{\{\}\}$, are collections of elements that allow multiple appearance of the same element. Hash functions $\{\mathrm{HASH}_{l,V},\mathrm{HASH}_{l,W}\}_{l=0}^L$ injectively map vertex information to vertex colors, while the others $\{\mathrm{HASH}_{l,V}',\mathrm{HASH}_{l,W}'\}_{l=1}^L$ injectively map vertex colors to a linear space so that one can define sum and scalar multiplications on their outputs. In addition, such an algorithm is usually named as the 1-WL test in the literature since it only considers the neighborhood with distance $1$ for each vertex. In this paper, we abbreviate Algorithm~\ref{alg:WL} or the 1-WL test to the WL test for simplicity.}

We denote Algorithm~\ref{alg:WL} by $\mathrm{WL}_{\mathrm{LP}}(\cdot)$, and we say that two LP-graphs $(G,H),(\hat{G},\hat{H})$ \emph{can be distinguished by Algorithm~\ref{alg:WL}} if and only if there exist a positive integer $L$ and injective hash functions 
$\{\mathrm{HASH}_{l,V},\mathrm{HASH}_{l,W}\}_{l=0}^L\cup\{\mathrm{HASH}_{l,V}',\mathrm{HASH}_{l,W}'\}_{l=1}^L$ 
such that $\mathrm{WL}_{\mathrm{LP}}\big((G,H),L\big) \neq \mathrm{WL}_{\mathrm{LP}}\big((\hat{G},\hat{H}),L\big)$. Unfortunately, there exist infinitely many pairs of non-isomorphic LP-graphs that cannot be distinguished by Algorithm~\ref{alg:WL}. Figure~\ref{fig:not_iso_WL} provide such an example.

\begin{figure}[h]
  
 \begin{minipage}{0.16\textwidth}
		\begin{tikzpicture}[
			constraintb/.style={circle, draw = blue, scale = 0.8},
			variablex/.style={circle, draw = red, scale = 0.8},
			scale = 0.7,
			]
			\draw (-2,1.5) node[constraintb] (x1) {$v_1$};
			\draw (-2,0.5) node[constraintb] (x2) {$v_2$};
			\draw (-2,-0.5) node[constraintb] (x3) {$v_3$};
			\draw (-2,-1.5) node[constraintb] (x4) {$v_4$};
			\draw (0,1.5) node[variablex] (b1) {$w_1$};
			\draw (0,0.5) node[variablex] (b2) {$w_2$};
			\draw (0,-0.5) node[variablex] (b3) {$w_3$};
			\draw (0,-1.5) node[variablex] (b4) {$w_4$};
			\draw[-] (x1.east) -- (b1.west);
			\draw[-] (x2.east) -- (b2.west);
			\draw[-] (x3.east) -- (b3.west);
			\draw[-] (x4.east) -- (b4.west);
			\draw[-] (x1.east) -- (b2.west);
			\draw[-] (x2.east) -- (b3.west);
			\draw[-] (x3.east) -- (b4.west);
			\draw[-] (x4.east) -- (b1.west);
		\end{tikzpicture}
    \end{minipage}
\begin{minipage}{0.26\textwidth}
\begin{small}
\begin{equation*}
\begin{split}
        \min ~& x_1 + x_2 + x_3 + x_4, \\
        \text{s.t.} ~ & x_1 + x_2 = 1, \\
        & x_2 + x_3 = 1, \\
        & x_3 + x_4 = 1, \\
        & x_4 + x_1 = 1,\\
        & x_j\geq 1,\ 1\leq j\leq 4.
    \end{split}
\end{equation*}
\end{small}
\end{minipage}
\begin{minipage}{0.26\textwidth}
\begin{small}
\begin{equation*}
    \begin{split}
        \min ~ & x_1 + x_2 + x_3 + x_4, \\
        \text{s.t.} ~ & x_1 + x_2 \leq 1, \\
        & x_2 + x_3 \leq 1, \\
        & x_3 + x_4 \leq 1, \\
        & x_4 + x_1 \leq 1,\\
        & x_j\leq 1,\ 1\leq j\leq 4.
    \end{split}
\end{equation*}
\end{small}
\end{minipage}
\begin{minipage}{0.26\textwidth}
\begin{small}
\begin{equation*}
    \begin{split}
        \min ~& x_1 + x_2 + x_3 + x_4, \\
        \text{s.t.} ~ & x_1 + x_2 = 1, \\
        & x_2 + x_3 = 1, \\
        & x_3 + x_4 = 1, \\
        & x_4 + x_1 = 1,\\
        & x_j\leq 1,\ 1\leq j\leq 4.
    \end{split}
\end{equation*}
\end{small}
\end{minipage}

\begin{minipage}{0.16\textwidth}
		\begin{tikzpicture}[
			constraintb/.style={circle, draw = blue, scale = 0.8},
			variablex/.style={circle, draw = red, scale = 0.8},
			scale = 0.7,
			]
			\draw (-2,1.5) node[constraintb] (x1) {$v_1$};
			\draw (-2,0.5) node[constraintb] (x2) {$v_2$};
			\draw (-2,-0.5) node[constraintb] (x3) {$v_3$};
			\draw (-2,-1.5) node[constraintb] (x4) {$v_4$};
			\draw (0,1.5) node[variablex] (b1) {$w_1$};
			\draw (0,0.5) node[variablex] (b2) {$w_2$};
			\draw (0,-0.5) node[variablex] (b3) {$w_3$};
			\draw (0,-1.5) node[variablex] (b4) {$w_4$};
			\draw[-] (x1.east) -- (b1.west);
			\draw[-] (x2.east) -- (b2.west);
			\draw[-] (x3.east) -- (b3.west);
			\draw[-] (x4.east) -- (b4.west);
			\draw[-] (x1.east) -- (b2.west);
			\draw[-] (x2.east) -- (b1.west);
			\draw[-] (x3.east) -- (b4.west);
			\draw[-] (x4.east) -- (b3.west);
		\end{tikzpicture}
		\label{fig:not_iso_WLb}
    \end{minipage}
\begin{minipage}{0.26\textwidth}
\begin{small}
\begin{equation*}
    \begin{split}
        \min ~& x_1 + x_2 + x_3 + x_4, \\
        \text{s.t.} ~ & x_1 + x_2 = 1, \\
        & x_2 + x_1 = 1, \\
        & x_3 + x_4 = 1, \\
        & x_4 + x_3 = 1,\\
        & x_j\geq 1,\ 1\leq j\leq 4.
    \end{split}
\end{equation*}
\end{small}

\end{minipage}
\begin{minipage}{0.26\textwidth}
\begin{small}
\begin{equation*}
    \begin{split}
        \min ~ & x_1 + x_2 + x_3 + x_4, \\
        \text{s.t.} ~ & x_1 + x_2 \leq 1, \\
        & x_2 + x_1 \leq 1, \\
        & x_3 + x_4 \leq 1, \\
        & x_4 + x_3 \leq 1,\\
        & x_j\leq 1,\ 1\leq j\leq 4.
    \end{split}
\end{equation*}
\end{small}
\end{minipage}
\begin{minipage}{0.26\textwidth}
\begin{small}
\begin{equation*}
    \begin{split}
        \min ~& x_1 + x_2 + x_3 + x_4, \\
        \text{s.t.} ~ & x_1 + x_2 = 1, \\
        & x_2 + x_1 = 1, \\
        & x_3 + x_4 = 1, \\
        & x_4 + x_3 = 1,\\
        & x_j\leq 1,\ 1\leq j\leq 4.
    \end{split}
\end{equation*}
\end{small}

\end{minipage}
\caption{LP-graphs that cannot be distinguished by the WL test. Since the features and neighbor information of $\{v_{i}\}$ and $\{w_j\}$ in the two graphs are equal, it holds for both graphs that $C^{l,V}_1=\cdots C^{l,V}_4$ and $C^{l,W}_1=\cdots C^{l,W}_4$ for all $l \geq 0$, whatever the hash functions are chosen. Based on this graph pair, we construct three pairs of LPs that are both infeasible, both unbounded, both feasible bounded with the same optimal solution, respectively.}
\label{fig:not_iso_WL}
\end{figure}
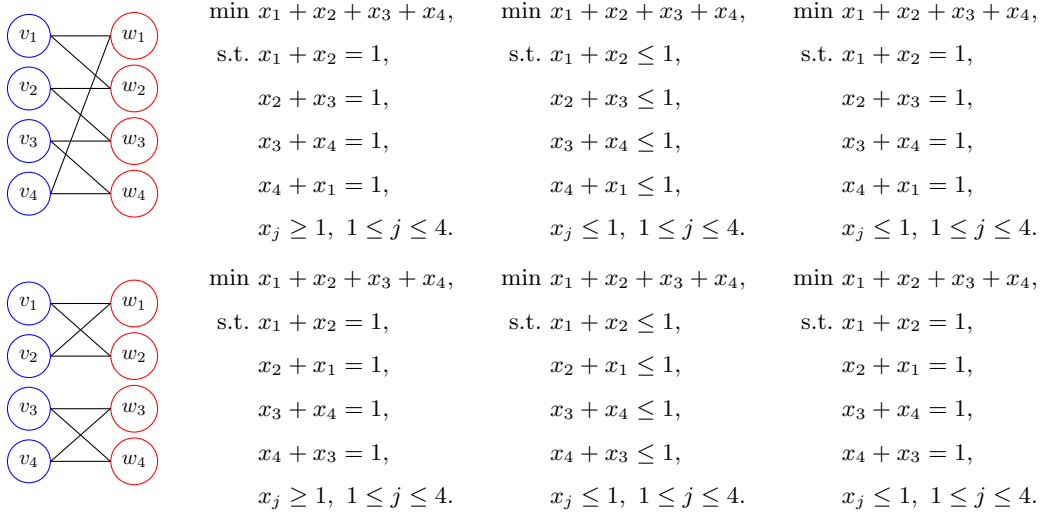

Since the separation power of GNNs is actually equal to the WL test \cite{xu2019powerful}, one would expect that the limitation of the WL test might restrict GNNs from universally representing LP.
However, any two LP-graphs that cannot be distinguished by the WL test must share some common characteristics even if they are not isomorphic. For example, let us consider the six LP instances in Figure~\ref{fig:not_iso_WL}.
In each of the three columns, the two non-isomorphic LP instances cannot be distinguished by the WL test. It can be checked that the two instances in the same column share some common characteristics. More specifically, both instances in the first column are infeasible; both instances in the second column are feasible but unbounded; both instances in the third column are feasible and bounded with $(1/2,1/2,1/2,1/2)$ being the optimal solution with the smallest $\ell_2$-norm. Actually, this phenomenon does not only happen on the instances in Figure~\ref{fig:not_iso_WL}, but also serves as an universal principle for all LP instances. We summarize the results in the following theorem:

\begin{theorem}\label{thm:LP_sameWL2sameproperties-main-txt}
If $(G, H),(\hat{G},\hat{H})\in \calG_{m,n}\times\calH^V_m\times\calH^W_n$ are not distinguishable by Algorithm \ref{alg:WL}, then
\[\Phi_{\text{feas}}(G,H) = \Phi_{\text{feas}}(\hat{G},\hat{H})\quad \text{and}\quad \Phi_{\text{obj}}(G,H) = \Phi_{\text{obj}}(\hat{G},\hat{H}).\]
Furthermore, if $(G, H),(\hat{G},\hat{H})\in \Phi_{\text{obj}}^{-1}(\bR)$, then it holds that $\Phi_{\text{solu}}(G,H) = \sigma_W( \Phi_{\text{solu}}(\hat{G},\hat{H}) )$ for some $\sigma_W\in S_n$.
\end{theorem}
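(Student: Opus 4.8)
The plan is to convert WL-indistinguishability into a \emph{fractional isomorphism} between the two LPs and then transport each characteristic across it. Writing the two instances as data tuples $(A,b,c,l,u,\circ)$ and $(\hat A,\hat b,\hat c,\hat l,\hat u,\hat\circ)$, the central object I would build is a pair of doubly stochastic matrices $P\in\bR^{m\times m}$ and $Q\in\bR^{n\times n}$ satisfying the commuting relations $AQ=P\hat A$ and $A^\top P=Q\hat A^\top$, the feature identities $b=P\hat b$, $c=Q\hat c$, $l=Q\hat l$, $u=Q\hat u$, and the support property that $P_{ii'}>0$ (resp.\ $Q_{jk}>0$) forces the corresponding vertices to share identical features, in particular $\circ_i=\hat\circ_{i'}$. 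Since $P^\top,Q^\top$ are again doubly stochastic, summing over columns turns these into the reverse identities $\hat b=P^\top b$, $\hat c=Q^\top c$, etc., so the correspondence runs symmetrically in both directions.

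\textbf{Step 1 (from WL to $P,Q$).} Running Algorithm~\ref{alg:WL} with injective hash functions computes the canonical stable coloring, whose color classes form an \emph{equitable partition}: all constraint vertices of a common color carry the same feature $(b_i,\circ_i)$ and, for every variable color class, the same weighted degree $\sum_{j}E_{ij}$ into that class, with the symmetric statement on the variable side. Indistinguishability means the color-class multisets of the two graphs coincide, so the classes match in size, in features, and in quotient (class-to-class) weighted degrees. I would then take $P$ (resp.\ $Q$) to be the block matrix equal to $1/m_s$ (resp.\ $1/n_t$) on each matched pair of constraint (resp.\ variable) classes of sizes $m_s$ (resp.\ $n_t$) and $0$ elsewhere; this is doubly stochastic, and the equitable-partition degree identities combined with a handshake count within classes yield $AQ=P\hat A$ and $A^\top P=Q\hat A^\top$. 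The feature and support conditions hold automatically because $P,Q$ connect only vertices of equal color, hence equal feature. This step is the main obstacle: it is the fractional-isomorphism characterization of color refinement adapted to weighted bipartite graphs with heterogeneous vertex features, and everything afterward is short.

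\textbf{Step 2 (feasibility, objective, boundedness).} For any $\hat x$ feasible for the second LP, I claim $x:=Q\hat x$ is feasible for the first. Indeed $Ax=AQ\hat x=P\hat A\hat x$, and since $P\ge 0$ only mixes constraints of identical type and satisfies $P\hat b=b$, the relation $\hat A\hat x\,\hat\circ\,\hat b$ is preserved into $Ax\,\circ\,b$; the bound constraints follow from $l=Q\hat l\le Q\hat x\le Q\hat u=u$, the infinite entries being harmless because the support condition makes $\hat l,\hat u$ constant on each class. The objective is preserved, since $c^\top(Q\hat x)=(Q^\top c)^\top\hat x=\hat c^\top\hat x$. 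The symmetric construction maps feasible points of the first LP to feasible points of the second with the same objective. Hence the two LPs realize identical sets of attainable objective values, giving at once (i) equal feasibility and (ii) equal optimal value $\Phi_{\text{obj}}$ (so finiteness and unboundedness agree as well).

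\textbf{Step 3 (minimum-norm solution).} Suppose now both instances lie in $\Phi_{\text{obj}}^{-1}(\bR)$, with minimum-$\ell_2$-norm optimizers $x^\ast$ and $\hat x^\ast$. By Step 2, $Q^\top x^\ast$ is optimal for the second LP and $Q\hat x^\ast$ for the first. By the Birkhoff--von Neumann theorem, doubly stochastic matrices are convex combinations of permutation matrices, hence $\ell_2$-contractions, so $\|Q^\top x^\ast\|\le\|x^\ast\|$ and $\|Q\hat x^\ast\|\le\|\hat x^\ast\|$. Minimality then forces $\|x^\ast\|\le\|Q\hat x^\ast\|\le\|\hat x^\ast\|\le\|Q^\top x^\ast\|\le\|x^\ast\|$, so all are equal, and uniqueness of the minimum-norm optimizer (Remark~\ref{rmk:soluLeast2norm}) gives $Q^\top x^\ast=\hat x^\ast$. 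Finally, equality in the contraction $\|Q^\top x^\ast\|=\|x^\ast\|$, read through the Birkhoff decomposition and the strict convexity of the Euclidean norm, forces every permutation matrix $P_\ast$ in the support of $Q^\top$ to send $x^\ast$ to the common vector $Q^\top x^\ast=\hat x^\ast$. Choosing one such $P_\ast$ gives $\hat x^\ast=P_\ast x^\ast$, so with $\sigma_W$ the permutation represented by $P_\ast^{-1}$ we obtain $x^\ast=\sigma_W(\hat x^\ast)$, i.e.\ $\Phi_{\text{solu}}(G,H)=\sigma_W(\Phi_{\text{solu}}(\hat G,\hat H))$, which is (iii).
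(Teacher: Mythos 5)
Your proposal is correct and is essentially the paper's own argument in different clothing: the doubly stochastic matrices $P,Q$ built from the matched stable color classes encode exactly the equitable-partition identities the paper extracts from Algorithm~\ref{alg:WL}, and the map $\hat x\mapsto Q\hat x$ is precisely the class-averaging construction $\hat{x}_j = \frac{1}{|J_q|}\sum_{j'\in J_q}x_{j'}$ used in Lemmas~\ref{lem:LP_sameWL2samefeasibility}--\ref{lem:LP_sameWL2samesolu}. The only real difference is cosmetic and sits at the end, where you extract the permutation via Birkhoff--von Neumann and strict convexity of the norm, whereas the paper applies the permutation up front and closes with the same Cauchy--Schwarz equality analysis.
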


In other words, the above theorem guarantees the sufficient power of the WL test for separating LP problems with different characteristics, including feasibility, optimal objective value, and optimal solution with smallest $\ell_2$-norm (up to permutation). 
This combined with the following theorem, which states the equivalence of the separation powers of the WL test and GNNs, yields that GNNs also have sufficient separation power for LP-graphs in the above sense.

\begin{theorem}\label{thm:separate_GNN-main-txt}
    For any $(G, H),(\hat{G},\hat{H})\in \calG_{m,n}\times\calH^V_m\times\calH^W_n$, the followings are equivalent:
    \begin{itemize}[leftmargin=8mm]
        \item[(i)] $(G,H)$ and $(\hat{G},\hat{H})$ are not distinguishable by Algorithm \ref{alg:WL}.
        \item[(ii)] $F(G,H) = F(\hat{G},\hat{H}),\ \forall~F\in\calF_{\text{GNN}}$.
        \item[(iii)] For any $F_W\in\calF_{\text{GNN}}^W$, there exists $\sigma_W\in S_n$ such that $F_W(G,H) = \sigma_W( F_W(\hat{G},\hat{H}))$.
    \end{itemize}
\end{theorem}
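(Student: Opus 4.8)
The plan is to prove the two implications (i) $\Rightarrow$ (ii) and (i) $\Rightarrow$ (iii) (the \emph{soundness} direction: no GNN separates WL-equivalent graphs) together with their converses (ii) $\Rightarrow$ (i) and (iii) $\Rightarrow$ (i) (the \emph{completeness} direction: some GNN realizes whatever separation the WL test achieves). The whole argument rests on the observation that the GNN update rules \eqref{eq:gnn-update-v}--\eqref{eq:gnn-update-w} and the WL recoloring steps of Algorithm~\ref{alg:WL} have \emph{identical} message-passing shape: each vertex is updated from its own previous state together with an edge-weighted sum of a transformation of its neighbors' states. This structural match is what lets the two computations simulate one another.

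For soundness I would argue by induction on the layer index $l$ that every GNN hidden feature is determined by the corresponding WL color, i.e.\ there exist maps $\varphi_l^V,\varphi_l^W$ with $h_i^{l,V} = \varphi_l^V(C_i^{l,V})$ and $h_j^{l,W} = \varphi_l^W(C_j^{l,W})$ for the finest (injective-hash) coloring. The base case holds because $f_{\mathrm{in}}^V,f_{\mathrm{in}}^W$ factor through $\mathrm{HASH}_{0,V},\mathrm{HASH}_{0,W}$; the inductive step holds because the color $C_i^{l,V}$ records exactly the pair $(C_i^{l-1,V},\{\{(E_{i,j},C_j^{l-1,W})\}\}_j)$, which determines the GNN aggregate $\sum_j E_{i,j} f_l^W(h_j^{l-1,W})$ and hence $h_i^{l,V}$. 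If $(G,H)$ and $(\hat G,\hat H)$ are WL-indistinguishable, their stable colorings yield equal multisets of vertex colors within $V$ and within $W$; by the determination property the GNN then produces equal multisets of hidden features. Since summation is multiset-invariant, the readout \eqref{eq:gnn-out-single} agrees, giving (ii); moreover any color-class-preserving bijection $\sigma_W\in S_n$ (which exists because the $W$-color multisets match) aligns the per-vertex features \eqref{eq:gnn-out-vertex}, giving (iii), and the \emph{same} $\sigma_W$ serves all $F_W$ simultaneously because each vertex output depends on its vertex only through that vertex's color.

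For completeness I would assume WL distinguishes the graphs at some iteration $L$ and build a GNN reproducing that computation. Since only two fixed graphs are involved, the colors occurring up to layer $L$ form a finite set, so each injective hash and each linearizing map $\mathrm{HASH}'$ can be emulated by a continuous function into a Euclidean space $\bR^{d_l}$. I would choose the color embeddings to be in general position so that the edge-weighted sums appearing in \eqref{eq:gnn-update-v}--\eqref{eq:gnn-update-w} stay injective as functions of the neighbor multiset over the finitely many multisets that actually arise. Layer by layer this produces GNN features in bijection with the WL colors; the differing WL multiset then surfaces either in the $V$-aggregate $\sum_i h_i^{L,V}$ or the $W$-aggregate $\sum_j h_j^{L,W}$, and a suitable continuous $f_{\mathrm{out}}$ (respectively $f^W_{\mathrm{out}}$, chosen to read off a global aggregate) separates the two instances, contradicting (ii) (respectively (iii)).

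The main obstacle is exactly this weighted-aggregation injectivity. Unlike the classical unweighted WL/GNN correspondence, the edges here carry arbitrary real weights $E_{i,j}$, so the map sending a neighbor multiset $\{\{(E_{i,j},\,\mathrm{color}_j)\}\}_j$ to the linear combination $\sum_j E_{i,j}\,(\text{embedding of }\mathrm{color}_j)$ can collapse genuinely distinct neighborhoods. The delicate step is to certify that, restricted to the finitely many neighborhood multisets generated by the two fixed instances, one can pick the embeddings so that this sum loses no information, which is a genericity/moment-type argument carried out in sufficiently high embedding dimension. A secondary care point is the $\text{(ii)}\leftrightarrow\text{(iii)}$ coupling: I must produce a single $\sigma_W$ that respects the $W$-color classes globally rather than being chosen coordinatewise, and I must use the access of $f^W_{\mathrm{out}}$ to the global $V$- and $W$-aggregates so that the vertex-output GNN still detects a discrepancy in the case where WL separates the graphs only on the constraint side $V$.
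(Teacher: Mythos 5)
Your proposal follows essentially the same route as the paper: the equivalence is proved by mutual simulation of Algorithm~\ref{alg:WL} and the message-passing updates \eqref{eq:gnn-update-v}--\eqref{eq:gnn-update-w}, with an induction on the layer index showing in one direction that WL colors determine GNN features and in the other that suitably chosen GNN features determine WL colors. The only organizational difference is that the paper closes a three-cycle (i)$\Rightarrow$(iii)$\Rightarrow$(ii)$\Rightarrow$(i), getting (iii)$\Rightarrow$(ii) for free by lifting any $F\in\calF_{\text{GNN}}$ to the vertex-output network whose every coordinate equals $F$ (Lemma~\ref{lem:sameWGNN2sameGNN}), whereas you prove four implications directly; both decompositions are fine.

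One conceptual point needs correction, because you single it out as ``the main obstacle.'' You assert that the WL color $C_i^{l,V}$ records the full weighted multiset $\{\{(E_{i,j},C_j^{l-1,W})\}\}_j$, and, dually, that in the completeness direction you must choose embeddings so that $\sum_j E_{i,j}\,(\text{embedding of color}_j)$ loses no information about that multiset. Neither is true, and the second is impossible: two same-colored neighbors with weights $\{1,3\}$ versus $\{2,2\}$ give identical sums under every embedding. But neither is needed. Algorithm~\ref{alg:WL} itself aggregates neighbors through $\sum_j E_{i,j}\mathrm{HASH}_{l,W}'(C_j^{l-1,W})$, so with linearly independent hash images the color retains exactly the per-class weight sums $\sum_{C_j^{l-1,W}=C}E_{i,j}$ and nothing more; the GNN aggregate $\sum_j E_{i,j}f_l^W(h_j^{l-1,W})$ is determined by those same per-class sums once $f_l^W$ is constant on color classes (induction hypothesis), and conversely, sending the finitely many distinct features to standard basis vectors makes the GNN aggregate determine the per-class sums exactly. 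So your ``delicate genericity argument'' collapses to linear independence of finitely many vectors, which is precisely what the paper uses. A secondary gap: at the readout you claim the differing multiset ``surfaces'' in the raw sums $\sum_i h_i^{L,V}$, but distinct multisets can have equal sums; you need the final-layer features to be linearly independent (so the sum encodes multiplicities) or a scalar base-$k$ encoding as in the paper's Lemma~\ref{lem:sameGNN2sameWL}.
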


Theorem \ref{thm:separate_GNN-main-txt} extends the results in \cites{xu2019powerful,azizian2020expressive,geerts2022expressiveness} to the case with the modified WL test (Algorithm \ref{alg:WL}) and LP-graphs.

\paragraph{\textbf{Representation power}}
Based on the separation power of GNNs, one is able to investigate the representation/approximation power of GNNs. To prove Theorems \ref{thm:GNNapproxPhi_feas}, \ref{thm:GNNapproxPhi_obj}, and \ref{thm:GNNapproxPhi_solu}, we first determine the closure of $\calF_{\text{GNN}}$ or $\calF_{\text{GNN}}^W$ in the space of invariant/equivariant continuous functions with respect to the $\sup$-norm, which is also named as the \emph{universal approximation}. The result of $\calF_{\text{GNN}}$ is stated as follows, where $\calC(X,\bR)$ is the collection/algebra of all real-valued continuous function on $X$. The result of $\calF_{\text{GNN}}^W$ can be found in the appendix.

\begin{theorem}\label{thm:uniapprox_scal}
    Let $X\subset \calG_{m,n}\times \calH^V_m\times \calH^W_n$ be a compact set. For any $\Phi\in\calC(X,\bR)$ that satisfies $\Phi(G,H) = \Phi(\hat{G},\hat{H})$ for all $(G,H),(\hat{G},\hat{H})\in X$ that are not distinguishable by Algorithm \ref{alg:WL},
    and any $\epsilon>0$, there exists $F\in \calF_{\text{GNN}}$ such that
    \begin{equation*}
        \sup_{(G,H)\in X} |\Phi(G,H) - F(G,H)| < \epsilon.
    \end{equation*}
\end{theorem}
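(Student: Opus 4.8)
The plan is to prove this as a Stone--Weierstrass-type density statement. Observe first that $X$ is a compact Hausdorff space: the ambient space $\calG_{m,n}\times\calH^V_m\times\calH^W_n$ is a finite product of Euclidean pieces and of discrete pieces (each $\{\leq,=,\geq\}$, $\{-\infty\}$, $\{+\infty\}$ contributing isolated points), hence metrizable and in particular Hausdorff, and $X$ is assumed compact. The strategy is then threefold: (a) show that the restriction $\calF_{\text{GNN}}|_X$ is a subalgebra of $\calC(X,\bR)$ that contains the constant functions; (b) identify the point-separation of this algebra with indistinguishability under Algorithm \ref{alg:WL} by invoking Theorem \ref{thm:separate_GNN-main-txt}; and (c) apply the version of the Stone--Weierstrass theorem for subalgebras that separate points only up to an equivalence relation, which asserts that the sup-norm closure of such an algebra is exactly the set of continuous functions that are constant on its separation classes. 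Since $\Phi$ is assumed constant on the classes of WL-indistinguishability, it will land in this closure, which is precisely the claim.

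The first key step is to verify the algebra structure. Every $F\in\calF_{\text{GNN}}$ is continuous on $X$, since $f_{\mathrm{in}}^V,f_{\mathrm{in}}^W$, the layer maps $f_l^V,f_l^W,g_l^V,g_l^W$, and $f_\text{out}$ are continuous and \eqref{eq:gnn-in}--\eqref{eq:gnn-out-single} compose them with finite sums scaled by the fixed weights $E_{i,j}$. Constant functions are realized by a trivial read-out. Closure under linear combinations and products is obtained by \emph{parallel composition}: given $F_1,F_2\in\calF_{\text{GNN}}$, I would first pad the shallower network to a common depth $L$ using pass-through updates that return their first argument, then run both networks on the concatenated embedding $h^{0}=(h^{0,(1)},h^{0,(2)})$ with block-diagonal layer maps, so that the final aggregates carry both $\sum_i h_i^{L,V,(1)}$, $\sum_i h_i^{L,V,(2)}$ and the analogous $W$-sums. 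A single continuous read-out can then output any continuous function of these four aggregates; choosing it to be $f^{(1)}_\text{out}\cdot f^{(2)}_\text{out}$ (respectively $\alpha f^{(1)}_\text{out}+\beta f^{(2)}_\text{out}$) produces $F_1 F_2$ (respectively $\alpha F_1+\beta F_2$) as a bona fide member of $\calF_{\text{GNN}}$. Hence $\calF_{\text{GNN}}|_X$ is a subalgebra containing the constants.

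Next I would transfer the separation information. By Theorem \ref{thm:separate_GNN-main-txt}, the relation ``$F(G,H)=F(\hat G,\hat H)$ for all $F\in\calF_{\text{GNN}}$'' coincides exactly with indistinguishability under Algorithm \ref{alg:WL}; thus the separation equivalence relation of the algebra $\calF_{\text{GNN}}|_X$ is precisely $\sim_{\mathrm{WL}}$. To invoke Stone--Weierstrass in this setting I would pass to the quotient: the evaluation map $(G,H)\mapsto \bigl(F(G,H)\bigr)_{F\in\calF_{\text{GNN}}}$ is continuous with fibers equal to the $\sim_{\mathrm{WL}}$-classes, so its image is a compact Hausdorff space homeomorphic to $X/\!\sim_{\mathrm{WL}}$. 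On this quotient the induced algebra separates points and contains constants, so the classical Stone--Weierstrass theorem applies, and its conclusion pulls back to $X$. Because $\Phi$ factors through $X/\!\sim_{\mathrm{WL}}$ by hypothesis, it is uniformly approximable by elements of $\calF_{\text{GNN}}$, yielding the desired $F$.

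The main obstacle I anticipate is step (a), specifically the closure under multiplication: one must check that parallel composition with block-diagonal layer maps genuinely stays inside the admissible GNN form of \eqref{eq:gnn-in}--\eqref{eq:gnn-out-single}, that depth-padding by pass-through layers leaves both outputs unchanged, and that the product read-out is a legitimate continuous function on the (compact) range of the aggregated final features. The remaining delicate point is purely topological: confirming that $X/\!\sim_{\mathrm{WL}}$ is compact Hausdorff so that the equivalence-relation form of Stone--Weierstrass is legitimate, which is exactly where the Hausdorffness of the mixed Euclidean--discrete feature space enters.
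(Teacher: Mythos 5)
Your proposal is correct and follows essentially the same route as the paper: establish that $\calF_{\text{GNN}}$ restricted to $X$ is a subalgebra containing the constants (via parallel composition with block-diagonal layer maps and depth padding, exactly as in the paper's Lemma~\ref{lem:subalgebra} and Lemma~\ref{lem:layer_num}), identify its point-separation with WL-indistinguishability via Theorem~\ref{thm:separate_GNN-main-txt}, and apply Stone--Weierstrass on the compact Hausdorff quotient $X/\!\sim$, through which $\Phi$ factors. Your realization of the quotient as the image of the evaluation map is only a cosmetic variant of the paper's direct verification that $\pi(X)$ is Hausdorff using the separating continuous functions.
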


Theorem~\ref{thm:uniapprox_scal} can be viewed as an LP-graph version of results in \cites{azizian2020expressive, geerts2022expressiveness}. Roughly speaking, graph neural networks can approximate any invariant continuous function whose separation power is upper bounded by that of WL test on compact domain with arbitrarily small error. Although our target mappings $\Phi_{\text{feas}},\Phi_{\text{obj}},\Phi_{\text{solu}}$ are not continuous, we prove (in appendix) that they are measurable. Applying Lusin's theorem \cite{evans2018measure}*{Theorem 1.14}, we show that GNN can be arbitrarily close to the target mappings except for a small domain.

\section{Numerical Experiments}
\label{sec:numerical}

We present the numerical results that validate our theoretical results in this section. We generate LP instances with $m = 10$ and $n = 50$ that are possibly infeasible or feasible and bounded.
To check whether GNN can predict feasibility, we generate three data sets with $100,500,2500$ independent LP instances respectively, and call the solver wrapped in \texttt{scipy.optimize.linprog} to get the feasibility, optimal objective value and an optimal solution for each generated LP. 
To generate enough feasible and bounded LPs to check whether GNN can approximate the optimal objective value and optimal solution, we follow the same approach as before to generate LP randomly and discard those infeasible LPs until the number of LPs reach our requirement.
We train GNNs to fit the three LP characteristics by minimizing the distance between GNN-output and those solver-generated labels.
The building and the training of the GNNs are implemented using \texttt{TensorFlow}. The codes are modified from \cite{gasse2019exact} and can be found in \url{https://github.com/liujl11git/GNN-LP.git}. We set $L=2$ for all GNNs and those learnable functions $f_{\mathrm{in}}^V,f_{\mathrm{in}}^W,f_\text{out}, f^W_{\text{out}}, \{f_l^V,f_l^W,g_l^V,g_l^W\}_{l=0}^{L}$ are all parameterized with MLPs. Details can be found in the appendix. Our results are reported in Figure \ref{fig:experiments-lp}.
\begin{figure}[h]
    \begin{subfigure}{0.32\textwidth}
    \includegraphics[width=\textwidth]{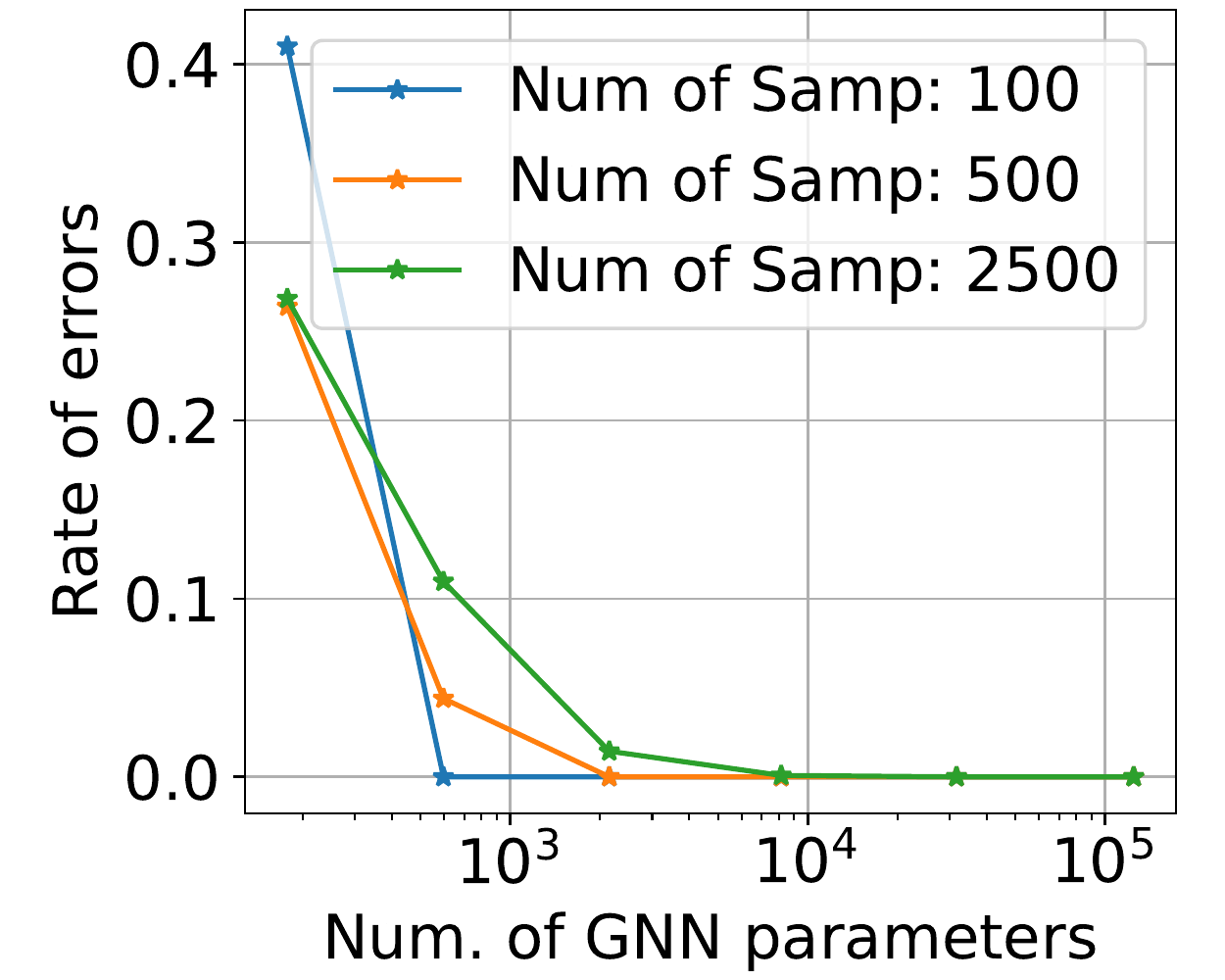}
    \caption{Feasibility}
    \label{fig:feas}
    \end{subfigure}
    \begin{subfigure}{0.32\textwidth}
    \includegraphics[width=\textwidth]{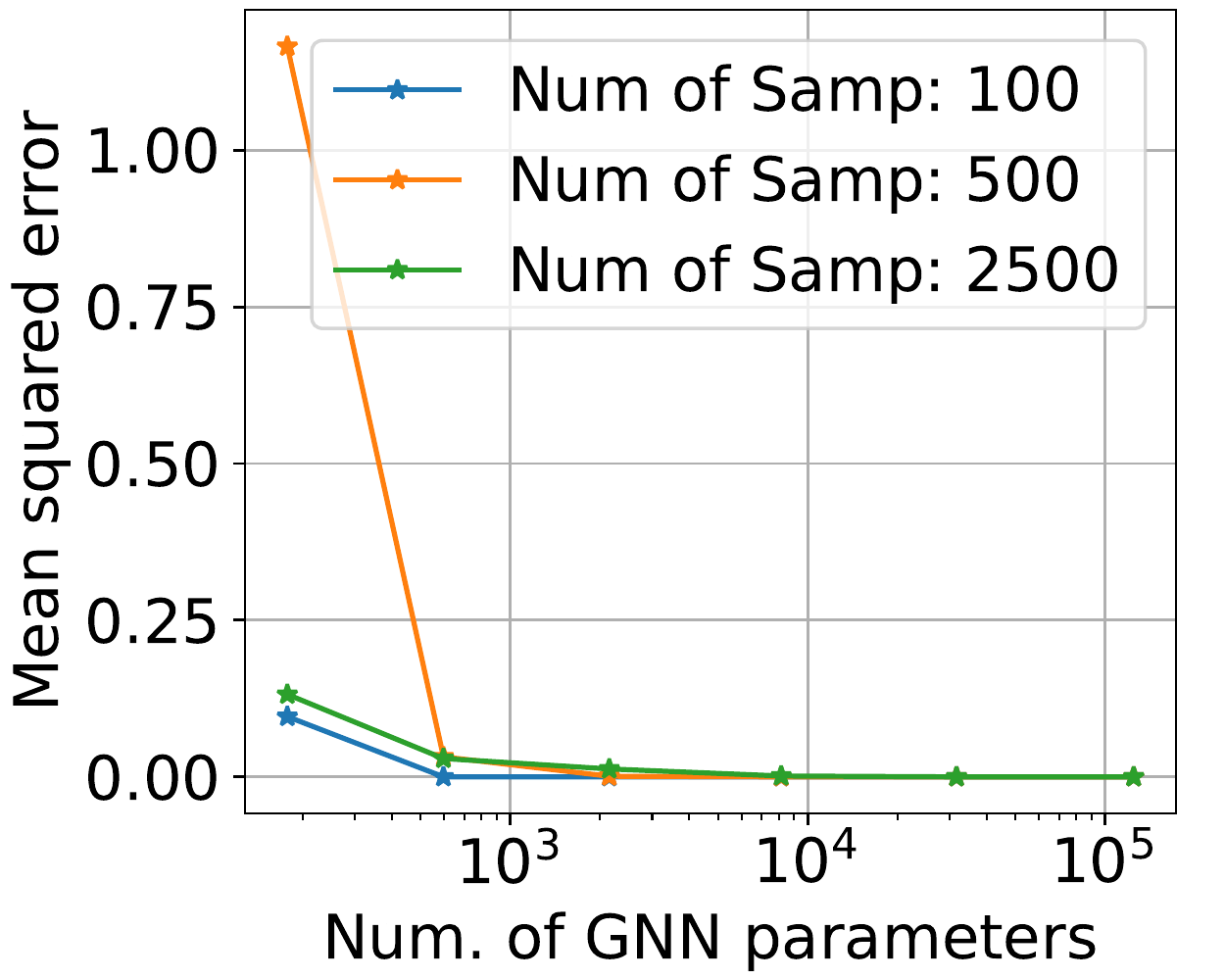}
    \caption{Optimal objective value}
    \label{fig:obj}
    \end{subfigure}
    \begin{subfigure}{0.32\textwidth}
    \includegraphics[width=\textwidth]{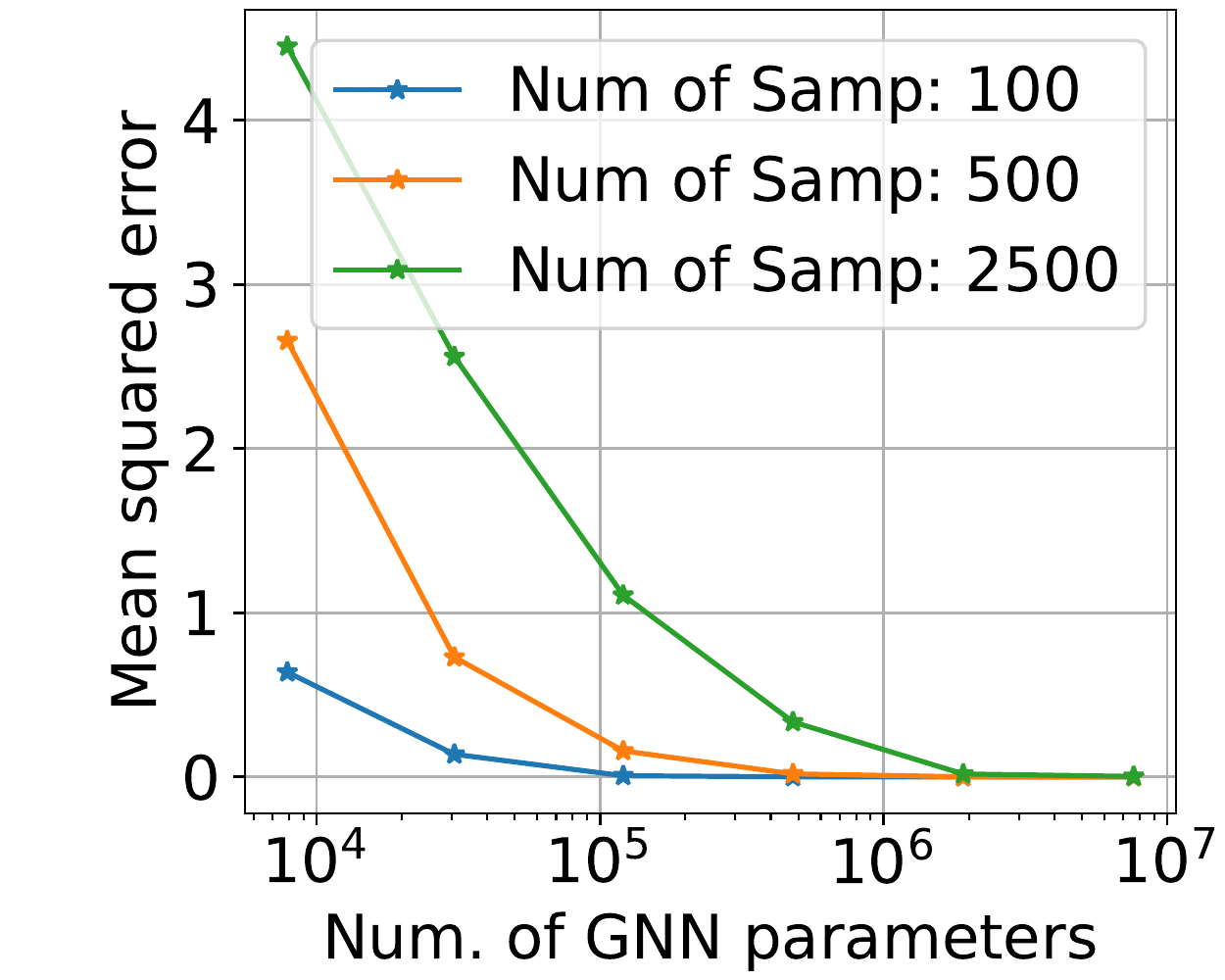}
    \caption{Optimal solution}
    \label{fig:solu}
    \end{subfigure}
    \caption{GNN can approximate $\Phi_{\text{feas}}$, $\Phi_{\text{obj}}$, and $\Phi_{\text{solu}}$}
    \label{fig:experiments-lp}
\end{figure}
All the errors reported in Figure \ref{fig:experiments-lp} are training errors since generalization is out of the scope of this paper. In Figure \ref{fig:feas}, the ``rate of errors'' means the proportion of instances with $\mathbb{I}_{F(G,H)>1/2}\neq \Phi_{\text{feas}}(G,H)$. This metric exactly equals to zeros as long as the number of parameters in GNN is large enough, which directly validates Corollary \ref{cor:GNNapproxPhi_feas}: the existence of GNNs that can accruately predict the feasibility of LP instances. With the three curves in Figure \ref{fig:feas} combined together, we conclude that such principle does not violate as the number of samples increases. This consists with Theorem \ref{thm:GNNapproxPhi_feas}. 
Mean squared errors in Figures \ref{fig:obj} and \ref{fig:solu} are respectively defined as $\mathbb{E}_{(G,H)}|F(G,H)-\Phi_{\text{feas}}(G,H)|^2$ and $\mathbb{E}_{(G,H)}\|F(G,H)-\Phi_{\text{feas}}(G,H)\|^2$. Therefore, Figures \ref{fig:obj} and \ref{fig:solu} validates Theorems \ref{thm:GNNapproxPhi_obj} and \ref{thm:GNNapproxPhi_solu} respectively.
Note that all the instances used in Figure \ref{fig:obj} are feasible and bounded. Thus, Figure \ref{fig:obj} actually only validates \eqref{eq:GNNapproxPhi_obj_2} in Theorem \ref{thm:GNNapproxPhi_obj}. However, due to the fact that feasibility of an LP is equal to the boundedness of its dual problem, one may dualize each LP and use the conclusion of Figure \ref{fig:feas} to validate \eqref{eq:GNNapproxPhi_obj_1} in Theorem \ref{thm:GNNapproxPhi_obj}. Some extra experimental results on generalization, i.e., the performance of the trained models on the test set, are presented in Appendix~\ref{sec:apx_numerics}.

\section{Conclusions}
\label{sec:conclusions}

In this work, we show that graph neural networks, as well as the WL test, have sufficient separation power to distinguish linear programming problems with different characteristics. In addition, GNNs can approximate LP feasibility, optimal objective value, and optimal solution with arbitrarily small errors on compact domains or finite datasets. These results guarantee that GNN is a proper class of machine learning models to represent linear programs, and hence contribute to the theoretical foundation in the learning-to-optimize community. Future directions include the size/complexity of GNNs and the generalization, that are not covered in our current theory but are of great importance. Another future topic is investing the representation power of graph neural networks for mixed-integer linear programming (MILP), which has been observed with promising experimental results in the literature.

\bibliographystyle{amsxport}
\bibliography{references1}

\appendix

\section{Weisfeiler-Lehman (WL) Test and Color Refinement} 
\label{sec:wl-test}

The WL test can be viewed as a coloring refinement procedure if there are no collisions of hash functions and their weighted averages. More specifically, each vertex is colored initially according to the group it belongs to and its feature -- two vertices have the same color if and only if they are in the same vertex group and have the same feature. The initial colors are denoted as $C_1^{0,V},C_2^{0,V},\dots, C_m^{0,V}, C_1^{0,W},C_2^{0,W},\dots, C_n^{0,W}$. Then at iteration $l$, the set of vertices with the same color at iteration $l-1$ are further partitioned into several subsets according to the colors of their neighbours -- two vertices $v_i$ and $v_{i'}$ are in the same subset if and only if $C_i^{l-1,V} = C_{i'}^{l-1,V}$ and for any $C\in\{C_j^{l-1,W}:1\leq j\leq n\}$,
\begin{equation*}
    \sum_{C_j^{l-1,W} = C} E_{i,j} = \sum_{C_j^{l-1,W} = C} E_{i',j},
\end{equation*}
and it is similar for vertices $w_j$ and $w_{j'}$. After such partition/refinement, vertices are associated with the same color if and only if they are in the same subset, which is the coloring at iteration $l$. This procedure is terminated if the refinement is trivial, meaning that no sets with the same color are partitioned into at least two subsets, i.e., the coloring is stable. For more information about color refinement, we refer to \cites{berkholz2017tight, arvind2015power, arvind2017graph}.

We then discuss the stable coloring that Algorithm~\ref{alg:WL} will converge to, for which we made the following definition, where $\mathcal{S} = \{S_1,S_2,\dots,S_s\}$ is called a partition of a set $S$ if $S_1\cup S_2\cup\cdots\cup S_s = S$ and $S_i\cap S_{i'}=\emptyset,\ \forall~1\leq i<i'\leq s$.

\begin{definition}[Stable Partition Pair of Vertices]\label{def:stable_partition}
	Let $G=(V\cup W,E)$ be a weighted bipartite graph with $V = \{v_1,v_2,\dots,v_m\}$, $W=\{w_1,w_2,\dots,w_n\}$, and vertex features $H = (h_1^V, h_2^V,\dots, h_m^V, h_1^W,h_2^W,\dots, h_n^W)$, and let $\mathcal{I} = \{I_1,I_2,\dots,I_s\}$ and $\mathcal{J} = \{J_1,J_2,\dots, J_t\}$ be partitions of $\{1,2,\dots,m\}$ and $\{1,2,\dots,n\}$, respectively. We say that $(\mathcal{I},\mathcal{J})$ is a stable partition pair of vertices for the graph $G$ if the followings are satisfied:
	\begin{itemize}[leftmargin=8mm]
		\item[(i)] $h_i^V = h_{i'}^V$ holds if $i,i'\in I_p$ for some $p\in\{1,2,\dots,s\}$.
		\item[(ii)] $h_j^W = h_{j'}^W$ holds if $j,j'\in J_q$ for some $q\in\{1,2,\dots,t\}$.
		\item[(iii)] For any $p\in\{1,2,\dots,s\}$, $q\in\{1,2,\dots,t\}$, and $i,i'\in I_p$, $\sum_{j\in J_q} E_{i,j}=\sum_{j\in J_q} E_{i',j}$.
		\item[(iv)] For any $p\in\{1,2,\dots,s\}$, $q\in\{1,2,\dots,t\}$, and $j,j'\in J_q$, $\sum_{i\in I_p} E_{i,j}=\sum_{i\in I_p} E_{i,j'}$.
	\end{itemize}
\end{definition}

We denote $(\mathcal{I}^l, \mathcal{J}^l)$ as the partition pair corresponding the coloring at iteration $l$ of Algorithm~\ref{alg:WL}. Suppose that there are no collisions. Then it is clear that $(\mathcal{I}^{l+1}, \mathcal{J}^{l+1})$ is finer than $(\mathcal{I}^l, \mathcal{J}^l)$, denoted as $(\mathcal{I}^{l+1}, \mathcal{J}^{l+1})\preceq (\mathcal{I}^l, \mathcal{J}^l)$, which means that for any $I\in\mathcal{I}^{l+1}$ and any $J\in\mathcal{J}^{l+1}$, there exist $I'\in \mathcal{I}^l$ and $J'\in\mathcal{J}^l$ such that $I\subset I'$ and $J\subset J'$. In addition, $(\mathcal{I}^{l+1}, \mathcal{J}^{l+1})=(\mathcal{I}^l, \mathcal{J}^l)$ if and only if $(\mathcal{I}^l, \mathcal{J}^l)$ is a stable partition pair of vertices. Note that there is at most $O(|V|+|W|)$ iterations leading to strictly finer partition pair, i.e, $(\mathcal{I}^{l+1}, \mathcal{J}^{l+1})\preceq (\mathcal{I}^l, \mathcal{J}^l)$ but $(\mathcal{I}^{l+1}, \mathcal{J}^{l+1})\neq (\mathcal{I}^l, \mathcal{J}^l)$. We can immediately obtain the following result:

\begin{theorem}\label{thm:WLconverge}
If there are no collision of hash functions and their weighted averages, then Algorithm~\ref{alg:WL} terminates at a stable partition pair of vertices in $O(|V|+|W|)$ iterations.
\end{theorem}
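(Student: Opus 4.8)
The plan is to reduce the convergence claim to a short counting argument built on the monotone refinement structure already isolated in the discussion preceding the statement. Concretely, I would introduce the potential $\phi(l) := |\mathcal{I}^l| + |\mathcal{J}^l|$, the total number of parts in the partition pair at iteration $l$. Since $(\mathcal{I}^{l+1},\mathcal{J}^{l+1}) \preceq (\mathcal{I}^l,\mathcal{J}^l)$ (refinement is monotone under the no-collision hypothesis), the potential is nondecreasing in $l$; moreover, whenever the step from $l$ to $l+1$ is a \emph{strict} refinement, at least one part of $\mathcal{I}^l$ or $\mathcal{J}^l$ is split into two or more parts, so $\phi(l+1) \ge \phi(l) + 1$.

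First I would note that $\phi$ is bounded above: the finest possible partition pair assigns every index its own class, giving $\phi(l) \le m + n = |V| + |W|$ for all $l$. Since $\phi(0) \ge 2$ and $\phi$ strictly increases at every strict-refinement step while staying bounded by $m+n$, there can be at most $m + n - 2 = O(|V|+|W|)$ such steps. Hence there is a first index $L_0 \le m+n$ with $(\mathcal{I}^{L_0+1},\mathcal{J}^{L_0+1}) = (\mathcal{I}^{L_0},\mathcal{J}^{L_0})$. Next I would invoke the fixed-point characterization recalled just before the theorem: equality $(\mathcal{I}^{l+1},\mathcal{J}^{l+1}) = (\mathcal{I}^l,\mathcal{J}^l)$ holds precisely when $(\mathcal{I}^l,\mathcal{J}^l)$ is a stable partition pair in the sense of Definition~\ref{def:stable_partition}. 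Thus $(\mathcal{I}^{L_0},\mathcal{J}^{L_0})$ is stable; and because the update is deterministic, once two consecutive iterates coincide all later iterates equal them, so the algorithm has converged to a stable partition pair within $O(|V|+|W|)$ iterations.

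The one place that genuinely requires care — and where the hypothesis ``no collision of hash functions and their weighted averages'' is used — is justifying the two structural facts this argument borrows: (a) that the update produces a refinement at all (equal new colors force equal old colors), and (b) that a fixed point really satisfies conditions (iii)--(iv) of Definition~\ref{def:stable_partition}. Fact (a) is immediate from injectivity of $\mathrm{HASH}_{l,V}$ and $\mathrm{HASH}_{l,W}$. The hard part will be fact (b): at a fixed point, $C_i^{l,V} = C_{i'}^{l,V}$ forces the weighted sums $\sum_{j} E_{i,j}\,\mathrm{HASH}'_{l,W}(C_j^{l-1,W})$ and $\sum_{j} E_{i',j}\,\mathrm{HASH}'_{l,W}(C_j^{l-1,W})$ to agree, and I would need to convert this single vector equation into the per-color-class equalities $\sum_{j\in J_q} E_{i,j} = \sum_{j\in J_q} E_{i',j}$ demanded by (iii). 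This step requires that the images $\mathrm{HASH}'_{l,W}(C)$ over distinct colors $C$ be linearly independent, which is exactly the content I would read off from the no-collision-of-weighted-averages assumption (distinct neighbor-color profiles yield distinct weighted sums); grouping the sum by the color classes $J_q$ then decouples it term by term. The symmetric argument on the $V$-side gives (iv), and with (a) and (b) in hand the counting argument above closes the proof.
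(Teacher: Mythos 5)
Your proposal is correct and follows essentially the same route as the paper: the paper's own (very terse) proof rests on exactly the observations you make explicit --- that under the no-collision hypothesis each iteration refines the partition pair, that a fixed point of the refinement is precisely a stable partition pair in the sense of Definition~\ref{def:stable_partition}, and that only $O(|V|+|W|)$ strict refinements can occur since the number of parts is bounded by $m+n$. Your potential-function bookkeeping and your decoupling of the weighted-sum equality into per-color-class equalities are just careful spellings-out of what the paper leaves implicit.
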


Furthermore, for every $(G,H)\in \calG_{m,n}\times\calH_m^V\times\calH_n^W$, the coarsest stable partition pair of vertices exists and is unique, which can be proved using techniques similar to the proof of \cite{berkholz2017tight}*{Proposition 3}. Algorithm~\ref{alg:WL} terminates at the unique coarsest stable partition pair. This is because that the coloring gin each iteration of Algorithm~\ref{alg:WL} is always coarser than the unique coarsest stable partition pair (see \cite{berkholz2017tight}*{Proposition 2}).

\section{Separation Power of the WL Test}
This section gives the proof and some corollaries of Theorem \ref{thm:LP_sameWL2sameproperties-main-txt}. First we present some definitions and lemmas, from which the proof of Theorem \ref{thm:LP_sameWL2sameproperties-main-txt} can be immediately derived.

\begin{definition}
    Given $(G, H),(\hat{G},\hat{H})\in \calG_{m,n}\times\calH^V_m\times\calH^W_n$, we say that $(G, H)$ and $(\hat{G},\hat{H})$ can be distinguished by the WL test if there exists some $L\in\mathbb{N}$ and some choices of hash functions, $\text{HASH}_{0,V}$, $\text{HASH}_{0,W}$, $\text{HASH}_{l,V}$, $\text{HASH}_{l,W}$, $\text{HASH}_{l,V}'$, and $\text{HASH}_{l,W}'$, for $l=1,2,\dots, L$, such that the multisets of colors at the $L$-th iteration of the WL test are different for $(G, H)$ and $(\hat{G},\hat{H})$.  Let $\sim$ be an equivalence relationship on $\calG_{m,n}\times\calH^V_m\times\calH^W_n$ defined via: $(G,H)\sim(\hat{G},\hat{H})$ if and only if they can not be distinguished by the WL test.
\end{definition}

It is clear that $(G, H)\sim (\hat{G},\hat{H})$ if they are isometric, i.e., there exist two permutations, $\sigma_V:\{1,2,\dots,m\}\rightarrow\{1,2,\dots,m\}$ and $\sigma_W:\{1,2,\dots,n\}\rightarrow\{1,2,\dots,n\}$, such that $E_{\sigma_V(i),\sigma_W(j)} = \hat{E}_{i,j}$, $h^V_{\sigma_V(i)} = \hat{h}^V_i$, and $h^W_{\sigma_W(j)} = \hat{h}^W_j$, for any $i\in\{1,2,\dots,m\}$ and $j\in \{1,2,\dots,n\}$. However, not every pair of WL-indistinguishable graphs consists of isometric ones; see Figure~\ref{fig:not_iso_WL} for an example. However, for LP problems that cannot be distinguished by the WL test will share some common properties, even if their associated graphs are not isomorphic.

\begin{lemma}\label{lem:LP_sameWL2samefeasibility}
If two weighted bipartite graphs with vertex features corresponding to two LP problems are indistinguishable by the WL test, then either both problems are feasible or both are infeasible. In other words, the WL test can distinguish two LP problems if one of them is feasible while the other one is infeasible.
\end{lemma}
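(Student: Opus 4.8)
The plan is to reduce the feasibility of each LP to the feasibility of a much smaller \emph{class-aggregated} LP built from the coarsest stable partition pair, and then to observe that WL-indistinguishability forces the two aggregated LPs to coincide verbatim. Concretely, let $(\mathcal{I},\mathcal{J})$ with $\mathcal{I}=\{I_1,\dots,I_s\}$ and $\mathcal{J}=\{J_1,\dots,J_t\}$ be the coarsest stable partition pair of $(G,H)$ furnished by Theorem~\ref{thm:WLconverge} (recall $E_{i,j}=A_{i,j}$). By parts (i)--(ii) of Definition~\ref{def:stable_partition} every constraint in $I_p$ shares a common feature $(b^{(p)},\circ^{(p)})$ and every variable in $J_q$ shares a common feature $(c^{(q)},l^{(q)},u^{(q)})$; by part (iii) the block sum $d_{pq}:=\sum_{j\in J_q}E_{i,j}$ is independent of $i\in I_p$, and by part (iv) the transposed sum $\sum_{i\in I_p}E_{i,j}$ is independent of $j\in J_q$. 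I define the \emph{reduced LP} in the $t$ aggregate variables $(y_1,\dots,y_t)$ by the constraints $\sum_{q=1}^t d_{pq}\,y_q \;\circ^{(p)}\; b^{(p)}$ for $p=1,\dots,s$ together with $l^{(q)}\le y_q\le u^{(q)}$.

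The first step is to prove that $(G,H)$ is feasible if and only if its reduced LP is feasible. For the easy direction, any $y$ feasible for the reduced LP lifts to the class-constant point $x_j:=y_q$ for $j\in J_q$, and using (iii) one checks that $\sum_j E_{i,j}x_j=\sum_q d_{pq}y_q$ for every $i\in I_p$, so $x$ is feasible for $(G,H)$. For the converse I will symmetrize: given $x\in\calX_F$, set $\bar y_q:=\tfrac{1}{|J_q|}\sum_{j\in J_q}x_j$ and average the linear constraints over each row block. Summing the $i$-th constraint over $i\in I_p$, dividing by $|I_p|$, interchanging sums, and invoking (iv) gives $\tfrac{1}{|I_p|}\sum_{i\in I_p}\sum_j E_{i,j}x_j=\sum_q d_{pq}\bar y_q$; since all rows in $I_p$ carry the same relation $\circ^{(p)}$ and right-hand side $b^{(p)}$, averaging preserves the relation and yields $\sum_q d_{pq}\bar y_q\;\circ^{(p)}\;b^{(p)}$. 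As $l^{(q)}\le \bar y_q\le u^{(q)}$ holds because the box $[l^{(q)},u^{(q)}]$ is convex, the point $\bar y$ is feasible for the reduced LP.

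With this equivalence in hand, the second step transfers feasibility across the two graphs. I will invoke the characterization that $(G,H)\sim(\hat G,\hat H)$ exactly when their coarsest stable partition pairs admit a bijection under which matched classes carry equal features and equal block sums $d_{pq}=\hat d_{pq}$ (the returned color multisets additionally force equal class sizes, although sizes play no role here). This makes the reduced LPs of $(G,H)$ and $(\hat G,\hat H)$ literally the same system in $(y_1,\dots,y_t)$, so one is feasible iff the other is; combined with the first step, the two original LPs share feasibility.

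The main obstacle is justifying the bijection of block sums in the last step, i.e.\ that WL-indistinguishability — defined through equality of color multisets for \emph{all} admissible hash functions — genuinely encodes the weighted quantities $d_{pq}$ and not merely the unweighted adjacency pattern. The key observation is that the aggregation in Algorithm~\ref{alg:WL} is $\sum_j E_{i,j}\,\mathrm{HASH}'_{l,W}(C_j^{l-1,W})$, and since $\mathrm{HASH}'$ may be chosen to send the finitely many colors to linearly independent vectors, this aggregate recovers each partial sum $\sum_{j:C_j^{l-1,W}=c}E_{i,j}$ over every neighboring color class $c$. Together with the uniqueness of the coarsest stable partition noted after Theorem~\ref{thm:WLconverge}, equality of the returned color multisets across the two graphs then produces the required structure-preserving bijection. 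Once this is in place the feasibility claim follows, and by symmetry so does its contrapositive: the WL test distinguishes any feasible LP from any infeasible one.
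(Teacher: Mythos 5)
Your proposal is correct and is essentially the paper's own argument: the composite of your two steps (average a feasible $x$ over the variable classes $J_q$ to get $\bar y$, then lift $\bar y$ to a class-constant point) is exactly the map $x\mapsto \hat x$ with $\hat x_j=\frac{1}{|J_q|}\sum_{j'\in J_q}x_{j'}$ that the paper uses, verified by the same row-block averaging and the same stable-partition identities (iii)--(iv). The explicit ``reduced LP'' is only a repackaging that makes the symmetry between the two instances more visible; the mathematical content, including the justification that linearly independent hash values recover the block sums $d_{pq}$, matches the paper.
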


\begin{proof}[Proof of Lemma~\ref{lem:LP_sameWL2samefeasibility}]
    Let us consider two LP problems:
    \begin{equation}\label{LP}
	    \begin{split}
		    \min_{x\in\bR^n} &\quad c^\top x,\\
		    \text{s.t.} &\quad Ax\ \circ\ b,\ l\leq x\leq u,
	    \end{split}
    \end{equation}
    and
    \begin{equation}\label{LP_hat}
	    \begin{split}
		    \min_{x\in\bR^n} &\quad \hat{c}^\top x,\\
		    \text{s.t.} &\quad \hat{A}x\ \hat{\circ}\ \hat{b},\ \hat{l}\leq x\leq \hat{u}.
	    \end{split}
    \end{equation}
    Let $(G,H)$ and $(\hat{G},\hat{H})$, where $G=(V\cup W,E)$ and $\hat{G}=(V\cup W,\hat{E})$, be the weighted bipartite graphs with vertex features corresponding to \eqref{LP} and \eqref{LP_hat}, respectively.
    
    Since $(G,H)\sim (\hat{G},\hat{H})$, for some choice of hash functions with no collision during Algorithm~\ref{alg:WL} for $(G,H)$ and $(\hat{G},\hat{H})$, Theorem~\ref{thm:WLconverge} guarantees that Algorithm~\ref{alg:WL} outputs the same stable coloring for $(G,H)$ and $(\hat{G},\hat{H})$ up to permutation. More specifically, after doing some permutation, there exist $\mathcal{I} = \{I_1,I_2,\dots,I_s\}$ and $\mathcal{J} = \{J_1,J_2,\dots,J_t\}$ that are partitions of $\{1,2,\dots,m\}$ and $\{1,2,\dots,n\}$, respectively, such that the followings hold:
	\begin{itemize}[leftmargin=*]
		\item $(b_i,\circ_i)=(\hat{b}_i,\hat{\circ}_i)$ and is independent of $i\in I_p$, for any $p\in\{1,2,\dots,s\}$.
		\item $(l_j,u_j) = (\hat{l}_j,\hat{u}_j)$ and is independent of $j\in J_q$ for any $q \in \{1,2,\dots,t\}$.
		\item For any $p\in\{1,2,\dots,s\}$ and $q \in \{1,2,\dots,t\}$, $\sum_{j\in J_q} A_{i,j}= \sum_{j\in J_q} \hat{A}_{i,j}$ and is independent of $i\in I_p$.
		\item For any $p\in\{1,2,\dots,s\}$ and $q \in \{1,2,\dots,t\}$, $\sum_{i\in I_p} A_{i,j} = \sum_{i\in I_p} \hat{A}_{i,j}$ and is independent of $j\in J_q$.
	\end{itemize}
	
	Suppose that the problem \eqref{LP} is feasible with $x\in\bR^n$ be some point in the feasible region. Define $y\in \bR^t$ via $ y_q =  \frac{1}{|J_q|} \sum_{j\in J_q} x_j$ and $\hat{x}\in \bR^n$ via $\hat{x}_j = y_q$, $j\in J_q$. Fix any $p\in\{1,2,\dots,s\}$ and some $i_0\in I_p$. It holds for any $i\in I_p$ that
	\begin{equation*}
		\sum_{j = 1}^n A_{i,j} x_j  \circ_i b_i,\quad \text{i.e.}, \quad \sum_{q = 1}^t \sum_{j\in J_q} A_{i,j} x_j \circ_{i_0} b_{i_0},
	\end{equation*}
	which implies that
	\begin{equation*}
		\frac{1}{|I_p|} \sum_{i\in I_p} \sum_{q = 1}^t \sum_{j\in J_q} A_{i,j} x_j = \frac{1}{|I_p|} \sum_{q = 1}^t \sum_{j\in J_q} \left( \sum_{i\in I_p}A_{i,j}\right) x_j\circ_{i_0} b_{i_0}.
	\end{equation*}
	Notice that $ \sum_{i\in I_p}A_{i,j}$ is constant for $j\in J_q$, $\forall~q\in\{1,2,\dots,t\}$. Let us denote $\alpha_q = \sum_{i\in I_p}A_{i,j} = \sum_{i\in I_p}\hat{A}_{i,j}$ for any $j\in J_q$.  Then it holds that
	\begin{equation*}
		\frac{1}{|I_p|} \sum_{q = 1}^t \sum_{j\in J_q} \alpha_q x_j=\frac{1}{|I_p|} \sum_{q = 1}^t \sum_{j\in J_q} \alpha_q y_q  = \frac{1}{|I_p|} \sum_{q = 1}^t \sum_{j\in J_q} \left( \sum_{i\in I_p}\hat{A}_{i,j}\right)y_q \circ_{i_0} b_{i_0}.
	\end{equation*}
	Note that
	\begin{equation*}
		\frac{1}{|I_p|} \sum_{q = 1}^t \sum_{j\in J_q} \left( \sum_{i\in I_p}\hat{A}_{i,j}\right)y_q = \frac{1}{|I_p|}  \sum_{i\in I_p} \sum_{q = 1}^t\left( \sum_{j\in J_q}\hat{A}_{i,j}\right) y_q,
	\end{equation*}
	and that $\sum_{j\in J_q}\hat{A}_{i,j}$ is constant for $i\in I_p$. So one can conclude that
	\begin{equation*}
		\sum_{j=1}^n \hat{A}_{i,j} \hat{x}_j = \sum_{q = 1}^t \sum_{j\in J_q}\hat{A}_{i,j} \hat{x}_j  = \sum_{q = 1}^t\left( \sum_{j\in J_q}\hat{A}_{i,j}\right) y_q \circ_i b_i,\quad \forall~i\in I_p,
	\end{equation*}
	which leads to $\hat{A}\hat{x}\ \hat{\circ}\ \hat{b}$. It can also be seen that $\hat{l} = l\leq \hat{x}\leq u\leq \hat{u}$. Therefore, $\hat{x}$ is feasible for \eqref{LP_hat}.
	
	We have shown above that the feasibility of \eqref{LP} implies the feasibility of \eqref{LP_hat}. The inverse is also true by the same reasoning. Hence, we complete the proof.
\end{proof}

\begin{lemma}\label{lem:LP_sameWL2sameobj}
If two weighted bipartite graphs with vertex features corresponding to two LP problems are indistinguishable by the WL test, then these two problems share the same optimal objective value (could be $\infty$ or $-\infty$).
\end{lemma}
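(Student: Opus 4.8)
The plan is to reuse the feasibility-averaging construction from the proof of Lemma~\ref{lem:LP_sameWL2samefeasibility} and to observe that it leaves the objective value unchanged. First I would dispose of the infeasible case: by Lemma~\ref{lem:LP_sameWL2samefeasibility}, either both \eqref{LP} and \eqref{LP_hat} are feasible or both are infeasible, and in the latter case both optimal objective values equal $+\infty$ by convention, so there is nothing to prove. Hence I may assume both problems are feasible and invoke the stable partition pair $(\mathcal{I},\mathcal{J})$ with $\mathcal{I}=\{I_1,\dots,I_s\}$ and $\mathcal{J}=\{J_1,\dots,J_t\}$ produced in that proof, together with its four defining properties. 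Since the variable feature is $h_j^W=(c_j,l_j,u_j)$, the stable coloring gives not only $(l_j,u_j)=(\hat l_j,\hat u_j)$ but also that $(c_j,l_j,u_j)=(\hat c_j,\hat l_j,\hat u_j)$ is constant on each class $J_q$; this is the additional ingredient beyond Lemma~\ref{lem:LP_sameWL2samefeasibility} that I will need.

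The core of the argument is to show that the averaging map $x\mapsto\hat x$, defined exactly as before by $\hat x_j=y_q:=\frac{1}{|J_q|}\sum_{j'\in J_q}x_{j'}$ for $j\in J_q$, not only sends feasible points of \eqref{LP} to feasible points of \eqref{LP_hat} (already established in Lemma~\ref{lem:LP_sameWL2samefeasibility}) but also preserves the objective value. Writing $\gamma_q$ for the common value of $c_j=\hat c_j$ on $J_q$, I would compute
\[
\hat c^\top\hat x=\sum_{q=1}^t\sum_{j\in J_q}\hat c_j\,y_q=\sum_{q=1}^t\gamma_q|J_q|\,y_q=\sum_{q=1}^t\gamma_q\sum_{j'\in J_q}x_{j'}=\sum_{q=1}^t\sum_{j'\in J_q}c_{j'}x_{j'}=c^\top x,
\]
where the third equality uses the definition of $y_q$ and the fourth uses $\gamma_q=c_{j'}$ for $j'\in J_q$. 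Thus every feasible $x$ of \eqref{LP} has a feasible counterpart $\hat x$ of \eqref{LP_hat} with the same objective, and symmetrically every feasible point of \eqref{LP_hat} has a counterpart in \eqref{LP} with the same objective.

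Finally I would translate this into equality of optimal values. Since each optimal objective value is the infimum of $c^\top x$ (resp. $\hat c^\top\hat x$) over the respective feasible region, the existence of objective-preserving maps in both directions yields $\Phi_{\text{obj}}(\hat G,\hat H)\le\Phi_{\text{obj}}(G,H)$ and, by symmetry, the reverse inequality, hence equality. This single infimum comparison covers the unbounded case ($-\infty$) and the finite case simultaneously, so no separate treatment is required. I expect the only delicate point to be verifying that $c$ enters the computation solely through its constant value on each $J_q$ — i.e. that the objective genuinely depends only on the variable-class averages $y_q$ — which is precisely why $c_j$ being part of the vertex feature $h_j^W$ (and therefore constant on $J_q$ and equal to $\hat c_j$) is essential; the feasibility half of the construction is inherited verbatim from the previous lemma.
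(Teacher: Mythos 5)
Your proposal is correct and follows essentially the same route as the paper's proof: handle the infeasible case via Lemma~\ref{lem:LP_sameWL2samefeasibility}, then reuse the averaging map $x\mapsto\hat{x}$ from that lemma's proof together with the fact that $c_j=\hat{c}_j$ is constant on each class $J_q$ to get $c^\top x=\hat{c}^\top\hat{x}$, and conclude by comparing infima in both directions. The explicit computation you write out is exactly the identity the paper asserts with ``one can also see that.''
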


\begin{proof}[Proof of Lemma~\ref{lem:LP_sameWL2sameobj}]
    If both problems are infeasible, then their optimal objective values are both $\infty$. We then consider the case that both problems are feasible. We use the same setting and notations as in Lemma~\ref{lem:LP_sameWL2samefeasibility}, and in addition we have that $c_j = \hat{c}_j$, which is part of $h_j^W = \hat{h}_j^W$, is independent of $j\in J_q$ for any $q\in\{1,2,\dots,t\}$. Suppose that $x$ is an feasible solution to the problem \eqref{LP} and let $\hat{x}\in \bR^n$ be defined via $\hat{x}_j = \frac{1}{|J_q|}\sum_{j'\in J_q} x_{j'},\ j\in J_q$. It is guaranteed by the proof of Lemma~\ref{lem:LP_sameWL2samefeasibility} that $\hat{x}$ is a feasible solution to \eqref{LP_hat}. One can also see that $c^\top x = \hat{c}^\top \hat{x}$. Since this holds for any feasible solution $x$ to \eqref{LP}, the optimal value of the objective function for \eqref{LP_hat} is smaller than or equal to that for \eqref{LP}. The inverse is also true and the proof is completed.
\end{proof}

\begin{lemma}\label{lem:LP_sameWL2samesolu}
Suppose that two weighted bipartite graphs with vertex features corresponding to two LP problems are indistinguishable by the WL test and the their optimal objective values are both finite. Then these two problems have the same optimal solution with the smallest $\ell_2$-norm, up to permutation.
\end{lemma}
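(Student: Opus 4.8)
The plan is to reuse the block-averaging construction from the proofs of Lemmas~\ref{lem:LP_sameWL2samefeasibility} and \ref{lem:LP_sameWL2sameobj}, now combined with the strict convexity of the $\ell_2$-norm and the uniqueness of the smallest-norm optimizer recorded in Remark~\ref{rmk:soluLeast2norm}. First I would invoke Theorem~\ref{thm:WLconverge} exactly as before to obtain, after relabeling by permutations $\sigma_V\in S_m$ and $\sigma_W\in S_n$, a common stable partition pair $(\mathcal{I},\mathcal{J})$ with $\mathcal{I}=\{I_1,\dots,I_s\}$ and $\mathcal{J}=\{J_1,\dots,J_t\}$ for both graphs, so that $(b_i,\circ_i)$ and $(c_j,l_j,u_j)$ agree, are block-constant, and the block row and column sums of $A$ and $\hat A$ coincide. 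Since both problems are feasible and bounded, Remark~\ref{rmk:soluLeast2norm} guarantees that each has a unique smallest-$\ell_2$-norm optimizer, which I denote $x^\ast$ for \eqref{LP} and $\hat x^\ast$ for \eqref{LP_hat}; it then suffices to show $x^\ast=\hat x^\ast$ after this relabeling.

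The central step is the \emph{block-constancy claim}: for each problem, the smallest-norm optimizer is constant on every block $J_q$. To prove it for $x^\ast$, I would define its block-average $\bar x$ by $\bar x_j=\frac{1}{|J_q|}\sum_{j'\in J_q}x^\ast_{j'}$ for $j\in J_q$, and then run the averaging computation of Lemma~\ref{lem:LP_sameWL2samefeasibility} with $\hat A$ replaced by $A$ itself, which is legitimate because conditions (iii)--(iv) of Definition~\ref{def:stable_partition} hold for $G$; this shows $\bar x$ is feasible for \eqref{LP}. As in Lemma~\ref{lem:LP_sameWL2sameobj}, block-constancy of $c$ gives $c^\top\bar x=c^\top x^\ast$, so $\bar x$ is again optimal. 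Finally, the blockwise version of the convexity estimate in Remark~\ref{rmk:soluLeast2norm} yields $\|\bar x\|\le\|x^\ast\|$ with strict inequality unless $x^\ast$ is already block-constant; since $x^\ast$ has minimal norm among optimizers, equality is forced and hence $x^\ast=\bar x$ is block-constant. The identical argument applied to \eqref{LP_hat} shows $\hat x^\ast$ is block-constant.

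With block-constancy established, the two problems become interchangeable. Because $x^\ast$ is block-constant the averaging map fixes it, so the construction of Lemma~\ref{lem:LP_sameWL2samefeasibility} certifies that $x^\ast$ is feasible for \eqref{LP_hat}; moreover $\hat c^\top x^\ast=c^\top x^\ast$ equals the common finite optimal value by Lemma~\ref{lem:LP_sameWL2sameobj}, so $x^\ast$ is optimal for \eqref{LP_hat}. By the symmetric statement, $\hat x^\ast$ is optimal for \eqref{LP}. Comparing norms, $\|x^\ast\|\le\|\hat x^\ast\|$ because $x^\ast$ is the smallest-norm optimizer of \eqref{LP} and $\hat x^\ast$ is a feasible optimizer of \eqref{LP}, and symmetrically $\|\hat x^\ast\|\le\|x^\ast\|$, whence $\|x^\ast\|=\|\hat x^\ast\|$. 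Thus $x^\ast$ and $\hat x^\ast$ are two optimizers of \eqref{LP} attaining the minimal norm, so uniqueness in Remark~\ref{rmk:soluLeast2norm} gives $x^\ast=\hat x^\ast$. Undoing the relabeling yields $\Phi_{\text{solu}}(G,H)=\sigma_W(\Phi_{\text{solu}}(\hat G,\hat H))$.

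I expect the main obstacle to be the block-constancy claim, specifically verifying that block-averaging preserves feasibility and objective value for a \emph{single} problem rather than across the pair. Lemmas~\ref{lem:LP_sameWL2samefeasibility} and \ref{lem:LP_sameWL2sameobj} only state the cross-problem version, so I must re-derive the averaging identities with $\hat A$ replaced by $A$, using that $(\mathcal{I},\mathcal{J})$ is a stable partition pair for $G$ in the sense of Definition~\ref{def:stable_partition}; the accompanying strict norm inequality is the blockwise form of the Jensen-type estimate already used in Remark~\ref{rmk:soluLeast2norm} and presents no difficulty once feasibility and optimality of $\bar x$ are in place.
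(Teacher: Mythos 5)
Your proof is correct and rests on the same three ingredients as the paper's own argument: the block-averaging map preserves feasibility and objective value, the Jensen/Cauchy--Schwarz estimate strictly decreases the $\ell_2$-norm unless the point is block-constant, and the minimal-norm optimizer is unique (Remark~\ref{rmk:soluLeast2norm}). The only organizational difference is that the paper extracts block-constancy of $x$ from the equality case of the single chain $\|x'\|\le\|\hat x\|\le\|x\|$ combined with symmetry, whereas you establish block-constancy first via an intra-problem averaging step (which is legitimate, since the stable partition pair is stable for each graph individually) and then transfer the optimizers across; both routes are essentially the same proof.
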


\begin{proof}[Proof of Lemma~\ref{lem:LP_sameWL2samesolu}]
    We work with the same setting as in Lemma~\ref{lem:LP_sameWL2sameobj}, where permutations have already been applied. Let $x$ and $x'$ be the optimal solution to \eqref{LP} and \eqref{LP_hat} with the smallest $\ell_2$-norm, respectively. (Recall that the optimal solution to a LP problem with the smallest $\ell_2$-norm is unique, see Remark~\ref{rmk:soluLeast2norm}.) Let $\hat{x}\in \bR^n$ be defined via $\hat{x}_j = \frac{1}{|J_q|}\sum_{j'\in J_q} x_{j'}$ for $j\in J_q$, $q = 1,2,\dots, t$. According to the arguments in the proof of Lemma~\ref{lem:LP_sameWL2samefeasibility} and Lemma~\ref{lem:LP_sameWL2sameobj}, $\hat{x}$ is an optimal solution to \eqref{LP_hat}. The minimality of $x'$ yields that
    \begin{equation}\label{solu2norm}
        \|x' \|_2^2\leq \|\hat{x}\|_2^2 = \sum_{q=1}^t |J_q| \left(\frac{1}{|J_q|}\sum_{j\in J_q} x_j\right)^2  = \sum_{q=1}^t \frac{1}{|J_q|}  \left(\sum_{j\in J_q} x_j\right)^2\leq  \sum_{q=1}^t \sum_{j\in J_q} x_j^2 = \|x\|_2^2,
    \end{equation}
    which implies $\|x'\|\leq \|x\|$. The converse $\|x\|\leq \|x'\|$ is also true. Therefore, we must have $\|x\| = \|x'\|$ and hence, the inequalities in \eqref{solu2norm} must hold as equalities. Then one can conclude that $x_j = x_{j'}$ for any $j,j'\in J_q$ and $q=1,2,\dots,t$, which leads to $x = \hat{x}$. Furthermore, it follows from $\|x'\| = \|\hat{x}\|$ and the uniqueness of $x'$ (see Remark~\ref{rmk:soluLeast2norm}) that $x' = \hat{x} = x$, which completes the proof.
\end{proof}

One corollary one can see from the proof of Lemma~\ref{lem:LP_sameWL2samesolu} is that the components of the optimal solution with the smallest $\ell_2$-norm must be the same if the two corresponding vertices have the same color in the WL test.

\begin{corollary}
Let $(G,H)$ be a weighted bipartite graph with vertex features and let $x$ be the optimal solution to the corresponding LP problem with the smallest $\ell_2$-norm. Suppose that for some $j,j'\in\{1,2,\dots,n\}$, one has $C_{j}^{l,W}=C_{j'}^{l,W}$ for any $l\in\mathbb{N}$ and any choices of hash functions, then $x_j = x_{j'}$.
\end{corollary}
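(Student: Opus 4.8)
The plan is to reduce the statement to the averaging argument already carried out in the proof of Lemma~\ref{lem:LP_sameWL2samesolu}, specialized to the comparison of the graph $(G,H)$ with itself. The first step is to translate the coloring hypothesis into a statement about the coarsest stable partition pair $(\mathcal{I},\mathcal{J})$ of $(G,H)$, whose existence and uniqueness are recorded after Theorem~\ref{thm:WLconverge}. Concretely, I would claim that $C_j^{l,W}=C_{j'}^{l,W}$ for every $l$ and every choice of hash functions forces $j$ and $j'$ to lie in a common class $J_q\in\mathcal{J}$. Indeed, for any collision-free choice of hash functions Algorithm~\ref{alg:WL} converges exactly to $(\mathcal{I},\mathcal{J})$; were $j$ and $j'$ in distinct classes of $\mathcal{J}$, that particular run would assign them different colors at some iteration, contradicting the hypothesis. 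Hence $j,j'\in J_q$ for some $q$.

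Next I would reproduce the construction from Lemmas~\ref{lem:LP_sameWL2samefeasibility}--\ref{lem:LP_sameWL2samesolu} with $(\hat{G},\hat{H})=(G,H)$. Let $x$ be the smallest-$\ell_2$-norm optimal solution and define $\hat{x}$ by averaging $x$ over each class of $\mathcal{J}$, i.e.\ $\hat{x}_j=\frac{1}{|J_q|}\sum_{j''\in J_q}x_{j''}$ for $j\in J_q$. Because $(\mathcal{I},\mathcal{J})$ is a stable partition pair in the sense of Definition~\ref{def:stable_partition}, the bounds $l_j,u_j$ and the cost $c_j$ are constant on each $J_q$ and the row/column sums of $A$ respect the partition; the computation in the proof of Lemma~\ref{lem:LP_sameWL2samefeasibility} then shows that $\hat{x}$ is feasible, and $c^\top\hat{x}=c^\top x$ shows it is optimal exactly as in Lemma~\ref{lem:LP_sameWL2sameobj}.

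Finally, I would invoke the minimality of $x$. A Jensen/Cauchy--Schwarz estimate gives $\|\hat{x}\|^2=\sum_q|J_q|\big(\frac{1}{|J_q|}\sum_{j\in J_q}x_j\big)^2\le\|x\|^2$, so $\|\hat{x}\|\le\|x\|$; since $\hat{x}$ is also optimal while $x$ has least norm among optimal solutions, equality must hold and the estimate must be tight in each class. Tightness of Jensen forces $x$ to be constant on every $J_q$, whence $x=\hat{x}$ and in particular $x_j=x_{j'}$. The only genuinely delicate point, and the main obstacle, is the first step: one must argue carefully that the quantifier ``for any choice of hash functions'' in the hypothesis is strong enough to place $j$ and $j'$ in the same coarsest stable class, using the fact that collision-free hashing realizes that coarsest partition. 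Everything after that is a verbatim specialization of the arguments already established.
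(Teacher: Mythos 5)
Your proposal is correct and is essentially the paper's own argument: the paper derives this corollary directly from the proof of Lemma~\ref{lem:LP_sameWL2samesolu}, where equality in the chain \eqref{solu2norm} already forces $x_j = x_{j'}$ for all $j,j'$ in the same class $J_q$ of the stable partition. Your added care in the first step --- using the fact that a collision-free run of Algorithm~\ref{alg:WL} terminates at the unique coarsest stable partition pair to place $j$ and $j'$ in a common class $J_q$ --- is a detail the paper leaves implicit, and it is handled correctly.
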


Let us also define another equivalence relationship on $\calG_{m,n}\times\calH^V_m\times\calH^W_n$ where colors of $w_1,w_2,\dots,w_j$ with ordering (not just multisets) are considered:

\begin{definition}\label{cor:LP_sameWL2samecomponent}
    Given $(G, H),(\hat{G},\hat{H})\in \calG_{m,n}\times\calH^V_m\times\calH^W_n$, $(G, H)$ and $(\hat{G},\hat{H})$ are not in the same equivalence class of $\ensuremath{\stackrel{W}{\sim}}$ if and only if there exist some $L\in\mathbb{N}$ and some hash functions $\text{HASH}_{0,V}$, $\text{HASH}_{0,W}$, $\text{HASH}_{l,V}$, $\text{HASH}_{l,W}$, $\text{HASH}_{l,V}'$, and $\text{HASH}_{l,W}'$, $l=1,2,\dots, L$, such that $\{\{C_1^{L,V}, C_2^{L,V},\dots, C_m^{L,V}\}\}\neq \{\{\hat{C}_1^{L,V}, \hat{C}_2^{L,V},\dots, \hat{C}_m^{L,V}\}\}$ or $C_j^{l,W}\neq \hat{C}_j^{l,W}$ for some $j\in \{1,2,\dots,n\}$.
\end{definition}

It is clear that $(G,H) \ensuremath{\stackrel{W}{\sim}} (\hat{G},\hat{H})$ implies $(G,H)\sim (\hat{G},\hat{H})$. One can actually obtain a stronger version of Lemma~\ref{lem:LP_sameWL2samesolu} given $(G,H) \ensuremath{\stackrel{W}{\sim}} (\hat{G},\hat{H})$.

\begin{corollary}\label{cor:LP_sameWL2samesolu}
Suppose that two weighted bipartite graphs with vertex features corresponding to two LP problems, $(G,H)$ and $(\hat{G},\hat{H})$, satisfies that $(G,H) \ensuremath{\stackrel{W}{\sim}} (\hat{G},\hat{H})$. Then these two problems have the same optimal solution with the smallest $\ell_2$-norm.
\end{corollary}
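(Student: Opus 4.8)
The plan is to deduce the stronger (permutation-free) statement from the within-graph component-equality corollary established right after Lemma~\ref{lem:LP_sameWL2samesolu}, by passing to a disjoint union of the two LP-graphs so that a cross-graph comparison of two variables becomes a within-graph comparison. First I would record what the hypothesis $(G,H)\stackrel{W}{\sim}(\hat G,\hat H)$ supplies. By Definition~\ref{cor:LP_sameWL2samecomponent}, for every $L$ and every choice of hash functions the multisets of $V$-colors agree and, crucially, the $W$-colors agree \emph{index by index}: $C_j^{l,W}=\hat C_j^{l,W}$ for all $j$ and all $l$. Since $(G,H)\stackrel{W}{\sim}(\hat G,\hat H)$ implies $(G,H)\sim(\hat G,\hat H)$, Lemma~\ref{lem:LP_sameWL2samefeasibility} and Lemma~\ref{lem:LP_sameWL2sameobj} guarantee that both LPs share the same optimal objective value; as we are in the regime where this value is finite, both problems are feasible and bounded, so by Remark~\ref{rmk:soluLeast2norm} the smallest-$\ell_2$-norm optimal solutions $x=\Phi_{\text{solu}}(G,H)$ and $x'=\Phi_{\text{solu}}(\hat G,\hat H)$ are well-defined and unique. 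The goal is to upgrade the ``up to permutation'' conclusion of Lemma~\ref{lem:LP_sameWL2samesolu} to the exact identity $x=x'$.

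Next I would form the disjoint union $(\tilde G,\tilde H)\in\calG_{2m,2n}$ of $(G,H)$ and $(\hat G,\hat H)$: it carries the $2m$ constraint vertices and $2n$ variable vertices of the two graphs with no edges across the two blocks, and it corresponds to the block-diagonal LP whose objective is $c^\top x+\hat c^\top\hat x$ and whose constraints are those of \eqref{LP} on the first $n$ variables together with those of \eqref{LP_hat} on the last $n$. Because the feasible region of this combined LP is the Cartesian product of the two feasible regions, its objective is additive, and $\|(x,\hat x)\|^2=\|x\|^2+\|\hat x\|^2$ separates, the minimization over optimal solutions decouples; hence the unique smallest-$\ell_2$-norm optimal solution of the union is exactly the concatenation $\tilde x=(x,x')$.

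The key step is to transfer the index-wise color agreement to the union. I would verify by induction on $l$ that running Algorithm~\ref{alg:WL} on $(\tilde G,\tilde H)$ with any fixed hash functions assigns to the variable vertex coming from $G$ exactly the color $C_j^{l,W}$ it receives when the algorithm is run on $(G,H)$ alone, and likewise $\hat C_j^{l,W}$ to the variable vertex coming from $\hat G$; this is simply the locality of color refinement, since every vertex aggregates only over its own block's neighbors with the same edge weights, so the two blocks never interact. Combining this with $C_j^{l,W}=\hat C_j^{l,W}$ shows that in $(\tilde G,\tilde H)$ the variables indexed $j$ and $n+j$ receive the same color for every $l$ and every choice of hash functions. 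Applying the corollary following Lemma~\ref{lem:LP_sameWL2samesolu} to the single graph $(\tilde G,\tilde H)$ then yields $\tilde x_j=\tilde x_{n+j}$; since $\tilde x=(x,x')$, this is precisely $x_j=x'_j$. As $j\in\{1,\dots,n\}$ was arbitrary, we conclude $x=x'$, i.e.\ $\Phi_{\text{solu}}(G,H)=\Phi_{\text{solu}}(\hat G,\hat H)$.

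The main obstacle is the locality verification of the previous paragraph: one must check that the globally applied hash functions do not cause the refinement of the union to behave differently from the two separate runs. This holds because color updates depend only on a vertex's current color and the weighted multiset of its neighbors' colors, and disjoint blocks share neither edges nor neighbors; the induction then goes through once the base case $C_j^{0,W}=\mathrm{HASH}_{0,W}(h_j^W)$ (identical in the union and in $G$) is noted. A secondary point needing care is the decoupling claim $\tilde x=(x,x')$, which relies on the product structure of the feasible set and on the additivity of the squared $\ell_2$-norm. As an alternative that bypasses the union, one could instead revisit the proof of Lemma~\ref{lem:LP_sameWL2samesolu} and observe that under $\stackrel{W}{\sim}$ the $W$-colorings already align without any permutation, so the averaging vector built there satisfies $x=\hat x=x'$ directly; I prefer the union argument since it uses the component-equality corollary as a black box.
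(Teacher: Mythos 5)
Your proof is correct, and it reaches the conclusion by a genuinely different route than the paper. The paper's own proof is a one-line remark: it simply re-runs the proof of Lemma~\ref{lem:LP_sameWL2samesolu}, observing that under $\stackrel{W}{\sim}$ the colorings of the variable vertices of the two graphs already agree index by index, so the averaging vector $\hat{x}$ constructed there coincides coordinatewise with both $x$ and $x'$ and no permutation ever enters --- this is exactly the ``alternative'' you sketch in your last sentence. Your main argument instead converts the cross-graph comparison into a within-graph one: you form the disjoint union (a legitimate LP-graph in $\calG_{2m,2n}\times\calH^V_{2m}\times\calH^W_{2n}$, with zero edge weights across the two blocks), verify by induction that WL refinement is local to the blocks so the union's colors are exactly those of the two separate runs under the same hash functions, conclude from $\stackrel{W}{\sim}$ that $w_j$ and $w_{n+j}$ share a color for every iteration and every hash choice, and then invoke the unlabeled component-equality corollary following Lemma~\ref{lem:LP_sameWL2samesolu} together with the decoupling of the block-diagonal LP (product feasible set, additive objective, additive squared $\ell_2$-norm, hence unique smallest-norm optimum $(x,x')$). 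Each of these steps checks out, including the two points you flag as needing care. What the union argument buys is modularity --- it treats the component-equality corollary as a black box and never reopens the averaging construction; what it costs is the extra bookkeeping of the locality induction and the decoupling claim, both of which the paper's in-place rerun of Lemma~\ref{lem:LP_sameWL2samesolu} avoids.
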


\begin{proof}[Proof of Corollary~\ref{cor:LP_sameWL2samesolu}]
    The proof of Lemma~\ref{lem:LP_sameWL2samesolu} still applies with the difference that there is no permutation on $\{w_1,w_2,\dots,w_n\}$.
\end{proof}

\begin{proof}[Proof of Theorem~\ref{thm:LP_sameWL2sameproperties-main-txt}]
    The proof of Theorem~\ref{thm:LP_sameWL2sameproperties-main-txt} follows immediately from Lemmas \ref{lem:LP_sameWL2samefeasibility}, \ref{lem:LP_sameWL2sameobj} and \ref{lem:LP_sameWL2samesolu}.
\end{proof}

\section{Separation Power of Graph Neural Networks}
This section aims to prove Theorem \ref{thm:separate_GNN-main-txt}, i.e., the separation power of GNNs is equivalent to that of the WL test. Similar results can be found in previous literature, see e.g. \cites{xu2019powerful,azizian2020expressive,geerts2022expressiveness}. We first introduce some lemmas that can directly imply Theorem \ref{thm:separate_GNN-main-txt}.
The lemma below, similar to \cite{xu2019powerful}*{Lemma 2}, states that the separation power of GNNs is at most that of the WL test.

\begin{lemma}\label{lem:sameWL2sameWGNN}
	Let $(G, H),(\hat{G},\hat{H})\in \calG_{m,n}\times\calH^V_m\times\calH^W_n$. If $(G,H)\sim(\hat{G},\hat{H})$, then for any $F_W\in\calF_{\text{GNN}}^W$, there exists a permutation $\sigma_W\in S_n$ such that $F_W(G,H) = \sigma_W( F_W(\hat{G},\hat{H}))$.
\end{lemma}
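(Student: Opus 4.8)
The plan is to exploit the structural parallel between the GNN updates \eqref{eq:gnn-update-v}--\eqref{eq:gnn-update-w} and the refinement steps of Algorithm~\ref{alg:WL}, and to show that the GNN hidden features are always coarser than (in fact, functions of) the WL colors. The first step is to fix one convenient collision-free choice of injective hash functions; crucially, I would take each $\text{HASH}'_{l,V},\text{HASH}'_{l,W}$ to map the finitely many colors that occur in $(G,H)$ and $(\hat G,\hat H)$ to \emph{linearly independent} vectors in the target linear space. Since $(G,H)\sim(\hat G,\hat H)$ means the two graphs cannot be distinguished for \emph{any} admissible hash functions and \emph{any} iteration count, this particular run of Algorithm~\ref{alg:WL}, taken with $L$ equal to the depth of the given GNN $F_W$, must produce identical color multisets: $\{\{C_i^{L,V}\}\}_i=\{\{\hat C_i^{L,V}\}\}_i$ and $\{\{C_j^{L,W}\}\}_j=\{\{\hat C_j^{L,W}\}\}_j$.

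Next I would prove, by induction on $l=0,1,\dots,L$, that there exist graph-independent maps $\xi_l^V,\xi_l^W$, defined on the colors produced by the fixed hashes, such that $h_i^{l,V}=\xi_l^V(C_i^{l,V})$ and $h_j^{l,W}=\xi_l^W(C_j^{l,W})$ hold simultaneously for the vertices of both graphs. The base case $l=0$ is immediate from \eqref{eq:gnn-in} and the injectivity of $\text{HASH}_{0,V},\text{HASH}_{0,W}$. For the inductive step, the color $C_i^{l,V}$ is the injective image of the pair $\bigl(C_i^{l-1,V},\ \sum_j E_{i,j}\text{HASH}'_{l,W}(C_j^{l-1,W})\bigr)$, so it determines $C_i^{l-1,V}$ together with the aggregated vector $\sum_j E_{i,j}\text{HASH}'_{l,W}(C_j^{l-1,W})$; the linear independence of the $\text{HASH}'$-images makes knowing this vector equivalent to knowing, for every color $c$, the weighted degree $\sum_{j:\,C_j^{l-1,W}=c}E_{i,j}$. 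From these weighted degrees and the inductive hypothesis one recovers the GNN aggregate as $\sum_j E_{i,j}f_l^W(h_j^{l-1,W})=\sum_c\bigl(\sum_{j:\,C_j^{l-1,W}=c}E_{i,j}\bigr)f_l^W(\xi_{l-1}^W(c))$, and feeding this into $g_l^V$ exhibits $h_i^{l,V}$ as a function of $C_i^{l,V}$ alone; the $W$-side is symmetric. The maps $\xi_l$ are well defined across both graphs at once because the hashes are fixed globally and the update functions $g_l^V,g_l^W,f_l^V,f_l^W$ are graph-independent.

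Finally I would combine the two ingredients. Since the $V$-color multisets coincide and $h^{L,V}=\xi_L^V(C^{L,V})$ with the same $\xi_L^V$ for both graphs, the multisets of $V$-features coincide, whence the invariant sum satisfies $\sum_i h_i^{L,V}=\sum_i\hat h_i^{L,V}$; likewise $\sum_j h_j^{L,W}=\sum_j\hat h_j^{L,W}$. From the equality of $W$-color multisets I can pick $\sigma_W\in S_n$ aligning them, so that $h_j^{L,W}=\hat h_{\sigma_W(j)}^{L,W}$ for all $j$. Substituting into the vertex-output map \eqref{eq:gnn-out-vertex}, whose first two arguments are the equal invariant sums and whose third argument is permuted by $\sigma_W$, yields the claimed identity $F_W(G,H)=\sigma_W(F_W(\hat G,\hat H))$.

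I expect the inductive step to be the main obstacle, specifically the assertion that the WL aggregate determines \emph{every} GNN aggregate of the same messages. This is exactly where the freedom to choose $\text{HASH}'$ with linearly independent images is indispensable, and it carries the genuine content that the separation power of GNNs is at most that of the WL test; the remaining bookkeeping (extracting a single permutation from a multiset equality and checking that the two global sums agree) is routine once the induction is in place.
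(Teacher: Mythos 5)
Your proposal is correct and follows essentially the same route as the paper's proof: fix injective hash functions with linearly independent $\text{HASH}'$-images, then induct on the layer to show that the GNN features are determined by the WL colors (your ``$h=\xi(C)$'' formulation is logically identical to the paper's six claims that equal colors, within or across the two graphs, force equal features), and conclude by matching multisets and extracting the permutation at the output layer. No gaps to report.
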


\begin{proof}[Proof of Lemma~\ref{lem:sameWL2sameWGNN}]First we describe the sketch of our proof. The assumption $(G,H)\sim(\hat{G},\hat{H})$ implies that, if we apply the WL test on $(G,H)$ and $(\hat{G},\hat{H})$, the test results should be exactly the same whatever the hash functions in the WL test we choose. In the first step, we define a set of hash functions that are injective on all possible inputs. Second, we show that, if we apply an arbitrarily chosen GNN: $F_W\in\calF_{\text{GNN}}^W$ on $(G,H)$ and $(\hat{G},\hat{H})$, the vertex features 
of the two graphs are exactly the same up to permutation, given the fact that the WL test results are the same. Finally, it concludes that $F_W(G,H)$ should be the same with $F_W(\hat{G},\hat{H})$ up to permutation.

    Let us first define hash functions. 
    We choose $\text{HASH}_{0,V}$ and $\text{HASH}_{0,W}$ that are injective on the following sets (not multisets) respectively: \[\{h_1^V,\dots,h_m^V,\hat{h}_1^V,\dots,\hat{h}_m^V\} ~~\text{and}~~\{h_1^W,\dots,h_n^W,\hat{h}_1^W,\dots,\hat{h}_n^W\}.\] 
    Let $\{C_i^{l-1,V}\}_{i=1}^m, \{C_j^{l-1,W}\}_{j=1}^n$ and $\{\hat{C}_i^{l-1,V}\}_{i=1}^m, \{\hat{C}_j^{l-1,W}\}_{j=1}^n$ 
    be the vertex colors in the $(l-1)$-th iteration $(1\leq l \leq L)$ in the WL test for $(G,H)$ and $(\hat{G},\hat{H})$ respectively. Define two sets (not multisets) that collect different colors:
    \begin{equation*}
        \mathbf{C}_{l-1}^V = \{C_1^{l-1,V},\dots, C_m^{l-1,V},\hat{C}_1^{l-1,V},\dots, \hat{C}_m^{l-1,V}\},
    \end{equation*}
    and
    \begin{equation*}
        \mathbf{C}_{l-1}^W = \{C_1^{l-1,W},\dots, C_n^{l-1,W},\hat{C}_1^{l-1,W},\dots, \hat{C}_n^{l-1,W}\}.
    \end{equation*}
    The hash function $\text{HASH}_{l,V}'$ and $\text{HASH}_{l,W}'$ are chosen such that the outputs are located in some linear spaces and that $\{\text{HASH}_{l,V}'(C) : C\in \mathbf{C}_{l-1}^V\}$ and $\{\text{HASH}_{l,W}'(C) : C\in \mathbf{C}_{l-1}^W\}$ are both linearly independent. Finally, we choose hash functions $\text{HASH}_{l,V}$ and $\text{HASH}_{l,W}$ such that $\text{HASH}_{l,V}$ is injective on the set (not multiset)
    \begin{multline*}
        \left\{\left(C_i^{l-1,V}, \sum_{j=1}^n E_{i,j}\text{HASH}_{l,W}'\left(C_j^{l-1, W}\right)\right) : 1\leq i\leq m\right\} \\
        \cup \left\{\left(\hat{C}_i^{l-1,V}, \sum_{j=1}^n E_{i,j}\text{HASH}_{l,W}'\left(\hat{C}_j^{l-1, W}\right)\right) : 1\leq i\leq m\right\},
    \end{multline*}
    and that $\text{HASH}_{l,W}$ is injective on the set (not multiset)
    \begin{multline*}
        \left\{\left(C_j^{l-1,W}, \sum_{i=1}^n E_{i,j} \text{HASH}_{l,V}'\left(C_i^{l-1, V} \right)\right) : 1\leq j\leq n\right\} \\
        \cup \left\{\left(\hat{C}_j^{l-1,W}, \sum_{i=1}^n E_{i,j} \text{HASH}_{l,V}'\left(\hat{C}_i^{l-1, V} \right)\right) : 1\leq j\leq n\right\}.
    \end{multline*}
    Those hash functions give the vertex colors at the next iteration ($l$-th layer):
    $\{C_i^{l,V}\}_{i=1}^m, \{C_j^{l,W}\}_{j=1}^n$ and $\{\hat{C}_i^{l,V}\}_{i=1}^m, \{\hat{C}_j^{l,W}\}_{j=1}^n$.
    
    Consider any $F_W\in\calF^W_{\text{GNN}}$ 
    and let $\{h_i^{l-1,V}\}_{i=1}^m, \{h_j^{l-1,W}\}_{j=1}^n$ and $\{\hat{h}_i^{l-1,V}\}_{i=1}^m, \{\hat{h}_j^{l-1,W}\}_{j=1}^n$
    be the vertex features in the $l$-th layer ($0\leq l\leq L$) of the graph neural network $F_W$. (Update rule refers to equations \eqref{eq:gnn-in},\eqref{eq:gnn-update-v},\eqref{eq:gnn-update-w},\eqref{eq:gnn-out-vertex}) We aim to prove by induction that for any $l\in\{0,1,\dots,L\}$, the followings hold:
    \begin{itemize}[leftmargin=8mm]
    	\item[(i)] $C_i^{l,V}=C_{i'}^{l,V}$ implies $h_i^{l,V}=h_{i'}^{l,V}$, for $1\leq i,i'\leq m$;
    	\item[(ii)] $\hat{C}_i^{l,V}=\hat{C}_{i'}^{l,V}$ implies $\hat{h}_i^{l,V}=\hat{h}_{i'}^{l,V}$, for $1\leq i,i'\leq m$;
    	\item[(iii)] $C_i^{l,V}=\hat{C}_{i'}^{l,V}$ implies $h_i^{l,V}=\hat{h}_{i'}^{l,V}$, for $1\leq i,i'\leq m$;
    	\item[(iv)] $C_j^{l,W}=C_{j'}^{l,W}$ implies $h_j^{l,W}=h_{j'}^{l,W}$, for $1\leq j,j'\leq n$;
    	\item[(v)] $\hat{C}_j^{l,W}=\hat{C}_{j'}^{l,W}$ implies $\hat{h}_j^{l,W}=\hat{h}_{j'}^{l,W}$, for $1\leq j,j'\leq n$;
    	\item[(vi)] $C_j^{l,W}=\hat{C}_{j'}^{l,W}$ implies $h_j^{l,W}=\hat{h}_{j'}^{l,W}$, for $1\leq j,j'\leq n$.
    \end{itemize}
    The above claims (i)-(vi) are clearly true for $l=0$ due to the injectivity of $\text{HASH}_{0,V}$ and $\text{HASH}_{0,W}$. Now we assume that (i)-(vi) are true for some $l - 1 \in\{0,1,\dots,L-1\}$. Suppose that $C_i^{l,V}=C_{i'}^{l,V}$, i.e.,
    \begin{multline*}
        \text{HASH}_{l, V} \left(C_i^{l-1,V}, \sum_{j=1}^n E_{i,j}\text{HASH}_{l,W}'\left(C_j^{l-1, W}\right)\right)  \\
        = \text{HASH}_{l, V} \left(C_{i'}^{l-1,V}, \sum_{j=1}^n E_{i',j}\text{HASH}_{l,W}'\left(C_j^{l-1, W}\right)\right),
    \end{multline*}
    for some $1\leq i,i'\leq m$. It follows from the injectivity of $\text{HASH}_{l, V}$ that
    \begin{equation}\label{equal_colorl-1}
        C_i^{l-1,V}=C_{i'}^{l-1,V},
    \end{equation}
    and
    \begin{equation*}
        \sum_{j=1}^n E_{i,j}\text{HASH}_{l,W}'\left(C_j^{l-1, W}\right) = \sum_{j=1}^n E_{i',j}\text{HASH}_{l,W}'\left(C_j^{l-1, W}\right).
    \end{equation*}
    According to the linearly independent property of $\text{HASH}_{l,W}'$, the above equation implies that
    \begin{equation}\label{equal_weightsum}
        \sum_{C_j^{l-1, W} = C} E_{i,j} = \sum_{C_j^{l-1, W} = C} E_{i',j},\quad \forall~C\in \mathbf{C}_{l-1}^W.
    \end{equation}
    Note that the induction assumption guarantees that $h_j^{l-1,W} = h_{j'}^{l-1,W}$ as long as $C_j^{l-1,W} = C_{j'}^{l-1,W}$. So one can assign for each $C\in \mathbf{C}_{l-1}^W$ some $h(C)\in \bR^{d_{l-1}}$ such that $h_j^{l-1,W} = h(C)$ as long as $C_j^{l-1,W} = C$ for any $1\leq j\leq n$. Therefore, it follows from \eqref{equal_weightsum} that
    \begin{align*}
        \sum_{j=1}^n E_{i,j}f_l^W (h_j^{l-1, W}) = & \sum_{C\in \mathbf{C}_{l-1}^W} \sum_{C_j^{l-1, W} = C} E_{i,j} f_l^W (h(C)) \\
        = & \sum_{C\in \mathbf{C}_{l-1}^W} \sum_{C_j^{l-1, W} = C} E_{i',j} f_l^W (h(C)) =  \sum_{j=1}^n E_{i',j}f_l^W (h_j^{l-1, W}).
    \end{align*}
    Note also that \eqref{equal_colorl-1} and the induction assumption lead to $h_i^{l-1,V}=h_{i'}^{l-1,V}$. Then one can conclude that
    \begin{equation*}
        h_i^{l,V} = g_l^V\left( h_i^{l-1,V},\sum_{j=1}^n E_{i,j} f_l^W(h_j^{l-1, W}) \right) = g_l^V\left( h_{i'}^{l-1,V},\sum_{j=1}^n E_{i',j} f_l^W(h_j^{l-1, W}) \right) = h_{i'}^{l,V}.
    \end{equation*}
    This proves the claim (i) for $l$. The other five claims can be proved using similar arguments.
    
	Therefore, we obtain from $(G,H)\sim(\hat{G},\hat{H})$ that
	\begin{equation*}
	    \left\{\left\{h_1^{L,V}, h_2^{L,V},\dots, h_m^{L,V}\right\}\right\} = \left\{\left\{\hat{h}_1^{L,V}, \hat{h}_2^{L,V},\dots, \hat{h}_m^{L,V}\right\}\right\},
	\end{equation*}
	and that 
	\begin{equation*}
	    \left\{\left\{h_1^{L,W}, h_2^{L,W},\dots, h_n^{L,W}\right\}\right\} = \left\{\left\{\hat{h}_1^{L,W}, \hat{h}_2^{L,W},\dots, \hat{h}_n^{L,W}\right\}\right\}.
	\end{equation*}
	By the definition of the output layer, the above conclusion guarantees that $F_W(G,H) = \sigma_W(F_W(\hat{G},\hat{H}))$ for some $\sigma_W\in S_n$.
\end{proof}

\begin{lemma}\label{lem:sameWGNN2sameGNN}
	Let $(G, H),(\hat{G},\hat{H})\in \calG_{m,n}\times\calH^V_m\times\calH^W_n$. Suppose that for any $F_W\in\calF_{\text{GNN}}^W$, there exists a permutation $\sigma_W\in S_n$ such that $F_W(G,H) = \sigma_W( F_W(\hat{G},\hat{H}))$. Then $F(G,H) = F(\hat{G},\hat{H})$ holds for any $F\in \calF_{\text{GNN}}$.
\end{lemma}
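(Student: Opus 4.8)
The plan is to reduce any single-output GNN to a vertex-output GNN that produces a \emph{constant} output vector, and then exploit the hypothesis. The crucial observation is that the single-output formula \eqref{eq:gnn-out-single} and the vertex-output formula \eqref{eq:gnn-out-vertex} are built from the same two aggregated quantities $\sum_{i=1}^m h_i^{L,V}$ and $\sum_{j=1}^n h_j^{L,W}$; the vertex-output version merely carries an additional per-vertex argument $h_j^{L,W}$. This suggests that a single-output GNN is nothing but a vertex-output GNN whose output function discards the per-vertex argument and returns the same number at every $w_j$.

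First I would fix an arbitrary $F\in\calF_{\text{GNN}}$ and record its learnable functions $f_{\mathrm{in}}^V,f_{\mathrm{in}}^W,\{f_l^V,f_l^W,g_l^V,g_l^W\}_{l=1}^L$ together with its output function $f_{\text{out}}$. I would then define a vertex-output GNN $F_W\in\calF_{\text{GNN}}^W$ reusing exactly the same embedding and message-passing functions, and taking as its output function $f^W_{\text{out}}(s_1,s_2,s_3):=f_{\text{out}}(s_1,s_2)$, which ignores its third argument and is a legitimate continuous function on the prescribed domain. Since the update rules \eqref{eq:gnn-in}, \eqref{eq:gnn-update-v}, \eqref{eq:gnn-update-w} coincide for $F$ and $F_W$, both networks generate identical hidden states $h_i^{L,V}$ and $h_j^{L,W}$ on any common input, and in particular the same aggregates $\sum_{i=1}^m h_i^{L,V}$ and $\sum_{j=1}^n h_j^{L,W}$.

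Then I would observe that for each $j$ the vertex output equals $f^W_{\text{out}}\bigl(\sum_i h_i^{L,V},\sum_j h_j^{L,W},h_j^{L,W}\bigr)=f_{\text{out}}\bigl(\sum_i h_i^{L,V},\sum_j h_j^{L,W}\bigr)=F(G,H)$, so $F_W(G,H)$ is the constant vector $(F(G,H),\dots,F(G,H))\in\bR^n$, and likewise $F_W(\hat G,\hat H)=(F(\hat G,\hat H),\dots,F(\hat G,\hat H))$. Applying the hypothesis, there is $\sigma_W\in S_n$ with $F_W(G,H)=\sigma_W(F_W(\hat G,\hat H))$; but a permutation acts trivially on a constant vector, so $F_W(G,H)=F_W(\hat G,\hat H)$ component-wise, which forces $F(G,H)=F(\hat G,\hat H)$. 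As $F$ was arbitrary, this would complete the proof.

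The step I expect to demand the most care is the bookkeeping in the second paragraph: checking that the forgetful output function $f^W_{\text{out}}$ really defines an admissible member of $\calF_{\text{GNN}}^W$, and, more importantly, that reusing identical message-passing functions genuinely forces the two networks to share the same aggregated hidden states feeding the output layer. Once this is pinned down, the collapse of the permutation on a constant vector makes the conclusion immediate, so I do not anticipate any deeper obstacle.
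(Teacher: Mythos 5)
Your proof is correct and follows essentially the same route as the paper: the paper likewise picks, for an arbitrary $F\in\calF_{\text{GNN}}$, a vertex-output GNN $F_W$ whose output is the constant vector $(F(G',H'),\dots,F(G',H'))^\top$ and then notes that a permutation acts trivially on such a vector. Your version is simply a more explicit realization of that construction, spelling out that $f^W_{\text{out}}(s_1,s_2,s_3):=f_{\text{out}}(s_1,s_2)$ is an admissible output function sharing the same hidden states.
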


\begin{proof}[Proof of Lemma~\ref{lem:sameWGNN2sameGNN}]
    Pick an arbitrary $F\in \calF_{\text{GNN}}$. 
    We choose $F_W\in \calF_{\text{GNN}}^W$ such that 
    \begin{equation*}
        F_W(G',H') = \left(F(G',H'),\dots, F(G',H')\right)^\top\in\bR^n,\quad\forall~(G',H')\in \calG_{m,n}\times\calH^V_m\times\calH^W_n.
    \end{equation*}
    Note that every entry in the output of $F_W$ is equal to the output of $F$. Thus, it follows from $F_W(G,H) = \sigma_W( F_W(\hat{G},\hat{H}))$ that $F(G,H) = F(\hat{G},\hat{H})$.
\end{proof}

The next lemma is similar to \cite{xu2019powerful}*{Theorem 3} and states that the separation power of GNNs is at least that of the WL test.

\begin{lemma}\label{lem:sameGNN2sameWL}
	Let $(G, H),(\hat{G},\hat{H})\in \calG_{m,n}\times\calH^V_m\times\calH^W_n$. If $F(G,H) = F(\hat{G},\hat{H})$ holds for any $F\in \calF_{\text{GNN}}$, then $(G,H)\sim (\hat{G},\hat{H})$.
\end{lemma}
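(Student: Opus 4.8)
The plan is to prove the contrapositive: assuming $(G,H)\not\sim(\hat{G},\hat{H})$, I would construct a single-output GNN $F\in\calF_{\text{GNN}}$ with $F(G,H)\neq F(\hat{G},\hat{H})$, which contradicts the hypothesis. By the definition of $\sim$, the relation $(G,H)\not\sim(\hat{G},\hat{H})$ means there exist an iteration count $L$ and a choice of hash functions for which $\mathrm{WL}_{\mathrm{LP}}$ separates the two graphs, i.e. the multisets of colors at iteration $L$ differ. I fix this $L$ and these hash functions once and for all, and run Algorithm~\ref{alg:WL} on both graphs simultaneously, so that at every layer $l$ the colors occurring across \emph{both} graphs form the finite sets $\mathbf{C}_l^V$ and $\mathbf{C}_l^W$, exactly as in the proof of Lemma~\ref{lem:sameWL2sameWGNN}.

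The heart of the argument is to show by induction on $l$ that the learnable functions of a GNN can be chosen so that the hidden states encode the WL colors injectively: there exist injective maps $\psi_l^V,\psi_l^W$ with $h_i^{l,V}=\psi_l^V(C_i^{l,V})$, $h_j^{l,W}=\psi_l^W(C_j^{l,W})$, and likewise for the hatted graph, so that two embeddings coincide precisely when the corresponding colors coincide. The base case sets $f_{\mathrm{in}}^V=\psi_0^V\circ\mathrm{HASH}_{0,V}$ and $f_{\mathrm{in}}^W=\psi_0^W\circ\mathrm{HASH}_{0,W}$ for injective $\psi_0^V,\psi_0^W$. For the inductive step I would first choose $f_l^W$ on the finite image of $\psi_{l-1}^W$ to reproduce $\mathrm{HASH}_{l,W}'$ with linearly independent outputs, exactly as in Lemma~\ref{lem:sameWL2sameWGNN}; then the aggregation term in \eqref{eq:gnn-update-v} equals the WL aggregation $\sum_{j}E_{i,j}\mathrm{HASH}_{l,W}'(C_j^{l-1,W})$ verbatim, and linear independence lets this value recover $\sum_{C_j^{l-1,W}=C}E_{i,j}$ for every color $C\in\mathbf{C}_{l-1}^W$. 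I then choose $g_l^V$ on the finite set of inputs that actually arise so that $g_l^V(\psi_{l-1}^V(C),a)=\psi_l^V(\mathrm{HASH}_{l,V}(C,a))$ with $\psi_l^V$ injective, using injectivity of $\mathrm{HASH}_{l,V}$; the $W$-updates are symmetric. Since only finitely many inputs ever occur, each such prescription extends to a continuous function on the full Euclidean (respectively feature) domain, so these functions are admissible under the paper's assumption that learnable functions range over all continuous maps, and hence $F\in\calF_{\text{GNN}}$.

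For the output layer I would take $\psi_L^V$ and $\psi_L^W$ to send distinct colors to distinct standard basis vectors, so that $\sum_i h_i^{L,V}=\sum_i\psi_L^V(C_i^{L,V})$ is the vector of color-counts, which determines the multiset $\{\{C_i^{L,V}\}\}$, and similarly for $W$. Because the WL test separates the two graphs at iteration $L$, at least one of the color multisets differs, whence $(\sum_i h_i^{L,V},\sum_j h_j^{L,W})\neq(\sum_i\hat{h}_i^{L,V},\sum_j\hat{h}_j^{L,W})$. Choosing a continuous $f_{\mathrm{out}}$ that separates these two distinct points of $\bR^{d_L}\times\bR^{d_L}$ then gives $F(G,H)\neq F(\hat{G},\hat{H})$ through \eqref{eq:gnn-out-single}, which is the desired contradiction.

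I expect the main obstacle to be guaranteeing that no information is lost at the two summation steps, which are the only places where the GNN could conflate distinct WL colorings. The intermediate weighted sums are controlled by forcing the images of $f_l^W$ and $f_l^V$ to be linearly independent, so the sum encodes the weighted color-counts; the final sum-pooling is controlled by the basis-vector encoding, so the sum encodes the full color multiset. The remaining work is purely bookkeeping, namely verifying that each map prescribed only on the finite collection of colors and embeddings arising from the two fixed graphs extends to a genuinely continuous function on its domain.
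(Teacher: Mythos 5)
Your proposal is correct and follows essentially the same route as the paper's proof: argue the contrapositive, build the GNN layer by layer so that the hidden states track the WL colors injectively, use linearly independent images of $f_l^W$ (resp.\ $f_l^V$) so the weighted sums retain the per-color edge-weight totals, and finally encode the terminal colors so that sum-pooling determines the color multisets. The only (inessential) difference is the readout: you encode colors as standard basis vectors so the pooled sum is the count vector and let $f_{\mathrm{out}}$ separate the two resulting distinct points, whereas the paper appends an extra layer mapping the distinct values to powers $k^q$ with $k$ exceeding the maximal multiplicity and invokes uniqueness of $k$-ary expansions; both are standard injective multiset encodings.
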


\begin{proof}[Proof of Lemma~\ref{lem:sameGNN2sameWL}]
	It suffices to prove that, if $(G,H)$ can be distinguished from $(\hat{G},\hat{H})$ by the WL test, then there exists $F\in \calF_{\text{GNN}}$, such that $F(G,H)\neq F(\hat{G},\hat{H})$. The distinguish-ability of the WL test implies that there exists $L\in\mathbb{N}$ and hash functions, $\text{HASH}_{0,V}$, $\text{HASH}_{0,W}$, $\text{HASH}_{l,V}$, $\text{HASH}_{l,W}$, $\text{HASH}_{l,V}'$, and $\text{HASH}_{l,W}'$, for $l=1,2,\dots, L$, such that
	\begin{equation}\label{diff_WL_V}
		\left\{\left\{C_1^{L,V},C_2^{L,V},\dots, C_m^{L,V}\right\}\right\} \neq \left\{\left\{\hat{C}_1^{L,V},\hat{C}_2^{L,V},\dots, \hat{C}_m^{L,V}\right\}\right\},
	\end{equation}
    or
    \begin{equation}\label{diff_WL_W}
    	\left\{\left\{C_1^{L,W},C_2^{L,W},\dots, C_n^{L,W}\right\}\right\} \neq \left\{\left\{\hat{C}_1^{L,W},\hat{C}_2^{L,W},\dots, \hat{C}_n^{L,W}\right\}\right\},
    \end{equation}
    We aim to construct some GNNs such that the followings hold for any $l=0,1,\dots,L$:
    \begin{itemize}[leftmargin=8mm]
    	\item[(i)] $h_i^{l,V}=h_{i'}^{l,V}$ implies $C_i^{l,V}=C_{i'}^{l,V}$, for $1\leq i,i'\leq m$;
    	\item[(ii)] $\hat{h}_i^{l,V}=\hat{h}_{i'}^{l,V}$ implies $\hat{C}_i^{l,V}=\hat{C}_{i'}^{l,V}$, for $1\leq i,i'\leq m$;
    	\item[(iii)] $h_i^{l,V}=\hat{h}_{i'}^{l,V}$ implies $C_i^{l,V}=\hat{C}_{i'}^{l,V}$, for $1\leq i,i'\leq m$;
    	\item[(iv)] $h_j^{l,W}=h_{j'}^{l,W}$ implies $C_j^{l,W}=C_{j'}^{l,W}$, for $1\leq j,j'\leq n$;
    	\item[(v)] $\hat{h}_j^{l,W}=\hat{h}_{j'}^{l,W}$ implies $\hat{C}_j^{l,W}=\hat{C}_{j'}^{l,W}$, for $1\leq j,j'\leq n$;
    	\item[(vi)] $h_j^{l,W}=\hat{h}_{j'}^{l,W}$ implies $C_j^{l,W}=\hat{C}_{j'}^{l,W}$, for $1\leq j,j'\leq n$.
    \end{itemize}
    It is clear that the above conditions (i)-(vi) hold for $l=0$ as long as we choose $f^V_{\mathrm{in}}$ and $f^W_{\mathrm{in}}$ that are injective on the following two sets (not multisets) respectively: 
    \[\{h_1^V,\dots,h_m^V,\hat{h}_1^V,\dots,\hat{h}_m^V\} ~~\text{and}~~\{h_1^W,\dots,h_n^W,\hat{h}_1^W,\dots,\hat{h}_n^W\}.\] 
    We then assume that (i)-(vi) hold for some $0\leq l-1<L$, and show that these conditions are also satisfied for $l$ if we choose $f_l^V,f_l^W,g_i^V,g_l^W$ properly. Let us consider the set (not multiset):
    \begin{equation*}
    	\left\{\alpha_1,\alpha_2,\dots,\alpha_s\right\}\subset \bR^{d_{l-1}} 
    \end{equation*}
    that collects all different values in $h_1^{l-1,W}, h_2^{l-1,W},\dots, h_n^{l-1,W},\hat{h}_1^{l-1,W}, \hat{h}_2^{l-1,W},\dots, \hat{h}_n^{l-1,W}$. Let $d_{l} \geq s$ and let $e_p^{d_l}  = (0,\dots,0,1,0,\dots,0)$ be the vector in $\bR^{d_l}$ with the $p$-th entry being $1$ and all other entries being $0$, for $1\leq p\leq s$. Choose $f_l^W:\bR^{d_{l-1}}\rightarrow \bR^{d_{l}}$ as a continuous function satisfying $f_l^W(\alpha_p) = e_p^{d_l}$, $p=1,2,\dots,s$, and choose $g_l^V:\bR^{d_{l-1}}\times \bR^{d_{l}} \rightarrow \bR^{d_{l}}$ that is continuous and is injective when restricted on the set (not multiset)
    \begin{multline*}
        \left\{\left( h_i^{l-1,V},\sum_{j=1}^n E_{i,j} f_l^W(h_j^{l-1, W}) \right):1\leq i\leq m\right\} \\
        \cup \left\{\left( \hat{h}_i^{l-1,V},\sum_{j=1}^n \hat{E}_{i,j} f_l^W(\hat{h}_j^{l-1, W}) \right):1\leq i\leq m\right\}.
    \end{multline*}
    Noticing that
    \begin{equation*}
        \sum_{j=1}^n E_{i,j} f_l^W(h_j^{l-1,W}) = \sum_{p=1}^s\left( \sum_{h_j^{l-1,W} = \alpha_p} E_{i,j}\right) e_p^{d_l},
    \end{equation*}
    and that $\{e_1^{d_l},e_2^{d_l},\dots,e_s^{d_l}\}$ is linearly independent, one can conclude that $h_i^{l,V} = h_{i'}^{l,V}$ if and only if $h_i^{l-1,V} = h_{i'}^{l-1,V}$ and $\sum_{j=1}^n E_{i,j} f_l^W(h_j^{l-1,W})=\sum_{j=1}^n E_{i',j} f_l^W(h_j^{l-1,W})$, where the second condition is equivalent to 
    \begin{equation*}
        \sum_{h_j^{l-1,W} = \alpha_p} E_{i,j} = \sum_{h_j^{l-1,W} = \alpha_p} E_{i',j},\quad\forall~p \in\{1,2,\dots,s\}.
    \end{equation*}
    This, as well as the condition (iv) for $l-1$, implies that
    \begin{equation*}
        \sum_{j=1}^n E_{i,j}\text{HASH}_{l,W}'\left(C_j^{l-1, W}\right) = \sum_{j=1}^n E_{i',j}\text{HASH}_{l,W}'\left(C_j^{l-1, W}\right),
    \end{equation*}
    and hence that $C_i^{l,V} = C_{i'}^{l,V}$ by using $h_i^{l-1,V} = h_i^{l-1,V}$ and condition (i) for $l-1$. Therefore, we know that (i) is satisfied for $l$, and one can show (ii) and (iii) for $l$ using similar arguments by taking $d_{l}$ large enough. In addition, $f_l^V$ and $g_l^W$ can also be chosen in a similar way such that (iv)-(vi) are satisfied for $l$.
    
    Combining \eqref{diff_WL_V}, \eqref{diff_WL_W}, and condition (i)-(iv) for $L$, we obtain that
    \begin{equation}\label{diff_GNN_V}
    	\left\{\left\{h_1^{L,V},h_2^{L,V},\dots, h_m^{L,V}\right\}\right\} \neq \left\{\left\{\hat{h}_1^{L,V},\hat{h}_2^{L,V},\dots, \hat{h}_m^{L,V}\right\}\right\},
    \end{equation}
    or
    \begin{equation*}
    	\left\{\left\{h_1^{L,W},h_2^{L,W},\dots, h_n^{L,W}\right\}\right\} \neq \left\{\left\{\hat{h}_1^{L,W},\hat{h}_2^{L,W},\dots, \hat{h}_n^{L,W}\right\}\right\}.
    \end{equation*}
    Without loss of generality, we can assume that \eqref{diff_GNN_V} holds. 
    
    Consider the set (not multiset)
    \begin{equation*}
        \{\beta_1,\beta_2,\dots,\beta_t\}\subset \bR^{d_{L}}, 
    \end{equation*}
    that collects all different values in $h_1^{L,V},h_2^{L,V},\dots, h_m^{L,V}, \hat{h}_1^{L,V},\hat{h}_2^{L,V},\dots, \hat{h}_m^{L,V}$. Let $k>1$ be a positive integer that is greater than the maximal multiplicity of an element in the multisets $\{\{h_1^{L,V},h_2^{L,V},\dots, h_m^{L,V}\}\}$ and $\{\{\hat{h}_1^{L,V},\hat{h}_2^{L,V},\dots, \hat{h}_m^{L,V}\}\}$. There exists a continuous function $\varphi: \bR^{d_{L}} \rightarrow\bR$ such that $\varphi(\beta_q) = k^q$ for $q=1,2,\dots,t$, and due to \eqref{diff_GNN_V} and the fact that the way of writing an integer as $k$-ary expression is unique, it hence holds that
    \begin{equation*}
        \sum_{i=1}^m \varphi(h_i^{L,V}) \neq \sum_{i=1}^m \varphi(\hat{h}_i^{L,V}).
    \end{equation*}
    
    Set the dimension of $(L+1)$-th layer as $1$: $d_{L+1} = 1$, 
    and set $f_{L+1, V} = 0$, $f_{L+1,W} = 0$, $g_{L+1,V}(h, 0) = \varphi(h)$, and $g_{L+1,W} = 0$. Then we have $h_i^{L+1,V} = \varphi(h_i^{L,V})$, $\hat{h}_i^{L+1,V} = \varphi(\hat{h}_i^{L,V})$, and $h_j^{L+1,W} = \hat{h}_j^{L+1,W}$ for $i=1,2,\dots,m$ and $j = 1,2,\cdots,n$. Define $f_\text{out} :\bR \times \bR \rightarrow\bR$ via $f_\text{out} (h,h')= h$. Then it follows that 
    \begin{align*}
        f_\text{out}\left(\sum_{i=1}^m h_i^{L+1,V},\sum_{j=1}^n h_j^{L+1,W}\right)& = \sum_{i=1}^m \varphi(h_i^{L,V})\\
        & \neq \sum_{i=1}^m \varphi(\hat{h}_i^{L,V}) = f_\text{out}\left(\sum_{i=1}^m \hat{h}_i^{L+1,V},\sum_{j=1}^n \hat{h}_j^{L+1,W}\right),
    \end{align*}
    which guarantees the existence of $F\in\calF_{\text{GNN}}$ that has $L+1$ layers and satisfies $F(G,H)\neq F(\hat{G},\hat{H})$.
\end{proof}

\begin{proof}[Proof of Theorem~\ref{thm:separate_GNN-main-txt}]
    The equivalence of the three conditions follow directly from Lemma~\ref{lem:sameWL2sameWGNN}, \ref{lem:sameWGNN2sameGNN}, and \ref{lem:sameGNN2sameWL}.
\end{proof}

\begin{corollary}\label{cor:separate_GNN}
    For any two weighted bipartite graphs with vertex features $(G, H),(\hat{G},\hat{H})\in \calG_{m,n}\times\calH^V_m\times\calH^W_n$, the followings are equivalent:
    \begin{itemize}[leftmargin=8mm]
        \item[(i)] $(G,H) \ensuremath{\stackrel{W}{\sim}} (\hat{G},\hat{H})$.
        \item[(ii)] For any $F_W\in\calF_{\text{GNN}}^W$, it holds that $F_W(G,H) = F_W(\hat{G},\hat{H})$.
    \end{itemize}
\end{corollary}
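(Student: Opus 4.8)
The plan is to mirror the proof of Theorem~\ref{thm:separate_GNN-main-txt}, but to track the colors and hidden features of the $W$-vertices \emph{per index} rather than as multisets, since this per-index information is exactly what distinguishes $\ensuremath{\stackrel{W}{\sim}}$ (Definition~\ref{cor:LP_sameWL2samecomponent}) from $\sim$. I would prove the two implications separately, reusing the constructions already built in Lemmas~\ref{lem:sameWL2sameWGNN} and~\ref{lem:sameGNN2sameWL}.

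For (i) $\Rightarrow$ (ii), I would invoke verbatim the injective hash functions and the six-part induction of Lemma~\ref{lem:sameWL2sameWGNN}, which for those hash choices relates equality of colors to equality of hidden features at every layer. The relation $(G,H)\ensuremath{\stackrel{W}{\sim}}(\hat{G},\hat{H})$ guarantees not only that the $V$-colors agree as multisets but also that $C_j^{L,W}=\hat{C}_j^{L,W}$ for \emph{each} $j$. Applying claim (vi) of that induction with $j'=j$ then upgrades the earlier ``equal up to a permutation'' conclusion to $h_j^{L,W}=\hat{h}_j^{L,W}$ for every $j$. Multiset equality of the $V$-colors together with claims (i)--(iii) gives $\sum_i h_i^{L,V}=\sum_i \hat{h}_i^{L,V}$, and the per-index $W$-equality trivially gives $\sum_j h_j^{L,W}=\sum_j \hat{h}_j^{L,W}$. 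Feeding these three matching quantities into the vertex-output formula \eqref{eq:gnn-out-vertex} yields $y_{\mathrm{out}}(w_j)=\hat{y}_{\mathrm{out}}(w_j)$ for every $j$, i.e.\ $F_W(G,H)=F_W(\hat{G},\hat{H})$ with no permutation.

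For (ii) $\Rightarrow$ (i), I would argue by contraposition. Assuming $(G,H)\not\ensuremath{\stackrel{W}{\sim}}(\hat{G},\hat{H})$, Definition~\ref{cor:LP_sameWL2samecomponent} supplies a choice of $L$ and hash functions for which either the $V$-colors differ as multisets or $C_j^{L,W}\neq \hat{C}_j^{L,W}$ for some index $j$. In the first case, Lemma~\ref{lem:sameGNN2sameWL} already produces an $F\in\calF_{\text{GNN}}$ with $F(G,H)\neq F(\hat{G},\hat{H})$, and the broadcasting construction from the proof of Lemma~\ref{lem:sameWGNN2sameGNN} (let every output coordinate of $F_W$ equal $F$) gives an $F_W\in\calF_{\text{GNN}}^W$ that separates the two graphs. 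In the second case, I would run the GNN construction of Lemma~\ref{lem:sameGNN2sameWL} enforcing conditions (iv)--(vi); the contrapositive of condition (vi), applied with $j'=j$, converts $C_j^{L,W}\neq \hat{C}_j^{L,W}$ into $h_j^{L,W}\neq \hat{h}_j^{L,W}$.

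The main obstacle is this last step: the vertex-output layer \eqref{eq:gnn-out-vertex} mixes the single feature $h_j^{L,W}$ with the two global sums, so a careless output could wash out the isolated one-coordinate discrepancy. I would resolve it by taking $f^W_{\text{out}}$ to depend only on its third argument, choosing a continuous $\psi$ with $\psi(h_j^{L,W})\neq \psi(\hat{h}_j^{L,W})$ (possible since these vectors differ); then $y_{\mathrm{out}}(w_j)\neq \hat{y}_{\mathrm{out}}(w_j)$, so $F_W(G,H)\neq F_W(\hat{G},\hat{H})$ and the contrapositive is complete. Everything else is routine per-index bookkeeping on top of the already-established Lemmas~\ref{lem:sameWL2sameWGNN} and~\ref{lem:sameGNN2sameWL}, so no genuinely new machinery is required.
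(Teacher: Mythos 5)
Your proposal is correct and takes essentially the same route as the paper, whose proof of this corollary is simply the observation that the argument for Theorem~\ref{thm:separate_GNN-main-txt} goes through with per-index (rather than multiset) tracking of the $W$-colors and features, so that no permutation is needed; your expansion—applying claims (iv)--(vi) of Lemmas~\ref{lem:sameWL2sameWGNN} and~\ref{lem:sameGNN2sameWL} with $j'=j$ and reading off the discrepancy through the third argument of $f^W_{\text{out}}$—is exactly the intended bookkeeping.
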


\begin{proof}[Proof of Corollary~\ref{cor:separate_GNN}]
    The proof follows similar lines as in the proof of Theorem~\ref{thm:separate_GNN-main-txt} with the difference that there is no permutation on $\{w_1,w_2,\dots,w_n\}$.
\end{proof}

In addition to the separation power of GNNs for two weighted bipartite graphs with vertex features, one can also obtain results on separating different vertices in one weighted bipartite graph with vertex features.

\begin{corollary}\label{cor:separate_index}
    For any weighted bipartite graph with vertex features $(G,H)$ and any $j,j'\in\{1,2,\dots,n\}$, the followings are equivalent:
    \begin{itemize}[leftmargin=8mm]
        \item[(i)] $C_j^{l,W} = C_{j'}^{l,W}$ holds for any $l\in\mathbb{N}$ and any choice of hash functions.
        \item[(ii)] $F_W(G,H)_j = F_W(G,H)_{j'}$, $\forall~F_W\in\calF_{\text{GNN}}^W$.
    \end{itemize}
\end{corollary}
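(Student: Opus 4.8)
The plan is to prove both implications by specializing the induction arguments already developed for two graphs in Lemmas~\ref{lem:sameWL2sameWGNN} and \ref{lem:sameGNN2sameWL} to the case of two vertices $w_j, w_{j'}$ inside the \emph{single} graph $(G,H)$, taking $(\hat{G},\hat{H})=(G,H)$ throughout. The essential observation is that in the vertex-output layer \eqref{eq:gnn-out-vertex}, the outputs $F_W(G,H)_j$ and $F_W(G,H)_{j'}$ share the same first two arguments (the graph-level sums $\sum_i h_i^{L,V}$ and $\sum_j h_j^{L,W}$), so whether they agree is governed entirely by the individual final features $h_j^{L,W}$ and $h_{j'}^{L,W}$.

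For (i) $\Rightarrow$ (ii), I would fix an arbitrary $F_W\in\calF_{\text{GNN}}^W$ and select the injective, collision-free hash functions exactly as in the proof of Lemma~\ref{lem:sameWL2sameWGNN}. Claim (iv) of that induction shows $C_j^{l,W}=C_{j'}^{l,W}$ implies $h_j^{l,W}=h_{j'}^{l,W}$ at every layer $l$. Since hypothesis (i) guarantees $C_j^{L,W}=C_{j'}^{L,W}$ for this particular choice of hash functions, we obtain $h_j^{L,W}=h_{j'}^{L,W}$, and feeding equal third arguments into $f^W_{\text{out}}$ yields $F_W(G,H)_j=F_W(G,H)_{j'}$.

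For (ii) $\Rightarrow$ (i), I would argue by contraposition. If (i) fails, there exist some $L$ and hash functions with $C_j^{L,W}\neq C_{j'}^{L,W}$. Running the layer-by-layer construction of Lemma~\ref{lem:sameGNN2sameWL} produces learnable functions $f_l^V,f_l^W,g_l^V,g_l^W$ for which claim (iv) there holds: $h_j^{l,W}=h_{j'}^{l,W}$ implies $C_j^{l,W}=C_{j'}^{l,W}$. Contrapositively, $C_j^{L,W}\neq C_{j'}^{L,W}$ forces $h_j^{L,W}\neq h_{j'}^{L,W}$. It then remains to propagate this difference through the output layer: because $h_j^{L,W}$ and $h_{j'}^{L,W}$ are two distinct points of $\bR^{d_L}$, one can pick a continuous $\psi:\bR^{d_L}\to\bR$ with $\psi(h_j^{L,W})\neq\psi(h_{j'}^{L,W})$ and set $f^W_{\text{out}}(a,b,c)=\psi(c)$, giving $F_W(G,H)_j\neq F_W(G,H)_{j'}$, which contradicts (ii).

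The bulk of the work -- the two inductions relating colors to hidden features -- is inherited essentially verbatim from the earlier lemmas, so the only genuinely new ingredient is the treatment of the output layer. The mild subtlety, and the step I would be most careful about, is ensuring that the graph-level aggregates appearing in \eqref{eq:gnn-out-vertex} cannot accidentally erase the distinction between the two vertices. This is exactly why the separation must be routed through the third argument $h_j^{L,W}$, with $f^W_{\text{out}}$ chosen to depend separatingly on that coordinate; the first two coordinates are identical for both indices and therefore carry no discriminative information at the vertex level.
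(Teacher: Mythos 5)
Your proposal is correct and takes essentially the same route as the paper, whose proof of Corollary~\ref{cor:separate_index} simply invokes the arguments of Lemma~\ref{lem:sameWL2sameWGNN} for (i)$\implies$(ii) and Lemma~\ref{lem:sameGNN2sameWL} for (ii)$\implies$(i); your additional care with the output layer \eqref{eq:gnn-out-vertex} (routing the separation through the third argument $h_j^{L,W}$, since the two graph-level aggregates are identical for both indices) is exactly the right way to fill in the detail the paper leaves implicit.
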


\begin{proof}[Proof of Corollary~\ref{cor:separate_index}]
    ``(i)$\implies$(ii)'' and ``(ii)$\implies$(i)'' can be proved using similar arguments in the proof of Lemma~\ref{lem:sameWL2sameWGNN} and Lemma~\ref{lem:sameGNN2sameWL}, respectively.
\end{proof}

\section{Universal Approximation of $\calF_{\text{GNN}}$}
\label{sec:universal-gmm}
This section provides the proof of Theorem~\ref{thm:uniapprox_scal}. The main mathematical tool used in the proof is the Stone–Weierstrass theorem:

\begin{theorem}[Stone–Weierstrass theorem {\cite{rudin1991functional}*{Section 5.7}}] \label{thm:Stone_Weierstrass}
    Let $X$ be a compact Hausdorff space and let $\calF\subset\calC(X,\bR)$ be a subalgebra. If $\calF$ separates points on $X$, i.e., for any $x,x'\in X$ with $x\neq x'$, there exists $F\in\calF$ such that $F(x)\neq F(x')$, and $1\in \calF$, then $\calF$ is dense in $\calC(X,\bR)$ with the topology of uniform convergence.
\end{theorem}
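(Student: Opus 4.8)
The plan is to prove the theorem by passing to the uniform closure and establishing that this closure is a closed sublattice of $\calC(X,\bR)$, after which a double-covering compactness argument finishes the job. Write $\overline{\calF}$ for the closure of $\calF$ in $\calC(X,\bR)$ under the sup-norm $\|\cdot\|_\infty$. One checks immediately that $\overline{\calF}$ is again a subalgebra containing the constant function $1$ and separating points of $X$, since each property is preserved under uniform limits (sums, products, and scalar multiples of uniformly convergent sequences converge uniformly, and separation and $1\in\calF$ pass to the larger set). Hence it suffices to prove $\overline{\calF}=\calC(X,\bR)$, and because $\overline{\calF}$ is closed it is enough to show that every $f\in\calC(X,\bR)$ can be approximated within any $\epsilon>0$ by an element of $\overline{\calF}$.

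First I would establish the analytic heart of the argument: $\overline{\calF}$ is closed under the pointwise operations $\max$ and $\min$, i.e.\ it is a sublattice. Since $\max(f,g)=\tfrac12(f+g+|f-g|)$ and $\min(f,g)=\tfrac12(f+g-|f-g|)$ and $\overline{\calF}$ is a vector space, this reduces to showing that $f\in\overline{\calF}$ implies $|f|\in\overline{\calF}$. The key step, and the one I expect to be the main obstacle since it is where genuine analysis rather than algebra or topology enters, is to approximate the absolute-value function by polynomials. I would construct polynomials $p_k$ converging uniformly to $\sqrt{t}$ on $[0,1]$ via $p_0=0$, $p_{k+1}(t)=p_k(t)+\tfrac12\big(t-p_k(t)^2\big)$, verifying by induction that $0\le p_k(t)\le p_{k+1}(t)\le\sqrt{t}$ on $[0,1]$, so that pointwise monotone convergence together with Dini's theorem yields uniform convergence. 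Then for $f\in\overline{\calF}$ with $M=\|f\|_\infty>0$, the functions $M\,p_k\!\big(f^2/M^2\big)$ lie in $\overline{\calF}$ (as polynomials in $f$, using $1\in\overline{\calF}$) and converge uniformly to $\sqrt{f^2}=|f|$; closedness of $\overline{\calF}$ then gives $|f|\in\overline{\calF}$.

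Next I would record the elementary two-point interpolation property. Given distinct $x,y\in X$ and target values $a,b\in\bR$, the separation hypothesis furnishes $g\in\calF$ with $g(x)\ne g(y)$, and then $h=a+(b-a)\,\dfrac{g-g(x)}{g(y)-g(x)}$ lies in $\calF$ and satisfies $h(x)=a$, $h(y)=b$; here I use that $\calF$ is an algebra containing the constants.

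Finally I would run the standard lattice-plus-compactness double covering. Fix $f\in\calC(X,\bR)$ and $\epsilon>0$. For each pair $x,y$ choose $h_{x,y}\in\overline{\calF}$ with $h_{x,y}(x)=f(x)$ and $h_{x,y}(y)=f(y)$. Holding $x$ fixed, continuity gives for each $y$ a neighborhood on which $h_{x,y}<f+\epsilon$; compactness of $X$ extracts a finite subcover, and taking the minimum over the corresponding finitely many $h_{x,y}$, which stays in $\overline{\calF}$ by the sublattice property, produces $h_x\in\overline{\calF}$ with $h_x<f+\epsilon$ everywhere and $h_x(x)=f(x)$. Since $h_x(x)=f(x)$, continuity gives a neighborhood of $x$ on which $h_x>f-\epsilon$; a second application of compactness and taking the maximum over finitely many of the $h_x$ yields $F\in\overline{\calF}$ with $f-\epsilon<F<f+\epsilon$ on all of $X$, that is $\|f-F\|_\infty\le\epsilon$. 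This exhibits $f$ in $\overline{\calF}$ and completes the proof.
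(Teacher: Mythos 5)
Your proposal is a correct and complete proof of the Stone--Weierstrass theorem: the reduction to the uniform closure, the verification that $\overline{\calF}$ is a sublattice via the iteration $p_{k+1}(t)=p_k(t)+\tfrac12\bigl(t-p_k(t)^2\bigr)$ (the induction $0\le p_k\le p_{k+1}\le\sqrt{t}$ on $[0,1]$ is right, and Dini's theorem legitimately upgrades the monotone pointwise convergence to uniform convergence), the two-point interpolation using $1\in\calF$, and the min-then-max double compactness argument are all sound, with only trivial omissions (the case $\|f\|_\infty=0$ and the degenerate pair $y=x$ in the covering step, both handled by constants). The paper itself, however, does not prove this statement at all: it imports it as a known tool from \cite{rudin1991functional}*{Section 5.7} and uses it only inside the proof of Theorem~\ref{thm:uniapprox_scal}, where the quotient space $\pi(X)$ is shown to be compact Hausdorff and $\tilde{\calF}_{\text{GNN}}$ a point-separating unital subalgebra. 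It is worth noting that the cited source proves the theorem by a genuinely different route, namely de Branges's functional-analytic argument via the Riesz representation theorem and extreme points of the unit ball of the annihilating measures (Krein--Milman), whereas you give the classical elementary lattice-theoretic proof in the style of Rudin's \emph{Principles}. Your route is self-contained, needing nothing beyond Dini's theorem and compactness, which makes it the more economical choice here; the de Branges route requires duality machinery but extends more naturally to complex scalars and toward Bishop's generalization. Either proof fully supports the way the theorem is deployed in the paper.
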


\begin{proof}[Proof of Theorem~\ref{thm:uniapprox_scal}]
    Let $\pi: X\rightarrow X/\sim$ be the quotient map, where $\pi(X) = X/\sim$ is equipped with the quotient topology. For any $F\in\calF_{\text{GNN}}$, since $F:X\rightarrow\bR$ is continuous and by Theorem~\ref{thm:separate_GNN-main-txt}, $F(G,H)=F(\hat{G},\hat{H}),\ \forall~(G,H)\sim(\hat{G},\hat{H})$, there exists a unique continuous $\tilde{F}:\pi(X)\rightarrow\bR$ such that $F = \tilde{F}\circ \pi$. Set
    \begin{equation*}
        \tilde{\calF}_{\text{GNN}} = \left\{\tilde{F}: F\in \calF_{\text{GNN}}\right\}\subset\calC(\pi(X),\bR).
    \end{equation*}
    In addition, the assumption on $\Phi$, i.e,
    \begin{equation*}
        \Phi(G,H) = \Phi(\hat{G},\hat{H}),\quad \forall~(G,H)\sim(\hat{G},\hat{H}),
    \end{equation*}
    leads to the existence of a unique $\tilde{\Phi}\in\calC(\pi(X),\bR)$ with $\Phi = \tilde{\Phi}\circ \pi$.
    
   Since $X$ is compact, then $\pi(X)$ is also compact due to the continuity of $\pi$. According to Lemma~\ref{lem:subalgebra} below, $\tilde{\calF}_{\text{GNN}}$ is a subalgebra of $\calC(\pi(X),\bR)$. By Theorem~\ref{thm:separate_GNN-main-txt}, $\tilde{\calF}_{\text{GNN}}$ separates points on $\pi(X)$. This can further imply that $\pi(X)$ is Hausdorff. In fact, for any $x,x'\in \pi(X)$, there exists $\tilde{F}\in \tilde{F}_{\text{GNN}}$ with $\tilde{F}(x)\neq \tilde{F}(x')$. Without loss of generality, we assume that $\tilde{F}(x)<\tilde{F}(x')$ and choose some $c\in\bR$ with $\tilde{F}(x)< c <\tilde{F}(x')$. By continuity of $\tilde{F}$, we know that $\tilde{F}^{-1}((-\infty,c))\cap \pi(X)$ and $\tilde{F}^{-1}((c,+\infty))\cap \pi(X)$ are disjoint open subsets of $\pi(X)$ with $x\in\tilde{F}^{-1}((-\infty,c))\cap \pi(X)$ and $x'\in\tilde{F}^{-1}((c,+\infty))\cap \pi(X)$, which leads to the Hausdorff property of $\pi(X)$. Note also that $1\in \tilde{\calF}_{\text{GNN}}$. Using Theorem~\ref{thm:Stone_Weierstrass}, we can conclude the denseness of $\tilde{\calF}_{\text{GNN}}$ in $\calC(\pi(X),\bR)$. Therefore, for any $\epsilon>0$, there exists $F\in\calF_{\text{GNN}}$, such that
    \begin{equation*}
        \sup_{(G,H)\in X} |\Phi(G,H) - F(G,H)| = \sup_{x\in \pi(X)} |\tilde{\Phi}(x) - \tilde{F}(x)| < \epsilon,
    \end{equation*}
    which completes the proof.
\end{proof}

\begin{lemma}\label{lem:subalgebra}
    $\calF_{\text{GNN}}$ is a subalgebra of $\calC(\calG_{m,n}\times\calH^V_m\times \calH^W_n,\bR)$, and as a corollary, $\tilde{\calF}_{\text{GNN}}$ is a subalgebra of $\calC(\calG_{m,n}\times\calH^V_m\times\calH^W_n/\sim,\bR)$.
\end{lemma}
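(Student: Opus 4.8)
The plan is to verify directly the three defining properties of a subalgebra of $\calC(X,\bR)$, where $X = \calG_{m,n}\times\calH^V_m\times\calH^W_n$: that every $F\in\calF_{\text{GNN}}$ is continuous, and that $\calF_{\text{GNN}}$ is closed under scalar multiplication together with addition and under pointwise multiplication. Continuity is the easiest part. Each $F\in\calF_{\text{GNN}}$ is built by composing the learnable functions $f_{\mathrm{in}}^V,f_{\mathrm{in}}^W,\{f_l^V,f_l^W,g_l^V,g_l^W\},f_\text{out}$ — all assumed continuous — with finitely many additions and multiplications by the edge weights $E_{i,j}$, which are exactly the coordinates of the $\calG_{m,n}\cong\bR^{m\times n}$ factor. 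Since sums, products, and compositions of continuous maps on the product space are continuous, I conclude $\calF_{\text{GNN}}\subset\calC(X,\bR)$.

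For the algebraic closure, the key device is a \emph{parallel (block-diagonal) construction}. Given $F_1,F_2\in\calF_{\text{GNN}}$, I first pad the shallower network with trivial pass-through layers (with $f_l^V=f_l^W=0$ and $g_l^V,g_l^W$ the identity in their first argument), so that both use the same number of layers $L$; this leaves their outputs unchanged. I then build a single GNN whose hidden dimension at each layer is the sum of the two and whose hidden state at every vertex is the concatenation of the two networks' states. Concretely, I set $f_{\mathrm{in}}^V=(f_{\mathrm{in}}^{V,(1)},f_{\mathrm{in}}^{V,(2)})$ and define each update function block-diagonally, e.g. $f_l^W(h^{(1)},h^{(2)}) = (f_l^{W,(1)}(h^{(1)}),f_l^{W,(2)}(h^{(2)}))$ and $g_l^V((h^{(1)},h^{(2)}),(p^{(1)},p^{(2)})) = (g_l^{V,(1)}(h^{(1)},p^{(1)}),g_l^{V,(2)}(h^{(2)},p^{(2)}))$. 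Because the message aggregation $\sum_j E_{i,j}f_l^W(\cdot)$ respects this block structure, the two blocks evolve independently and never mix, so after $L$ layers the readout aggregates $\sum_i h_i^{L,V}$ and $\sum_j h_j^{L,W}$ are precisely the concatenations of the two individual aggregates.

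Finally I choose the output function of the combined network to recombine the two blocks. Writing the combined aggregates as $(s^{(1)},s^{(2)})$ and $(r^{(1)},r^{(2)})$, I take $f_\text{out}((s^{(1)},s^{(2)}),(r^{(1)},r^{(2)})) = a f_\text{out}^{(1)}(s^{(1)},r^{(1)}) + b f_\text{out}^{(2)}(s^{(2)},r^{(2)})$ to realize $aF_1+bF_2$, and the analogous product $f_\text{out}^{(1)}(\cdot)\cdot f_\text{out}^{(2)}(\cdot)$ to realize $F_1F_2$; in each case the composite is continuous, so the resulting network lies in $\calF_{\text{GNN}}$. Constant functions, including $1$, arise by taking $f_\text{out}$ constant. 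This establishes closure under all three operations. The corollary for $\tilde{\calF}_{\text{GNN}}$ then follows because passing to the quotient $\pi: X\to X/\!\sim$ is an algebra homomorphism: since $\pi$ is surjective and $\widetilde{aF_1+bF_2} = a\tilde{F_1}+b\tilde{F_2}$ and $\widetilde{F_1F_2} = \tilde{F_1}\tilde{F_2}$, the image $\tilde{\calF}_{\text{GNN}}$ inherits the subalgebra structure.

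I expect the main obstacle to be bookkeeping rather than anything conceptual: one must check carefully that the block-diagonal update functions are legitimate members of the allowed continuous function classes, and that the independence of the two blocks is genuinely preserved through the nonlinear aggregation step $g_l^V(\cdot,\sum_j E_{i,j}f_l^W(\cdot))$. The layer-padding argument needs only the mild observation that an identity pass-through layer is expressible, which is immediate since arbitrary continuous update functions are permitted.
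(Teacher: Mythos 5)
Your proposal is correct and follows essentially the same route as the paper: pad the shallower network with trivial pass-through layers (the paper's Lemma~\ref{lem:layer_num}), run the two GNNs in parallel via a block-diagonal/concatenation construction through all layers, and recombine at the output layer with the sum or product of the two readout functions. Your additional remarks on continuity, scalar multiples, constants, and the quotient map being an algebra homomorphism are all consistent with what the paper leaves implicit.
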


\begin{proof}[Proof of Lemma~\ref{lem:subalgebra}]
	It suffices to show that $\calF_{\text{GNN}}$ is closed under addition and multiplication. Consider any $F,\hat{F}\in \calF_{\text{GNN}}$. Thanks to Lemma~\ref{lem:layer_num}, we can assume that both $F$ and $\hat{F}$ have $L$ layers. 
	Suppose that $F$ is constructed by 
	\[\begin{aligned}
	&f_{\mathrm{in}}^V: \calH^V \rightarrow \bR^{d_{0}}, \quad f_{\mathrm{in}}^W: \calH^W \rightarrow \bR^{d_{0}},\\
	&f_l^V,f_l^W: \bR^{d_{l-1}}\rightarrow \bR^{d_{l}}, \quad g_l^V,g_l^W:  \bR^{d_{l-1}}\times \bR^{d_{l}}\rightarrow \bR^{d_{l}}, \quad 1 \leq l \leq L,\\
	&f_\text{out}: \bR^{d_{L}} \times \bR^{d_{L}} \rightarrow\bR,
	\end{aligned}\]
	and that $\hat{F}$ is constructed by \[\begin{aligned}
	&\hat{f}_{\mathrm{in}}^V: \calH^V \rightarrow \bR^{\hat{d}_{0}}, \quad \hat{f}_{\mathrm{in}}^W: \calH^W \rightarrow \bR^{\hat{d}_{0}},\\
	&\hat{f}_l^V,\hat{f}_l^W: \bR^{\hat{d}_{l-1}}\rightarrow \bR^{\hat{d}_{l}}, \quad \hat{g}_l^V,\hat{g}_l^W: \bR^{\hat{d}_{l-1}}\times \bR^{\hat{d}_{l}}\rightarrow \bR^{\hat{d}_{l}}, \quad 1 \leq l \leq L,\\
	&\hat{f}_\text{out}:\bR^{\hat{d}_{L}} \times \bR^{\hat{d}_{L}} \rightarrow\bR.
	\end{aligned}\]
	One can then construct two new GNNs computing $F+\hat{F}$ and $F\cdot\hat{F}$ as follows:
	
	\paragraph{\textbf{The input layer}} The update rule of the input layer is defined by
	\begin{equation*}
        \begin{split}
		    \mathbf{f}_{\mathrm{in}}^V: \calH^{V}&\rightarrow \quad \bR^{d_{0}} \times \bR^{\hat{d}_{0}},\\
		    h\ \ & \mapsto \left(f_{\mathrm{in}}^V(h),\hat{f}_{\mathrm{in}}^V(h)\right),
		\end{split}
	\end{equation*}
	\begin{equation*}
		\begin{split}
		    \mathbf{f}_{\mathrm{in}}^W: \calH^{W}&\rightarrow \quad \bR^{d_{0}} \times \bR^{\hat{d}_{0}},\\
		    h\ \  & \mapsto \left(f_{\mathrm{in}}^W(h),\hat{f}_{\mathrm{in}}^W(h)\right).
		\end{split}
	\end{equation*}
Then the vertex features after the computation of the input layer $\mathbf{h}_i^{0,V}$ and $\mathbf{h}_j^{0,W}$ are given by:
	\begin{align*}
		\mathbf{h}_i^{0,V} & = \mathbf{f}_{\mathrm{in}}^V(h_i^V) = \left(  f_{\mathrm{in}}^V(h_i^V),\hat{f}_{\mathrm{in}}^V(h_i^V)  \right) = (h_i^{0,V},\hat{h}_i^{0,V}) \in \bR^{d_{0}} \times \bR^{\hat{d}_{0}},\\
		\mathbf{h}_j^{0,W} & = \mathbf{f}_{\mathrm{in}}^W(h_j^W) = \left(  f_{\mathrm{in}}^W(h_j^W),\hat{f}_{\mathrm{in}}^W(h_j^W)  \right) = (h_j^{0,W},\hat{h}_j^{0,W}) \in \bR^{d_{0}} \times \bR^{\hat{d}_{0}},
	\end{align*}
	for $i=1,2\dots,m$, and $j=1,2,\dots,n$.
	
	\paragraph{\textbf{The $l$-th layer ($1\leq l\leq L$)}} We set 
	\begin{equation*}
		\begin{split}
		    \mathbf{f}_l^V: \bR^{d_{l-1}} \times \bR^{\hat{d}_{l-1}} &\rightarrow \quad \bR^{d_{l}} \times \bR^{\hat{d}_{l}},\\
		    (h,\hat{h}) \ & \mapsto \left(f_l^V(h),\hat{f}_l^V(\hat{h})\right),
		\end{split}
	\end{equation*}
	\begin{equation*}
		\begin{split}
		    \mathbf{f}_l^W: \bR^{d_{l-1}} \times \bR^{\hat{d}_{l-1}} &\rightarrow \quad \bR^{d_{l}} \times \bR^{\hat{d}_{l}},\\
		    (h,\hat{h}) \  & \mapsto \left(f_l^W(h),\hat{f}_l^W(\hat{h})\right),
		\end{split}
	\end{equation*}
	\begin{equation*}
		\begin{split}
		    \mathbf{g}_l^V: \bR^{d_{l-1}} \times \bR^{\hat{d}_{l-1}} \times \bR^{d_{l}} \times \bR^{\hat{d}_{l}} &\rightarrow \quad \bR^{d_{l}} \times \bR^{\hat{d}_{l}},\\
		    (h,\hat{h},h',\hat{h}')\ \ & \mapsto \left(g_l^W(h,h'),\hat{g}_l^W(\hat{h},\hat{h}')\right),
		\end{split}
	\end{equation*}
	and
	\begin{equation*}
		\begin{split}
		    \mathbf{g}_l^W: \bR^{d_{l-1}} \times \bR^{\hat{d}_{l-1}} \times \bR^{d_{l}} \times \bR^{\hat{d}_{l}} &\rightarrow \quad \bR^{d_{l}} \times \bR^{\hat{d}_{l}},\\
		    (h,\hat{h},h',\hat{h}')\ \ & \mapsto \left(g_l^V(h,h'),\hat{g}_l^V(\hat{h},\hat{h}')\right).
		\end{split}
	\end{equation*}
	Then the vertex features after the computation of the $l$-th layer $\mathbf{h}_i^{l,V}$ and $\mathbf{h}_j^{l,W}$ are given by:
	\begin{align*}
		\mathbf{h}_i^{l,V} & = \mathbf{g}_l^V\left( \mathbf{h}_i^{l-1,V},\sum_{j=1}^n E_{i,j} \mathbf{f}_l^W(\mathbf{h}_j^{l-1, W}) \right) \\
		& = \left( g_l^V\left( h_i^{l-1,V},\sum_{j=1}^n E_{i,j} f_l^W(h_j^{l-1, W}) \right),  \hat{g}_l^V\left( \hat{h}_i^{l-1,V},\sum_{j=1}^n E_{i,j} \hat{f}_l^W(\hat{h}_j^{l-1, W}) \right)\right) \\
		& = (h_i^{l,V},\hat{h}_i^{l,V}) \in \bR^{d_{l}} \times \bR^{\hat{d}_{l}},
	\end{align*}
	and
	\begin{align*}
		\mathbf{h}_j^{l,W} & = \mathbf{g}_l^W\left( \mathbf{h}_j^{l-1,W},\sum_{i=1}^m E_{i,j} \mathbf{f}_l^V(\mathbf{h}_i^{l-1, V}) \right) \\
		& = \left( g_l^W\left( h_j^{l-1,W},\sum_{i=1}^m E_{i,j} f_l^V(h_i^{l-1, V}) \right),  \hat{g}_l^W\left( \hat{h}_j^{l-1,W},\sum_{i=1}^m E_{i,j} \hat{f}_l^V(\hat{h}_i^{l-1, V}) \right)\right) \\
		& = (h_j^{l,W},\hat{h}_j^{l,W}) \in \bR^{d_{l}} \times \bR^{\hat{d}_{l}},
	\end{align*}
	for $i=1,2\dots,m$, and $j=1,2,\dots,n$.
	
	\paragraph{\textbf{The output layer}} To obtain $F+\hat{F}$, we set
	\begin{equation*}
	    \begin{split}
	        \textbf{f}^{\text{add}}_\text{out} :\ \bR^{d_{L}} \times \bR^{\hat{d}_{L}} \times \bR^{d_{L}} \times \bR^{\hat{d}_{L}} \ &\rightarrow\qquad\qquad \ \ \bR, \\
	        ((h,\hat{h}),(h',\hat{h}'))& \mapsto f_\text{out}(h, h') + \hat{f}_\text{out}(\hat{h}, \hat{h}').
	    \end{split}
	\end{equation*}
	Then it holds that
	\begin{align*}
		\textbf{f}^{\text{add}}_\text{out}\left(\sum_{i=1}^m\textbf{h}_i^{L,V}, \sum_{j=1}^n \textbf{h}_1^{L,W}\right) =&  f_\text{out}\left(\sum_{i=1}^m h_i^{L,V}, \sum_{j=1}^n h_j^{L,W}\right) + \hat{f}_\text{out}\left(\sum_{i=1}^m \hat{h}_i^{L,V},\sum_{j=1}^n \hat{h}_j^{L,W}\right)\\
		= & F(G,H) + \hat{F}(G,H).
	\end{align*}
	To obtain $F\cdot\hat{F}$, we set
	\begin{equation*}
	    \begin{split}
	        \textbf{f}^{\text{multiply}}_\text{out} :\ \bR^{d_{L}} \times \bR^{\hat{d}_{L}} \times \bR^{d_{L}} \times \bR^{\hat{d}_{L}} \ &\rightarrow\qquad\qquad \ \ \bR, \\
	        ((h,\hat{h}),(h',\hat{h}'))& \mapsto f_\text{out}(h, h') \cdot \hat{f}_\text{out}(\hat{h}, \hat{h}').
	    \end{split}
	\end{equation*}
	Then it holds that
    \begin{align*}
    	\textbf{f}^{\text{multiply}}_\text{out}\left(\sum_{i=1}^m\textbf{h}_i^{L,V}, \sum_{j=1}^n \textbf{h}_1^{L,W}\right) =&  f_\text{out}\left(\sum_{i=1}^m h_i^{L,V}, \sum_{j=1}^n h_j^{L,W}\right) \cdot \hat{f}_\text{out}\left(\sum_{i=1}^m \hat{h}_i^{L,V},\sum_{j=1}^n \hat{h}_j^{L,W}\right)\\
		= & F(G,H) \cdot \hat{F}(G,H).
   	\end{align*}
   	The constructed GNNs satisfy $F(G,H) + \hat{F}(G,H) \in \calF_{\text{GNN}}$ and $F(G,H) \cdot \hat{F}(G,H) \in \calF_{\text{GNN}}$, which finishes the proof.
\end{proof}

\begin{lemma}\label{lem:layer_num}
	If $F\in\calF_{\text{GNN}}$ has $L$ layers, then there exists $\hat{F}\in\calF_{\text{GNN}}$ with $L+1$ layers such that $F=\hat{F}$.
\end{lemma}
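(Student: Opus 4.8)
The plan is to append one extra message-passing layer that copies each vertex's hidden state unchanged, so that the resulting $(L+1)$-layer GNN computes exactly the same function as $F$. Concretely, I would keep the input encoders $f_{\mathrm{in}}^V, f_{\mathrm{in}}^W$ and the first $L$ layers $\{f_l^V, f_l^W, g_l^V, g_l^W\}_{l=1}^L$ of $F$ untouched, so that the hidden states $h_i^{L,V}, h_j^{L,W}$ produced after layer $L$ are identical to those of $F$. It then suffices to design a layer $L+1$ whose output equals its input, $h_i^{L+1,V} = h_i^{L,V}$ and $h_j^{L+1,W} = h_j^{L,W}$, and to reuse the same output function $f_\text{out}$; since the output layer reads only $\sum_i h_i^{L+1,V}$ and $\sum_j h_j^{L+1,W}$, this immediately gives $\hat{F} = F$.

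For the identity layer I would first set $d_{L+1} = d_L$ so that the feature dimension is preserved. The key observation is that the neighbor-aggregation term can be annihilated independently of the (arbitrary) edge weights $E_{i,j}$: choosing $f_{L+1}^V \equiv 0$ and $f_{L+1}^W \equiv 0$ forces $\sum_{j} E_{i,j} f_{L+1}^W(h_j^{L,W}) = 0$ and $\sum_{i} E_{i,j} f_{L+1}^V(h_i^{L,V}) = 0$ for every graph. I would then take $g_{L+1}^V$ and $g_{L+1}^W$ to be the projection onto the first argument, $g_{L+1}^V(h, h') = h$ and $g_{L+1}^W(h, h') = h$, so that
\[ h_i^{L+1,V} = g_{L+1}^V(h_i^{L,V}, 0) = h_i^{L,V}, \qquad h_j^{L+1,W} = g_{L+1}^W(h_j^{L,W}, 0) = h_j^{L,W}. \]
All four functions $f_{L+1}^V, f_{L+1}^W, g_{L+1}^V, g_{L+1}^W$ are continuous on the relevant Euclidean domains (the zero map and a coordinate projection), so the construction stays within $\calF_{\text{GNN}}$ under the standing assumption that the learnable functions may be arbitrary continuous functions.

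There is no genuine obstacle here; the only points to verify are bookkeeping ones. I would double-check that the dimensions match the template in \eqref{eq:gnn-update-v}--\eqref{eq:gnn-update-w}, namely that with $d_{L+1} = d_L$ the maps $f_{L+1}^V, f_{L+1}^W : \bR^{d_L} \to \bR^{d_{L+1}}$ and $g_{L+1}^V, g_{L+1}^W : \bR^{d_L} \times \bR^{d_{L+1}} \to \bR^{d_{L+1}}$ are well-typed, and that killing the message via $f \equiv 0$ makes the copy exact for \emph{every} choice of $(G,H)$ rather than only generically. Iterating this construction also shows that any GNN can be padded to an arbitrary number of layers beyond $L$, which is precisely the uniformity in layer count exploited in the proof of Lemma~\ref{lem:subalgebra}.
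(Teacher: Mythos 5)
Your construction is exactly the paper's proof: zero out the new message functions $f_{L+1}^V\equiv 0$, $f_{L+1}^W\equiv 0$, take $g_{L+1}^V(h,h')=g_{L+1}^W(h,h')=h$ with $d_{L+1}=d_L$, and keep everything else unchanged. Correct, and the same argument as in the paper.
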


\begin{proof}[Proof of Lemma~\ref{lem:layer_num}]
	Suppose that $F$ is constructed by $f_{\mathrm{in}}^V,f_{\mathrm{in}}^W,f_\text{out},\{f_l^V,f_l^W,g_l^V,g_l^W\}_{l=0}^{L}$. We choose $f_{L+1}^V\equiv 0$, $f_{L+1}^W\equiv 0$, $g_{L+1}^V(h,h') = h$, $g_{L+1}^W(h,h') = h$. Let $\hat{F}$ be constructed by $f_{\mathrm{in}}^V,f_{\mathrm{in}}^W,f_\text{out},\{f_l^V,f_l^W,g_l^V,g_l^W\}_{l=0}^{L+1}$. Then $\hat{F}$ has $L+1$ layers with $\hat{F}=F$.
\end{proof}

\section{Universal Approximation of $\calF_{\text{GNN}}^W$}
\label{sec:universal-gnnw}

This section provides an universal approximation result of $\calF_{\text{GNN}}^W$.

\begin{theorem}\label{thm:uniapprox_vec}
    Let $X\subset \calG_{m,n}\times \calH^V_m\times \calH^W_n$ be a compact subset that is closed under the action of $S_m\times S_n$. Suppose that $\Phi\in\calC(X,\bR^n)$ satisfies the followings:
    \begin{itemize}[leftmargin=8mm]
        \item[(i)] For any $\sigma_V\in S_m$, $\sigma_W\in S_n$, and $(G,H)\in X$, it holds that
        \begin{equation}\label{equivariant_Phi}
            \Phi\left((\sigma_V,\sigma_W)\ast (G,H)\right) =  \sigma_W (\Phi(G,H)).
        \end{equation}
        \item[(ii)] $\Phi(G,H) = \Phi(\hat{G},\hat{H})$ holds for all $(G,H),(\hat{G},\hat{H})\in X$ with $(G,H) \ensuremath{\stackrel{W}{\sim}} (\hat{G},\hat{H})$.
        \item[(iii)] Given any $(G,H)\in X$ and any $j,j'\in\{1,2,\dots,n\}$, if $C_j^{l,W} = C_{j'}^{l,W}$ (the vertex colors obtained in the $l$-th iteration in WL test) holds for any $l\in\mathbb{N}$ and any choices of hash functions, then $\Phi(G,H)_j = \Phi(G,H)_{j'}$.
    \end{itemize}
    Then for any $\epsilon>0$, there exists $F\in \calF_{\text{GNN}}^W$ such that
    \begin{equation*}
        \sup_{(G,H)\in X} \|\Phi(G,H) - F(G,H)\| < \epsilon.
    \end{equation*}
\end{theorem}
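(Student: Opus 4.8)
The plan is to reduce the equivariant approximation problem to a scalar Stone--Weierstrass argument on the space of \emph{marked} LP-graphs, mirroring the proof of Theorem~\ref{thm:uniapprox_scal}. The key observation is that the vertex-output layer \eqref{eq:gnn-out-vertex} computes, at each variable vertex $w_j$, a quantity $f^W_{\text{out}}\bigl(\sum_i h_i^{L,V},\sum_j h_j^{L,W}, h_j^{L,W}\bigr)$ depending only on the graph together with the \emph{pointed} vertex $w_j$. I would therefore set $Y = X\times\{1,2,\dots,n\}$ (the second factor discrete, so $Y$ is compact), define the scalar target $\tilde\Phi(G,H,j) := \Phi(G,H)_j$, and note that each $F_W\in\calF_{\text{GNN}}^W$ induces a scalar function $\psi_{F_W}(G,H,j) := F_W(G,H)_j$ on $Y$ using the \emph{same} readout for every $j$. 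Since $\Phi$ is continuous, $\tilde\Phi\in\calC(Y,\bR)$, and a single approximant $\psi$ of the form $\psi_{F_W}$ with $\sup_Y|\psi-\tilde\Phi|$ small automatically yields one GNN $F_W$ that controls all $n$ components simultaneously.

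Next I would equip $Y$ with the equivalence relation $(G,H,j)\approx(\hat G,\hat H,j')$ declared to hold iff $F_W(G,H)_j = F_W(\hat G,\hat H)_{j'}$ for all $F_W\in\calF_{\text{GNN}}^W$, and pass to the quotient $Y/\!\approx$. By construction the family $\{\psi_{F_W}\}$ separates points of $Y/\!\approx$; as in the scalar case this separation also forces $Y/\!\approx$ to be Hausdorff, while compactness is inherited from $Y$ through the continuous quotient map. That $\{\psi_{F_W}\}$ is a \emph{subalgebra} containing the constants is established by the parallel-composition construction of Lemma~\ref{lem:subalgebra}, adapted to vertex outputs: running two GNNs side by side concatenates the hidden states, so the final readout at $w_j$ has access to both graphs' aggregates $\sum_i h_i^{L,V}$ and $\sum_j h_j^{L,W}$ and to both copies of $h_j^{L,W}$, from which one forms sums and products of the two readouts; the constant $1$ comes from a GNN whose output layer ignores its inputs.

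The crux of the argument, and the step I expect to be the main obstacle, is showing that $\tilde\Phi$ descends to $Y/\!\approx$, i.e. that $(G,H,j)\approx(\hat G,\hat H,j')$ implies $\Phi(G,H)_j = \Phi(\hat G,\hat H)_{j'}$. For a fixed graph (the case $(G,H)=(\hat G,\hat H)$), the relation $\approx$ reduces via Corollary~\ref{cor:separate_index} to the statement that $w_j$ and $w_{j'}$ carry the same WL color at every iteration and for every choice of hash functions, so hypothesis (iii) gives $\Phi(G,H)_j = \Phi(G,H)_{j'}$. The delicate case is two distinct graphs: here $\approx$ in particular forces $(G,H)\sim(\hat G,\hat H)$ (restrict to constant-vector readouts and invoke Theorem~\ref{thm:separate_GNN-main-txt}), and I would use Corollary~\ref{cor:separate_GNN} together with the equivariance hypothesis \eqref{equivariant_Phi} to produce a permutation $\sigma_W\in S_n$ realizing a color-preserving alignment of the $W$-vertices, so that $\Phi(\hat G,\hat H) = \sigma_W^{-1}(\Phi(G,H))$ once the ordered colorings are matched. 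The remaining work is combinatorial bookkeeping: one must verify that $\approx$ aligns the marked vertices $w_j$ and $w_{j'}$ \emph{within the same color class}, after which hypotheses (ii) and (iii) pin down $\Phi(G,H)_j = \Phi(\hat G,\hat H)_{j'}$. Getting this permutation/marking correspondence exactly right, rather than only up to a color-class ambiguity, is where the care is required.

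With $\tilde\Phi\in\calC(Y/\!\approx,\bR)$ and $\{\psi_{F_W}\}$ a point-separating subalgebra of $\calC(Y/\!\approx,\bR)$ containing the constants, the Stone--Weierstrass theorem (Theorem~\ref{thm:Stone_Weierstrass}) yields, for any $\eta>0$, some $F_W\in\calF_{\text{GNN}}^W$ with $\sup_{(G,H,j)}|F_W(G,H)_j - \Phi(G,H)_j| < \eta$. Taking $\eta = \epsilon/\sqrt{n}$ and summing over the $n$ coordinates gives $\sup_{(G,H)\in X}\|F_W(G,H) - \Phi(G,H)\| < \epsilon$, which is the claim.
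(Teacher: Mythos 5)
Your proposal is correct, but it takes a genuinely different route from the paper. The paper proves Theorem~\ref{thm:uniapprox_vec} by invoking the generalized Stone--Weierstrass theorem for \emph{equivariant} functions (Theorem~\ref{thm:equivariant_stone_weierstrass}, quoted from Azizian--Lelarge) and simply checking its four hypotheses against Lemma~\ref{lem:subalgebra_vec}, Theorem~\ref{thm:separate_GNN-main-txt}, Corollary~\ref{cor:separate_GNN}, and Corollary~\ref{cor:separate_index}; all of the functional-analytic difficulty is outsourced to that cited theorem. You instead reduce to the \emph{scalar} Stone--Weierstrass theorem (Theorem~\ref{thm:Stone_Weierstrass}) on the compact space of marked graphs $Y=X\times\{1,\dots,n\}$, which is more self-contained but forces you to prove by hand exactly the statement the equivariant version packages away: that $\tilde\Phi$ descends to $Y/\!\approx$. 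That step does go through, and the ``color-class ambiguity'' you worry about at the end is in fact harmless rather than something to be eliminated: from $(G,H,j)\approx(\hat G,\hat H,j')$ you get $(G,H)\sim(\hat G,\hat H)$, hence (as in the proof of Lemma~\ref{lem:LP_sameWL2samefeasibility}) a permutation $(\sigma_V,\sigma_W)$ with $(G,H)\stackrel{W}{\sim}(\sigma_V,\sigma_W)\ast(\hat G,\hat H)$, so hypotheses (i)--(ii) give $\Phi(G,H)_j=\Phi(\hat G,\hat H)_{\sigma_W^{-1}(j)}$; separately, a cross-graph analogue of Corollary~\ref{cor:separate_index} (provable by the same two-graph induction as Lemmas~\ref{lem:sameWL2sameWGNN} and~\ref{lem:sameGNN2sameWL}, taking the readout $f^W_{\text{out}}(a,b,c)=\varphi(c)$) shows $w_{j'}$ and $w_{\sigma_W^{-1}(j)}$ lie in the same color class of $(\hat G,\hat H)$, and hypothesis (iii) then closes the gap with $\Phi(\hat G,\hat H)_{\sigma_W^{-1}(j)}=\Phi(\hat G,\hat H)_{j'}$. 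Note that this cross-graph pointed separation result is not literally stated in the paper (Corollary~\ref{cor:separate_index} is single-graph only), so your route requires writing it out; what it buys you is an elementary argument that avoids citing the equivariant Stone--Weierstrass machinery, together with the pleasant feature that a single scalar approximation automatically controls all $n$ output coordinates of one GNN simultaneously.
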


Theorem~\ref{thm:uniapprox_vec} is a LP-graph version of results on the closure of equivariant GNN class in \cites{azizian2020expressive,geerts2022expressiveness}. The main tool in the proof of Theorem~\ref{thm:uniapprox_vec} is the following generalized Stone-Weierstrass theorem for equivariant functions established in \cite{azizian2020expressive}.

\begin{theorem}[Generalized Stone-Weierstrass theorem {\cite{azizian2020expressive}*{Theorem 22}}]\label{thm:equivariant_stone_weierstrass}
    Let $X$ be a compact topology space and let $\mathbf{G}$ be a finite group that acts continuously on $X$ and $\bR^n$. Define the collection of all equivariant continuous functions from $X$ to $\bR^n$ as follows:
    \begin{equation*}
        \calC_E(X,\bR^n) = \{F\in \calC(X,\bR^n) : F(g\ast x) = g\ast F(x),\ \forall~x\in X,g\in\mathbf{G}\}.
    \end{equation*}
    Consider any $\calF\subset \calC_E(X,\bR^n)$ and any $\Phi\in \calC_E(X,\bR^n)$. Suppose the following conditions hold:
    \begin{itemize}[leftmargin=8mm]
        \item[(i)] $\calF$ is a subalgebra of $\calC(X,\bR^n)$ and $\mathbf{1}\in \calF$.
        \item[(ii)] For any $x,x'\in X$, if $f(x) = f(x')$ holds for any $f\in\calC(X,\bR)$ with $f\mathbf{1}\in \calF$, then for any $F\in \calF$, there exists $g\in \mathbf{G}$ such that $F(x) = g\ast F(x')$.
        \item[(iii)] For any $x,x'\in X$, if $F(x) = F(x')$ holds for any $F\in \calF$, then $\Phi(x) = \Phi(x')$.
        \item[(iv)] For any $x\in X$, it holds that $\Phi(x)_j = \Phi(x)_{j'},\ \forall~(j,j')\in J(x)$, where $$J(x) = \left\{\{1,2,\dots,n\}^n: F(x)_j = F(x)_{j'},\ \forall~F\in\calF\right\}.$$
    \end{itemize}
    Then for any $\epsilon>0$, there exists $F\in\calF$ such that
    \begin{equation*}
        \sup_{x\in X}\| \Phi(x) - F(x) \|<\epsilon.
    \end{equation*}
\end{theorem}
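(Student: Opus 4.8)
The plan is to reduce the equivariant, vector-valued problem to a scalar Stone--Weierstrass argument on the quotient induced by the underlying scalar algebra, and then to match the target $\Phi$ locally on the sets where that scalar algebra is blind, using the group action together with conditions (ii)--(iv). First I would extract from $\calF$ its scalar part $\mathcal{A} := \{f \in \calC(X,\bR) : f\mathbf{1} \in \calF\}$. Since $\calF$ is a subalgebra of $\calC(X,\bR^n)$ (with the componentwise product) containing $\mathbf{1}$, the set $\mathcal{A}$ is a subalgebra of $\calC(X,\bR)$ containing the constants, and $\calF$ is a module over $\mathcal{A}$ via $(f,F)\mapsto (f\mathbf{1})\odot F$. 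Because $\mathbf{1}\in\calF$ is equivariant we have $g\ast\mathbf{1}=\mathbf{1}$ for all $g\in\mathbf{G}$, so every $f\in\mathcal{A}$ is $\mathbf{G}$-invariant. Applying the scalar Stone--Weierstrass theorem (Theorem~\ref{thm:Stone_Weierstrass}) to $\mathcal{A}$ on the compact space $X$, its uniform closure $\overline{\mathcal{A}}$ consists exactly of the continuous functions constant on the fibers of $\mathcal{A}$; writing $x\approx x'$ when $f(x)=f(x')$ for all $f\in\mathcal{A}$, the $\approx$-classes are the closed ``antisymmetric sets'' $\mathcal{K}$ on which $\mathcal{A}$ is constant.

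Next I would localize. Using compactness of $X$ and $\overline{\mathcal{A}}$-partitions of unity subordinate to open covers of the compact Hausdorff quotient $X/\!\approx$, a standard partition-of-unity localization reduces proving $\Phi\in\overline{\calF}$ to the local claim: for every antisymmetric set $\mathcal{K}$, the restriction $\Phi|_{\mathcal{K}}$ lies in the uniform closure of $\{F|_{\mathcal{K}} : F\in\calF\}$. The gluing is legitimate because the partition functions $\phi_i\in\overline{\mathcal{A}}$ are $\mathbf{G}$-invariant, so each $(\phi_i\mathbf{1})\odot F_i$ stays equivariant and lies in $\overline{\calF}$ by the module structure, while $\sum_i\phi_i=1$ yields $\|\Phi-\sum_i(\phi_i\mathbf{1})\odot F_i\|\le\sum_i\phi_i\|\Phi-F_i\|$, which is uniformly small once each $F_i$ approximates $\Phi$ near its supporting class.

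Finally I would settle the local claim on a fixed $\mathcal{K}$, where $\mathcal{A}$ reduces to constants. The key consequence of condition (ii) is that for each $F\in\calF$ the values $\{F(x):x\in\mathcal{K}\}$ lie in a single $\mathbf{G}$-orbit of $\bR^n$: since $x\approx x'$ is precisely the premise of (ii), we get $F(x)=g\ast F(x')$, so $F$ composed with the orbit map $\bR^n\to\bR^n/\mathbf{G}$ is constant on $\mathcal{K}$. I would then show $\Phi$ shares this orbit-and-stabilizer pattern: condition (iii) forbids $\Phi$ from separating points that all of $\calF$ cannot separate, and condition (iv) forbids $\Phi$ from separating coordinates $j,j'$ tied together at $x$ by the stabilizer of $x$ (equivalently, by all of $\calF$), so that $\Phi(x)$ lies in the fixed subspace of that stabilizer, exactly as $F(x)$ does. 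Fixing a base point $x_0\in\mathcal{K}$ and choosing $F\in\calF$ matching $\Phi(x_0)$ up to the stabilizer, equivariance (i) transports the match consistently across the orbit, and a continuity/compactness argument upgrades this pointwise match to uniform approximation on $\mathcal{K}$.

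I expect this last step to be the main obstacle. The difficulty is that condition (ii) supplies, for each individual $F$ and each pair $x\approx x'$, a group element $g$ that may depend on $F$, whereas the equivariant matching of $\Phi$ demands a single globally consistent assignment of group elements across the whole antisymmetric set. Reconciling these per-function orbit relations into one coherent equivariant approximant---while correctly accounting for nontrivial point stabilizers through condition (iv)---is precisely where the argument departs from the classical scalar and vector-valued Stone--Weierstrass theorems, and where the finiteness of $\mathbf{G}$ and a careful orbit/stabilizer bookkeeping must be invoked.
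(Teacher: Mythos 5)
This statement is not proved in the paper at all: it is imported verbatim as Theorem 22 of \cite{azizian2020expressive}, with the paper pointing to \cite{timofte2005stone} for the underlying Stone--Weierstrass machinery. So the only fair comparison is against that cited proof, and your outline does reproduce its general architecture: extract the scalar subalgebra $\mathcal{A}=\{f:f\mathbf{1}\in\calF\}$, pass to the quotient by the equivalence $x\approx x'$ it induces, localize by a Nachbin-type module argument with $\overline{\mathcal{A}}$-partitions of unity, and then match $\Phi$ class by class. Two small points first: your claim that every $f\in\mathcal{A}$ is $\mathbf{G}$-invariant uses $g\ast\mathbf{1}=\mathbf{1}$, which holds for the intended coordinate-permutation actions (and is implicitly forced by condition (iv), which only makes sense for such actions) but is not part of the stated hypotheses, where $\mathbf{G}$ acts continuously on $\bR^n$ in an unspecified way; and your pointwise step is fine --- at a single $x_0$ the value set $\{F(x_0):F\in\calF\}$ is a subalgebra of $\bR^n$ containing $\mathbf{1}$, so by Stone--Weierstrass on the finite coordinate set its closure is exactly the vectors respecting the pattern $J(x_0)$, and condition (iv) puts $\Phi(x_0)$ in it.

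The genuine gap is the one you yourself flag at the end, and it is not a technicality but the entire substance of the theorem. An equivalence class $\mathcal{K}$ of $\approx$ is in general \emph{not} a single $\mathbf{G}$-orbit --- this is precisely the situation the paper is built around, since Figure~\ref{fig:not_iso_WL} exhibits WL-equivalent LP-graphs that are not isomorphic --- so ``fixing a base point $x_0$ and transporting the match by equivariance'' only handles $(\mathbf{G}\ast x_0)\cap\mathcal{K}$ and says nothing about the remaining points of $\mathcal{K}$. For those, condition (ii) supplies a group element $g=g(F,x,x')$ depending on the function as well as the pair, and upgrading these per-function, per-pair relations into a single uniform approximant on $\mathcal{K}$ --- with nontrivial stabilizers controlled through (iv) --- is exactly what the cited proof accomplishes via Timofte's localizability theorem, which computes the distance from $\Phi$ to the module $\calF$ as a supremum of per-class distances and shows each vanishes under (ii)--(iv) together with the finiteness of $\mathbf{G}$. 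As written, your argument establishes pointwise approximability and a plausible gluing scheme, but the step from pointwise to uniform approximation on a full (non-orbit) class is asserted as ``a continuity/compactness argument'' and then immediately acknowledged as unresolved; without it the proof does not go through, since dropping any of (ii)--(iv) makes the conclusion false and your outline never actually consumes (ii) or (iii) beyond single points.
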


We refer to \cite{timofte2005stone} for different versions of Stone-Weierstrass theorem, that is also used in \cite{azizian2020expressive}. In the proof of Theorem~\ref{thm:uniapprox_vec}, we also need the following lemma whose proof is almost the same as the proof of Lemma~\ref{lem:subalgebra} and is hence omitted.

\begin{lemma}\label{lem:subalgebra_vec}
    $\calF_{\text{GNN}}^W$ is a subalgebra of $\calC(\calG_{m,n}\times \calH^V_m\times \calH^W_n,\bR^n)$.
\end{lemma}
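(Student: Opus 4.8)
The plan is to mirror the proof of Lemma~\ref{lem:subalgebra} almost verbatim: the internal architecture of the combined network is identical, and the only modification needed is at the output layer, where the single-output rule \eqref{eq:gnn-out-single} is replaced by the vertex-output rule \eqref{eq:gnn-out-vertex}. Since addition and multiplication on $\calC(X,\bR^n)$ are entrywise, it suffices to show that $\calF_{\text{GNN}}^W$ is closed under entrywise addition and entrywise multiplication, with closure under scalar multiplication and membership of the constant vector $\mathbf{1}$ handled at the end.

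First I would take arbitrary $F_W,\hat{F}_W\in\calF_{\text{GNN}}^W$. By the padding argument of Lemma~\ref{lem:layer_num} --- which modifies only the internal layers (setting $f_{L+1}^V\equiv 0$, $f_{L+1}^W\equiv 0$, $g_{L+1}^V(h,h')=h$, $g_{L+1}^W(h,h')=h$) and therefore applies verbatim to the vertex-output class --- I may assume $F_W$ and $\hat{F}_W$ have the same number of layers $L$. I then build the product network exactly as in Lemma~\ref{lem:subalgebra}: the input maps and the layer maps $\mathbf{f}_l^V,\mathbf{f}_l^W,\mathbf{g}_l^V,\mathbf{g}_l^W$ are the Cartesian products of the corresponding maps of $F_W$ and $\hat{F}_W$. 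As established there, the final features of the combined network are the concatenations $\mathbf{h}_i^{L,V}=(h_i^{L,V},\hat{h}_i^{L,V})$ and $\mathbf{h}_j^{L,W}=(h_j^{L,W},\hat{h}_j^{L,W})$, so that $\sum_i\mathbf{h}_i^{L,V}=(\sum_i h_i^{L,V},\sum_i\hat{h}_i^{L,V})$ and $\sum_j\mathbf{h}_j^{L,W}=(\sum_j h_j^{L,W},\sum_j\hat{h}_j^{L,W})$.

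Second --- the only genuinely new step --- I define the combined vertex-output function. For addition I set
\[ \mathbf{f}^{W,\mathrm{add}}_{\text{out}}\big((s,\hat{s}),(t,\hat{t}),(h,\hat{h})\big) = f^W_{\text{out}}(s,t,h)+\hat{f}^W_{\text{out}}(\hat{s},\hat{t},\hat{h}), \]
and for multiplication the same expression with $+$ replaced by $\cdot$. The point I must verify is that the vertex-output rule \eqref{eq:gnn-out-vertex} feeds, as its third argument, the individual final feature of the specific vertex $w_j$; because the product construction preserves the per-vertex indexing (the $j$-th combined feature is exactly the pair of $j$-th features), plugging $(\sum_i\mathbf{h}_i^{L,V},\sum_j\mathbf{h}_j^{L,W},\mathbf{h}_j^{L,W})$ into $\mathbf{f}^{W,\mathrm{add}}_{\text{out}}$ yields precisely $F_W(G,H)_j+\hat{F}_W(G,H)_j$ for every $j$, i.e.\ the entrywise sum in $\bR^n$; the multiplicative output function likewise yields the entrywise product. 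Hence $F_W+\hat{F}_W,\,F_W\cdot\hat{F}_W\in\calF_{\text{GNN}}^W$.

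Finally, closure under scalar multiplication follows by scaling the output map ($cF_W$ is realized by $c\,f^W_{\text{out}}$), and the constant vector $\mathbf{1}=(1,\dots,1)$ is realized by the constant function $f^W_{\text{out}}\equiv 1$; together with closure under addition and multiplication this exhibits $\calF_{\text{GNN}}^W$ as a subalgebra of $\calC(\calG_{m,n}\times\calH^V_m\times\calH^W_n,\bR^n)$. I do not anticipate any real obstacle: the construction is a routine adaptation of Lemma~\ref{lem:subalgebra}, and the sole subtlety is the bookkeeping of the third argument of \eqref{eq:gnn-out-vertex}, ensuring that the per-vertex correspondence between the two parallel networks is maintained so that the combined output is genuinely the entrywise operation on the two $\bR^n$-valued outputs.
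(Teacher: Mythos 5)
Your proposal is correct and takes exactly the route the paper intends: the paper omits the proof of Lemma~\ref{lem:subalgebra_vec}, stating it is almost the same as that of Lemma~\ref{lem:subalgebra}, and your argument carries out precisely that adaptation, reusing the parallel (product) construction for the input and internal layers and changing only the output layer from \eqref{eq:gnn-out-single} to \eqref{eq:gnn-out-vertex}, with the per-vertex third argument tracked correctly. Your additional remarks on scalar multiplication and the constant function $\mathbf{1}$ are harmless and in fact align with what Theorem~\ref{thm:equivariant_stone_weierstrass} later requires.
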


\begin{proof}[Proof of Theorem~\ref{thm:uniapprox_vec}]
    Let $S_m\times S_n$ act on $\bR^n$ via
    \begin{equation*}
        (\sigma_V,\sigma_W)\ast y = \sigma_W(y),\quad\forall~\sigma_V\in S_m, \sigma_W\in S_n, y\in\bR^n.
    \end{equation*}
    Then it follows from \eqref{equivariant_Phi} that $\Phi$ is equivariant. In addition, the definition of graph neural networks directly guarantees the equivariance of functions in $\calF_{\text{GNN}}^W$. Therefore, one only needs to verify the conditions with $\calF$ as $\calF_{\text{GNN}}^W$ and $\mathbf{G}$ as $S_n$ in Theorem~\ref{thm:equivariant_stone_weierstrass}:
    \begin{itemize}[leftmargin=*]
        \item Condition (i) in Theorem~\ref{thm:equivariant_stone_weierstrass} follows from Lemma~\ref{lem:subalgebra_vec} and the definition of $\calF_{\text{GNN}}^W$.
        \item Condition (ii) in Theorem~\ref{thm:equivariant_stone_weierstrass} follows from Theorem~\ref{thm:separate_GNN-main-txt} and $\calF_{\text{GNN}}\mathbf{1}\subset \calF_{\text{GNN}}^W$.
        \item Condition (iii) in Theorem~\ref{thm:equivariant_stone_weierstrass} follows from Corollary~\ref{cor:separate_GNN} and Condition (ii) in Theorem~\ref{thm:uniapprox_vec}.
        \item Condition (iv) in Theorem~\ref{thm:equivariant_stone_weierstrass} follows from Corollary~\ref{cor:separate_index} and Condition (iii) in Theorem~\ref{thm:uniapprox_vec}.
    \end{itemize}
    It finishes the proof.
\end{proof}

\section{Proof of Main Theorems} 
We collect the proofs of main theorems stated in Section~\ref{sec:mainthm} in this section.

\begin{proof}[Proof of Theorem~\ref{thm:sameGNN2sameFeas}]
    If $F(G,H) = F(\hat{G},\hat{H})$ holds for any $F\in\calF_{\text{GNN}}$, then Theorem~\ref{thm:separate_GNN-main-txt} guarantees that $(G,H)\sim(\hat{G},\hat{H})$. Thus, Condition (i), (ii), and (iii) follow directly from Lemmas~\ref{lem:LP_sameWL2samefeasibility}, \ref{lem:LP_sameWL2sameobj}, and \ref{lem:LP_sameWL2samesolu}, respectively.
    
    Furthermore, if $F_W(G,H) = F_W(\hat{G},\hat{H}),\ \forall~F_W\in \calF_{\text{GNN}}^W$, then it follows from Corollary~\ref{cor:separate_GNN} and Corollary~\ref{cor:LP_sameWL2samesolu} that the two LP problems associated to $(G,H)$ and $(\hat{G},\hat{H})$ share the same optimal solution with the smallest $\ell
    _2$-norm.
\end{proof}

Then we head into the proof of three main approximation theorems, say Theorem~\ref{thm:GNNapproxPhi_feas}, \ref{thm:GNNapproxPhi_obj}, and \ref{thm:GNNapproxPhi_solu}, that state that GNNs can approximate the feasibility mapping $\Phi_{\text{feas}}$, the optimal objective value mapping $\Phi_{\text{obj}}$, and the optimal solution mapping $\Phi_{\text{solu}}$ with arbitrarily small error, respectively. We have established in Sections \ref{sec:universal-gmm} and \ref{sec:universal-gnnw} several theorems for graph neural networks to approximate continuous mappings. Therefore, the proof of Theorem~\ref{thm:GNNapproxPhi_feas}, \ref{thm:GNNapproxPhi_obj}, and \ref{thm:GNNapproxPhi_solu} basically consists of two steps:
\begin{itemize}[leftmargin=8mm]
    \item[(i)] Show that the mappings $\Phi_{\text{feas}}$, $\Phi_{\text{obj}}$, and $\Phi_{\text{solu}}$ are measurable.
    \item[(ii)] Use continuous mappings to approximate the target measurable mappings, and then apply the universal approximation results established in Sections \ref{sec:universal-gmm} and \ref{sec:universal-gnnw}.
\end{itemize}
Let us first prove the feasibility of $\Phi_{\text{feas}}$, $\Phi_{\text{obj}}$, and $\Phi_{\text{solu}}$ in the following three lemmas.

\begin{lemma}\label{lem:feas_meas}
    The feasibility mapping $\Phi_{\text{feas}}$ defined in \eqref{feas_map} is measurable, i.e., the preimages $\Phi_{\text{feas}}^{-1}(0)$ and $\Phi_{\text{feas}}^{-1}(1)$ are both measurable subsets of $\calG_{m,n}\times\calH^V_m\times\calH^W_n$.
\end{lemma}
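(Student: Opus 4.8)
The plan is to reduce the measurability of $\Phi_{\text{feas}}$ to the semialgebraicity of a finite family of projected solution sets. First I would exploit the fact that the discrete factors $\{\leq,=,\geq\}$, $\{-\infty\}$, $\{+\infty\}$ all carry the discrete topology and counting measure and that all unions are disjoint. Thus $\calG_{m,n}\times\calH^V_m\times\calH^W_n$ decomposes into finitely many \emph{strata}, indexed by a choice of relation vector $\circ\in\{\leq,=,\geq\}^m$ together with a finiteness pattern recording, for each $j$, whether $l_j\in\bR$ or $l_j=-\infty$ and whether $u_j\in\bR$ or $u_j=+\infty$. Each stratum is a measurable set (a product of copies of $\bR$ with fixed singletons from the discrete spaces), and $\Phi_{\text{feas}}^{-1}(1)$ is the disjoint union, over these finitely many strata, of its intersection with each stratum. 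Hence it suffices to prove that $\Phi_{\text{feas}}^{-1}(1)$ meets each stratum in a measurable set.

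On a fixed stratum the data collapses to a Euclidean vector: the matrix $A\in\bR^{m\times n}$, the right-hand side $b\in\bR^m$, the finite entries of $l$ and $u$, and the objective $c\in\bR^n$. Since feasibility is independent of $c$, the feasibility set inside the stratum is a cylinder over the $c$-coordinates, so it is measurable if and only if its projection
\[
  S=\Bigl\{(A,b,l_{\mathrm{fin}},u_{\mathrm{fin}}):\ \exists\, x\in\bR^n,\ Ax\circ b,\ l\leq x\leq u\Bigr\}
\]
is measurable, where on this stratum the constraint $l_j\leq x_j$ (resp.\ $x_j\leq u_j$) is simply dropped whenever $l_j=-\infty$ (resp.\ $u_j=+\infty$). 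The set $S$ is precisely the image, under projection onto the data coordinates, of the solution set of a finite system of polynomial (indeed linear-in-$x$, bilinear-in-data) equalities and inequalities. By the Tarski--Seidenberg theorem the projection of a semialgebraic set is semialgebraic, so $S$ is semialgebraic; and every semialgebraic set, being a finite Boolean combination of sets cut out by polynomial inequalities, is Borel and therefore Lebesgue measurable. Assembling the finitely many strata shows that $\Phi_{\text{feas}}^{-1}(1)$ is measurable, and $\Phi_{\text{feas}}^{-1}(0)$ is its complement.

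The step I expect to carry the real content is the last one: one cannot shortcut it by claiming $S$ is open or closed, because equality constraints make feasibility genuinely unstable under perturbation of $A$ (a consistent equality system can become inconsistent, and conversely), so $S$ is in general neither open nor closed. The clean way to capture ``there exists a feasible $x$'' as a measurable condition on the data is therefore to eliminate the existential quantifier over $x$ via Tarski--Seidenberg and invoke semialgebraicity. The only bookkeeping subtlety is the treatment of the $\pm\infty$ bounds, which the stratification disposes of by fixing, stratum by stratum, exactly which box constraints are present.
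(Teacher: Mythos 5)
Your proof is correct, and your stratification over $\circ$ and the finiteness pattern of $(l,u)$ is exactly the reduction the paper performs (it fixes $\circ\in\{\leq,=,\geq\}^m$ and subsets $N_l,N_u$ indexing the finite bounds, and notes the whole space is a finite disjoint union of Euclidean-times-discrete pieces). Where you genuinely diverge is in how the existential quantifier over $x$ is eliminated. The paper introduces a continuous violation function $V_{\text{feas}}(A,b,l,u,x)\geq 0$ vanishing exactly at feasible $x$ and writes the feasibility stratum as
\[
\bigcup_{R\in\mathbb{N}_+}\ \bigcap_{r\in\mathbb{N}_+}\ \bigcup_{x\in B_R\cap\mathbb{Q}^n}\left\{(A,b,l,u): V_{\text{feas}}(A,b,l,u,x)\leq \tfrac{1}{r}\right\},
\]
a countable Boolean combination of closed sets, using compactness of $B_R$ for one direction of the equivalence and density of $\mathbb{Q}^n$ plus continuity for the other. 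You instead observe that the constraint system is semialgebraic in $(A,b,l,u,x)$ jointly (bilinear in data and $x$) and invoke Tarski--Seidenberg to project out $x$. Both arguments are valid. Yours is shorter and yields the strictly stronger conclusion that each feasibility stratum is semialgebraic, not merely Borel; the paper's is more elementary (no quantifier elimination needed) and, more importantly, serves as a template reused for $\Phi_{\text{obj}}$ and especially $\Phi_{\text{solu}}$, where the violation function is built from the merely-measurable map $\Phi_{\text{obj}}$ itself, so the countable-approximation scheme composes with prior measurability results in a way a purely semialgebraic argument would need extra work to replicate. Your remark that the feasibility set is neither open nor closed (so no topological shortcut exists) is accurate and correctly identifies where the real content lies.
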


\begin{proof}[Proof of Lemma~\ref{lem:feas_meas}]
    It suffices to prove that for any $\circ\in \{\leq ,=\geq\}^m$ and $N_l , N_u\subset \{1,2,\dots,n\}$, the set
    \begin{multline*}
        X_\text{feas} := \{(A,b, l, u)\in \bR^{m\times n}\times \bR^m \times\bR^{|N_l|}\times\bR^{|N_u|} :\\
        \exists\ x\in \bR^n,\text{ s.t. } Ax\circ b,\ x_j\geq l_j,\ \forall~j\in N_l,\ x_j\leq u_j,\ \forall~j\in N_u\},
    \end{multline*}
    is a measurable subset in $\bR^{m\times n}\times \bR^m \times\bR^{|N_l|}\times\bR^{|N_u|}$. Without loss of generality, we assume that $\circ = (\leq,\dots,\leq,=,\dots, =,\geq,\dots,\geq)$ where ``$\leq$'', ``$=$'', and ``$\geq$'' appear for $k_1$, $k_2 - k_1$, and $m-k_1-k_2$ times, respectively, $0\leq k_1 \leq k_2\leq m$. 
    
    Let us define a function $V_\text{feas}:\bR^{m\times n}\times \bR^m \times\bR^{|N_l|}\times\bR^{|N_u|}\times \bR^n\rightarrow \bR_{\geq 0}$ that measures to what extend a point in $\bR^n$ violates the constraints:
    \begin{multline*}
        V_\text{feas}(A,b,l,u,x) = \max\left\{\max_{1\leq i\leq k_1} \left(\sum_{j=1}^n A_{i,j}x_j - b_i\right)_+, \max_{k_1< i\leq k_2} \left|\sum_{j=1}^n A_{i,j}x_j - b_i\right|, \right.\\
        \left.\max_{k_2< i\leq m} \left( b_i-\sum_{j=1}^n A_{i,j}x_j\right)_+,\ \max_{j\in N_l} (l_j - x_j)_+,\ \max_{j\in N_u} (x_j - u_j)_+\right\},
    \end{multline*}
    where $y_+ = \max\{y,0\}$. It can be seen that $V_\text{feas}$ is continuous and $V_\text{feas}(A,b,l,u,x)=0$ if and only if $Ax\circ b$, $x_j\geq l_j,\ \forall~j\in N_l$, and $x_j\leq u_j,\ \forall~j\in N_u$. Therefore, for any $(A,b,l,u)\in \bR^{m\times n}\times \bR^m \times\bR^{|N_l|}\times\bR^{|N_u|}$, the followings are equivalent:
    \begin{itemize}[leftmargin=*]
        \item $(A,b,l,u)\in X_\text{feas}$.
        \item There exists $R\in\mathbb{N}_+$ and $x\in B_R:=\{x'\in\bR^n:\|x'\| \leq R\}$, such that $V_\text{feas}(A,b,l,u,x) = 0$.
        \item There exists $R\in\mathbb{N}_+$ such that for any $r\in\mathbb{N}_+$, $V_\text{feas}(A,b,l,u,x)\leq 1/r$ holds for some $x\in B_R\cap \mathbb{Q}^n$.
    \end{itemize}
    This implies that $X_\text{feas}$ can be described via
    \begin{equation*}
        \bigcup_{R\in \mathbb{N}_+}\bigcap_{r\in \mathbb{N}_+} \bigcup_{x\in B_R\cap \mathbb{Q}^n}
        \left\{(A,b,l,u)\in \bR^{m\times n}\times \bR^m \times\bR^{|N_l|}\times\bR^{|N_u|} : V_\text{feas}(A,b,l,u,x)\leq \frac{1}{r}\right\}.
    \end{equation*}
    Since $\mathbb{Q}^n$ is countable and $V$ is continuous, we immediately obtain from the above expression that $X_\text{feas}$ is measurable.
\end{proof}

\begin{lemma}\label{lem:obj_meas}
    The optimal objective value mapping $\Phi_{\text{obj}}$ defined in \eqref{obj_map} is measurable.
\end{lemma}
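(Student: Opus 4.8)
The plan is to mimic the domain decomposition and ``violation function'' technique of Lemma~\ref{lem:feas_meas}, reducing the measurability of the extended-real-valued map $\Phi_{\text{obj}}$ to the measurability of a countable family of augmented feasibility sets. Since the Borel $\sigma$-algebra on $\bR\cup\{+\infty,-\infty\}$ is generated by the rays $\{[-\infty,t):t\in\mathbb{Q}\}$, it suffices to show that each sublevel set $\Phi_{\text{obj}}^{-1}([-\infty,t))=\{(G,H):\Phi_{\text{obj}}(G,H)<t\}$ is measurable: the two special fibers are then recovered by the countable operations $\Phi_{\text{obj}}^{-1}(\{+\infty\})=\left(\calG_{m,n}\times\calH^V_m\times\calH^W_n\right)\setminus\bigcup_{t\in\mathbb{Q}}\{\Phi_{\text{obj}}<t\}$ and $\Phi_{\text{obj}}^{-1}(\{-\infty\})=\bigcap_{t\in\mathbb{Q}}\{\Phi_{\text{obj}}<t\}$. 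Exactly as in Lemma~\ref{lem:feas_meas}, I would fix the comparison vector $\circ\in\{\leq,=,\geq\}^m$ and the index sets $N_l,N_u$ of finitely-bounded variables, so that on each of these finitely many pieces the data $(A,b,c,l,u)$ ranges over a Euclidean space $\bR^{m\times n}\times\bR^m\times\bR^n\times\bR^{|N_l|}\times\bR^{|N_u|}$ and measurability can be checked there.

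The key reformulation is that $\Phi_{\text{obj}}(G,H)$ is the infimum of $c^\top x$ over the feasible set $\calX_F$, so $\Phi_{\text{obj}}(G,H)<t$ holds precisely when some feasible $x$ satisfies $c^\top x<t$. Introducing a rational threshold to pass from a strict to a non-strict constraint, I would then write
\[
\{\Phi_{\text{obj}}<t\}=\bigcup_{s\in\mathbb{Q},\,s<t}\bigl\{(A,b,c,l,u):\exists\,x\in\bR^n,\ Ax\circ b,\ l\leq x\leq u,\ c^\top x\leq s\bigr\},
\]
a countable union, so it is enough to prove each set in the union is measurable. Each such set is precisely the feasibility set of the LP obtained by appending to $Ax\circ b$ the single linear constraint $c^\top x\leq s$, i.e., the feasibility set for the $(m+1)$-row data with comparison vector $(\circ,\leq)$.

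I would establish measurability of this augmented set exactly as in Lemma~\ref{lem:feas_meas}, replacing $V_\text{feas}$ by the continuous function
\[
V_\text{obj}(A,b,c,l,u,x)=\max\bigl\{V_\text{feas}(A,b,l,u,x),\ (c^\top x-s)_+\bigr\},
\]
which vanishes precisely on augmented-feasible points. The same three-way equivalence used before (existence of an augmented-feasible point; existence of one in some ball $B_R$; existence of $R$ such that $V_\text{obj}\leq 1/r$ is attainable on $B_R\cap\mathbb{Q}^n$ for every $r\in\mathbb{N}_+$) yields the explicit representation
\[
\bigcup_{R\in\mathbb{N}_+}\bigcap_{r\in\mathbb{N}_+}\bigcup_{x\in B_R\cap\mathbb{Q}^n}\Bigl\{(A,b,c,l,u):V_\text{obj}(A,b,c,l,u,x)\leq\tfrac1r\Bigr\},
\]
which is a countable combination of closed sets and hence Borel. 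Taking the countable union over $s$ and the finite union over the pieces $(\circ,N_l,N_u)$ gives measurability of $\{\Phi_{\text{obj}}<t\}$ for every $t$, and therefore of $\Phi_{\text{obj}}$.

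I expect the only genuine subtlety—rather than a real obstacle—to be the careful bookkeeping between the infimum defining $\Phi_{\text{obj}}$ and the existence of witnessing feasible points: one must use the \emph{strict} inequality $c^\top x<t$ so that $\inf<t$ is captured correctly, while reducing to the \emph{non-strict}, rational-threshold constraint $c^\top x\leq s$ in order to keep each level set closed and the whole construction Borel. Everything else is a routine adaptation of the already-established Lemma~\ref{lem:feas_meas}.
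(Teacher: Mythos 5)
Your proposal is correct and takes essentially the same route as the paper: the paper likewise reduces to the measurability of the augmented feasibility sets $\{(A,b,c,l,u):\exists x,\ c^\top x\leq \phi,\ Ax\circ b,\ l\leq x\leq u\}$ and handles them with the identical violation function $V_\text{obj}=\max\{(c^\top x-\phi)_+,V_\text{feas}\}$. If anything, you are more explicit than the paper about why these sets suffice (the sublevel-set/rational-threshold bookkeeping and the recovery of the $\pm\infty$ fibers), a step the paper leaves implicit.
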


\begin{proof}[Proof of Lemma~\ref{lem:obj_meas}]
    It suffices to prove that for any $\circ\in \{\leq ,=\geq\}^m$, any $N_l , N_u\subset \{1,2,\dots,n\}$, and any $\phi\in \bR$, the set
    \begin{multline*}
        X_\text{obj} := \{(A,b, c, l, u)\in \bR^{m\times n}\times \bR^m \times\bR^n \times\bR^{|N_l|}\times\bR^{|N_u|} :\\
        \exists\ x\in \bR^n,\text{ s.t. } c^\top x \leq \phi,\ Ax\circ b,\ x_j\geq l_j,\ \forall~j\in N_l,\ x_j\leq u_j,\ \forall~j\in N_u\},
    \end{multline*}
    is a measurable subset in $\bR^{m\times n}\times \bR^m \times \bR^n \times\bR^{|N_l|}\times\bR^{|N_u|}$. Then the proof follows the same lines as in the proof of Lemma~\ref{lem:feas_meas}, with a different violation function
    \begin{equation*}
        V_\text{obj}(A,b,c,l,u,x)= \max\left\{(c^\top x - \phi)_+, V_\text{feas}(A,b,l,u,x) \right\}.
    \end{equation*}
\end{proof}

\begin{lemma}\label{lem:solu_meas}
    The optimal solution mapping $\Phi_{\text{solu}}$ defined in \eqref{solu_map} is measurable.
\end{lemma}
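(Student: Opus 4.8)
The plan is to follow the same decomposition as in the proofs of Lemmas~\ref{lem:feas_meas} and \ref{lem:obj_meas}: fix a sign pattern $\circ\in\{\leq,=,\geq\}^m$ and index sets $N_l,N_u\subset\{1,2,\dots,n\}$ recording which variables carry finite lower/upper bounds, so that the data $d=(A,b,c,l,u)$ ranges over a Euclidean space. On this piece the domain $D:=\Phi_{\text{obj}}^{-1}(\bR)$ is measurable (it is a preimage of the measurable map $\Phi_{\text{obj}}$, by Lemma~\ref{lem:obj_meas}), and $\phi^\ast(d):=\Phi_{\text{obj}}(d)$ is a measurable function on $D$. For $d\in D$ the LP is feasible and bounded, so its optimal face is a nonempty closed convex set and, by Remark~\ref{rmk:soluLeast2norm}, carries a unique smallest-$\ell_2$-norm point $x^\ast=\Phi_{\text{solu}}(d)$. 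Since a map into $\bR^n$ is measurable if and only if each coordinate is, it suffices to show that $\{d\in D:\Phi_{\text{solu}}(d)_j\leq t\}$ is measurable for every $j$ and every $t\in\bR$.

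First I would establish measurability of the auxiliary value function $N(d):=\min\{\|x\|^2: V_{\text{feas}}(d,x)=0,\ c^\top x\leq\phi^\ast(d)\}$, the minimal squared norm over the optimal face. Mimicking the rational-approximation argument of Lemma~\ref{lem:feas_meas}, I claim
\[\{d\in D:N(d)\leq t\}=\bigcap_{r\in\mathbb{N}_+}\ \bigcup_{x\in\mathbb{Q}^n}\Big\{d\in D: V_{\text{feas}}(d,x)\leq\tfrac1r,\ c^\top x\leq\phi^\ast(d)+\tfrac1r,\ \|x\|^2\leq t+\tfrac1r\Big\}.\]
The inclusion ``$\subseteq$'' comes from approximating the exact minimizer by rationals, using continuity of $V_{\text{feas}}(\cdot,x)$ and of $d\mapsto c^\top x$; the reverse uses that the witnesses lie in the fixed ball $\|x\|^2\leq t+1$, so a convergent subsequence produces a feasible, optimal $\bar x$ with $\|\bar x\|^2\leq t$, whence $N(d)\leq t$. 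Each set on the right is measurable because $\phi^\ast$ is measurable while $V_{\text{feas}}(\cdot,x)$ is continuous, and the operations are countable; hence $N$ is measurable.

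With $\phi^\ast$ and $N$ in hand, the coordinate sublevel sets admit the analogous description
\[\{d\in D:\Phi_{\text{solu}}(d)_j\leq t\}=\bigcap_{r\in\mathbb{N}_+}\ \bigcup_{x\in\mathbb{Q}^n}\Big\{d\in D: V_{\text{feas}}(d,x)\leq\tfrac1r,\ c^\top x\leq\phi^\ast(d)+\tfrac1r,\ \|x\|^2\leq N(d)+\tfrac1r,\ x_j\leq t+\tfrac1r\Big\},\]
which is measurable for the same reasons. The forward inclusion again approximates $x^\ast$ by rationals. The crux — and what I expect to be the main obstacle — is the reverse inclusion: given rational witnesses $x_r$ for every $r$, they lie in the bounded set $\|x\|^2\leq N(d)+1$, so a subsequence converges to some $\bar x$ that is feasible ($V_{\text{feas}}(d,\bar x)=0$) and optimal ($c^\top\bar x\leq\phi^\ast(d)$) with $\|\bar x\|^2\leq N(d)$; since $N(d)$ is the minimum squared norm over the optimal face, $\bar x$ must be exactly the unique minimizer $x^\ast$ (here Remark~\ref{rmk:soluLeast2norm} is essential), and then $\bar x_j\leq t$ forces $\Phi_{\text{solu}}(d)_j\leq t$. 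Establishing this pinning-down of $x^\ast$ cleanly — the interplay of compactness, continuity of the violation function, and uniqueness of the smallest-norm optimizer — is the delicate step; once it is in place, measurability of $\Phi_{\text{solu}}$ follows by applying the evident symmetric statement to every coordinate and threshold.
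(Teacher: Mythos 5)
Your proposal is correct, and it follows the same overall scaffolding as the paper's proof: reduce to coordinates, fix the sign pattern $\circ$ and the index sets $N_l,N_u$, use the continuous violation function $V_{\text{feas}}$ together with the already-established measurability of $\Phi_{\text{obj}}$, express the target set through countable unions and intersections over rational witnesses, and close the reverse inclusion by compactness plus the uniqueness of the least-$\ell_2$-norm optimizer from Remark~\ref{rmk:soluLeast2norm}. The one genuine difference is how the least-norm property is encoded. The paper characterizes $(x_{\mathrm{opt}})_{j_0}<\phi$ directly by a two-sided condition: there is an optimal point $x$ with $x_{j_0}<\phi$ \emph{and} no optimal point with $j_0$-th coordinate $\geq\phi$ inside the ball $B_{\|x\|}$; the universal clause is then made countable via an extra ``$\exists r''$ such that $V_{\text{solu}}\geq 1/r''$ on $B_R\cap\mathbb{Q}^n$'' layer, justified by compactness. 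You instead first prove measurability of the auxiliary value function $N(d)=\min\{\|x\|^2:\ x\ \text{optimal}\}$ and then impose $\|x\|^2\leq N(d)+\tfrac1r$ as an additional constraint on the witnesses, so that any limit point of witnesses is an optimal solution of minimal norm and hence, by uniqueness, equals $\Phi_{\text{solu}}(d)$. Your version buys a purely existential second stage (no universal quantifier over the ball, hence no need for the uniform-positivity step) at the cost of one preliminary measurability lemma for $N$; the paper's version avoids the auxiliary function but pays with the more intricate nested quantifier structure. Both are sound, and your "delicate step" — pinning down $\bar{x}=x^\ast$ from feasibility, optimality, and $\|\bar{x}\|^2\leq N(d)$ — goes through exactly as you describe.
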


\begin{proof}[Proof of Lemma~\ref{lem:solu_meas}]
    It suffices to show that for every $j_0\in\{1,2,\dots,n\}$, the mapping
    \begin{equation*}
        \pi_{j_0}\circ \Phi_{\text{solu}} : \Phi_{\text{obj}}^{-1}(\bR)\rightarrow \bR,
    \end{equation*}
    is measurable, where $\pi_{j_0}:\bR^n\rightarrow \bR$ maps a vector $x\in\bR^n$ to its $j_0$-th component. Similar as before, one can consider any $\circ\in \{\leq ,=\geq\}^m$, any $N_l , N_u\subset \{1,2,\dots,n\}$, and any $\phi\in \bR$, and prove that the set
    \begin{multline*}
        X_\text{solu} := \{(A,b, c, l, u)\in \bR^{m\times n}\times \bR^m \times\bR^n \times\bR^{|N_l|}\times\bR^{|N_u|} :\\
        \text{The LP problem, } \min_{x\in\bR^d} c^\top x,\text{ s.t. } Ax\circ b,\ x_j\geq l_j,\ \forall~j\in N_l,\ x_j\leq u_j,\ \forall~j\in N_u,\\
        \text{has a finite optimal objective value, and its optimal solution with the smallest } \ell_2-\text{norm}, \\
        x_{\text{opt}}, \text{ satisfies } (x_{\text{opt}})_{j_0} < \phi\},
    \end{multline*}
    is measurable.
    
    Note that we have fixed $\circ\in \{\leq ,=\geq\}^m$ and $N_l , N_u\subset \{1,2,\dots,n\}$. Let
    \begin{equation*}
        \iota: \bR^{m\times n}\times \bR^m \times\bR^n \times\bR^{|N_l|}\times\bR^{|N_u|} \rightarrow  \calG_{m,n}\times\calH^V_m\times \calH^W_n,
    \end{equation*}
    be the embedding map. Define another violation function
    \begin{equation*}
        V_\text{solu}: (\Phi_{\text{obj}}\circ \iota)^{-1}(\bR)\times\bR^n \rightarrow \bR,
    \end{equation*}
    via
    \begin{equation*}
        V_\text{solu}(A,b,c,l,u,x)= \max\left\{ \left(c^\top x - \Phi_{\text{obj}}(\iota(A,b,c,l,u))\right)_+,V_\text{feas}(A,b,c,l,u,x)\right\},
    \end{equation*}
    which is measurable with respect to $(A,b,c,l,u)$ for any fixed $x\in\bR^n$, due to the measurability of $\Phi_{\text{obj}}$ and the continuity of $V_\text{feas}$. Moreover, $V_\text{solu}$ is continuous with respect to $x$. Therefore, the followings are equivalent for $(A,b, c, l, u)\in (\Phi_{\text{obj}}\circ \iota)^{-1}(\bR)$:
    \begin{itemize}[leftmargin=*]
        \item $(A,b, c, l, u) \in X_\text{solu}$.
        \item There exists $x\in \bR^n$ with $x_{j_0} < \phi$, such that $V_\text{solu}(A,b,c,l,u,x) = 0$ and $V_\text{solu}(A,b,c,l,u,x') > 0,\ \forall~x'\in B_{\|x\|},\ x_{j_0}'\geq \phi$.
        \item There exists $R\in\mathbb{Q}_+$, $r\in\mathbb{N}_+$, and $x\in B_R$ with $x_{j_0} \leq \phi - 1 / r$, such that $V_\text{solu}(A,b,c,l,u,x) = 0$ and $V_\text{solu}(A,b,c,l,u,x') > 0,\ \forall~x'\in B_R,\ x_{j_0}'\geq \phi$.
        \item There exists $R\in\mathbb{Q}_+$ and $r\in\mathbb{N}_+$, such that for all $r'\in \mathbb{N}_+$, $\exists~x\in B_R\cap \mathbb{Q}^n$, $x_{j_0} \leq \phi - 1 / r$, s.t. $V_\text{solu}(A,b,c,l,u,x) < 1 / r'$ and that $\exists~r''\in\mathbb{N}_+$, s.t., $V_\text{solu}(A,b,c,l,u,x') \geq 1/r'',\ \forall~x'\in B_R\cap \mathbb{Q}^n,\ x_{j_0}'\geq \phi$. 
    \end{itemize}
    Therefore, one has that
    \begin{multline*}
        X_{\text{solu}} = \bigcup_{R\in\mathbb{Q}_+} \bigcup_{r\in\mathbb{N}_+} \\
        \left(\left(\bigcap_{r'\in\mathbb{N}_+} \bigcup_{x\in B_R\cap \mathbb{Q}^n,\ x_{j_0} \leq \phi - \frac{1}{r}} \left\{(A,b,c,l,u)\in (\Phi_{\text{obj}}\circ \iota)^{-1}(\bR):V_\text{solu}(A,b,c,l,u,x) < \frac{1}{r'}\right\}\right)\right. \\
        \left.\cap \left(\bigcup_{r''\in\mathbb{N}_+}\bigcap_{x'\in B_R\cap \mathbb{Q}^n,\ x_{j_0}'\geq \phi} \left\{(A,b,c,l,u)\in (\Phi_{\text{obj}}\circ \iota)^{-1}(\bR):V_\text{solu}(A,b,c,l,u,x') \geq \frac{1}{r''} \right\} \right)\right),
    \end{multline*}
    which is measurable.
\end{proof}

With the measurability of $\Phi_{\text{feas}}$, $\Phi_{\text{obj}}$, and $\Phi_{\text{solu}}$ established, the next step is to approximate $\Phi_{\text{feas}}$, $\Phi_{\text{obj}}$, and $\Phi_{\text{solu}}$ using continuous mappings, and hence graph neural networks. Before proceeding, let us mention that $\calG_{m,n}\times\calH^V_m\times\calH^W_n$ is essentially the disjoint union of finitely many product spaces of Euclidean spaces and discrete spaces that have finitely many points and are equipped with discrete measures. More specifically,
\begin{align*}
    \calG_{m,n}\times\calH^V_m\times\calH^W_n & \cong \bR^{m\times n} \times (\bR\times \{\leq,=,\geq\})^m \times (\bR\times (\bR\cup\{-\infty\})\times (\bR\cup\{+\infty\}))^n \\
    & \cong \bigcup_{ k, k' = 0}^n \bigcup_{s = 1}^{\binom{n}{k}} \bigcup_{s'=1}^{\binom{n}{k'}} \bR^{m\times n} \times \bR^m \times \bR^n \times \bR^{n-k} \times \bR^{n-k'}\\
    &\qquad\qquad\qquad\qquad\qquad\times \{\leq,=,\geq\}^m \times\{-\infty\}^{k} \times \{+\infty\}^{k'}.
\end{align*}
Therefore, many results in real analysis for Euclidean spaces still apply for $\calG_{m,n}\times\calH^V_m\times\calH^W_n$ and $\text{Meas}(\cdot)$, including the following Lusin's theorem.

\begin{theorem}[Lusin's theorem {\cite{evans2018measure}*{Theorem 1.14}}] \label{thm:lusin}
    Let $\mu$ be a Borel regular measure on $\bR^n$ and let $f:\bR^n\rightarrow\bR^m$ be $\mu$-measurable. Then for any $\mu$-measurable $X\subset \bR^n$ with $\mu(X)<\infty$ and any $\epsilon>0$, there exists a compact set $E\subset X$ with $\mu(X\backslash E)<\epsilon$, such that $f|_E$ is continuous.
\end{theorem}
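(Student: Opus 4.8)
The plan is to prove Lusin's theorem by uniformly approximating $f$ on a carefully chosen compact subset of $X$ by maps that are constant on finitely many well-separated compact pieces, and hence continuous; the only substantial ingredient is the inner regularity of the measure by compact sets.

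First I would pass from $\mu$ to the finite measure $\nu$ obtained by restricting $\mu$ to $X$, i.e.\ $\nu(A) := \mu(A\cap X)$. Since $\mu(X)<\infty$, the restriction $\nu$ is a finite Radon measure on $\bR^n$ (the restriction of a Borel regular measure to a set of finite measure is Radon), so it is inner regular by compact sets: every $\nu$-measurable set $A$ satisfies $\nu(A) = \sup\{\nu(K) : K\subset A \text{ compact}\}$. This compact-approximation theorem for Radon measures is the only nontrivial external input. Because $\mu$- and $\nu$-measurability agree on subsets of $X$, the $\mu$-measurable map $f$ is $\nu$-measurable, and I may work with $\nu$ throughout.

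Next I would discretize the target. For each integer $i\ge 1$, fix a countable Borel partition $\{B_{ij}\}_{j\ge 1}$ of $\bR^m$ with $\mathrm{diam}(B_{ij})\le 1/i$ (for instance half-open cubes of side length $<1/(i\sqrt{m})$), and set $A_{ij} := f^{-1}(B_{ij})\cap X$. For fixed $i$ these sets are $\nu$-measurable, pairwise disjoint, and cover $X$. By inner regularity choose compact $K_{ij}\subset A_{ij}$ with $\nu(A_{ij}\setminus K_{ij}) < \epsilon\, 2^{-(i+j+1)}$; since $\sum_j \nu(A_{ij}) = \nu(X)<\infty$, pick $N_i$ with $\nu\big(X\setminus\bigcup_{j\le N_i} A_{ij}\big) < \epsilon\, 2^{-(i+1)}$. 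Then $E_i := \bigcup_{j=1}^{N_i} K_{ij}$ is a finite union of compact sets, hence compact, and the inclusion $X\setminus E_i \subset \big(X\setminus\bigcup_{j\le N_i}A_{ij}\big)\cup\bigcup_{j\le N_i}(A_{ij}\setminus K_{ij})$ gives $\nu(X\setminus E_i) < \epsilon\,2^{-i}$.

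On $E_i$ the finitely many pieces $K_{i1},\dots,K_{iN_i}$ are disjoint compact sets and therefore pairwise at positive distance; choosing $y_{ij}\in B_{ij}$ and letting $g_i$ equal $y_{ij}$ on $K_{ij}$ produces a locally constant, hence continuous, map $g_i\colon E_i\to\bR^m$ with $\sup_{E_i}\|f-g_i\|\le 1/i$. Finally set $E := \bigcap_{i\ge 1} E_i$, which is compact and satisfies $\nu(X\setminus E)\le\sum_i \nu(X\setminus E_i) < \epsilon$, i.e.\ $\mu(X\setminus E)<\epsilon$. Each $g_i$ restricts to a continuous map on $E$ and $\sup_E\|f-g_i\|\le 1/i\to 0$, so $f|_E$ is a uniform limit of continuous functions and is therefore continuous. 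The main obstacle is the inner-regularity step, which I would settle by invoking that the restriction of a Borel regular measure to a finite-measure set is Radon; everything after that---the summable error budget, compactness of $E_i$ from finiteness of the union, the positive separation of the $K_{ij}$ that makes $g_i$ continuous, and the uniform-limit conclusion---is routine bookkeeping.
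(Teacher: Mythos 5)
Your proof is correct. The paper does not prove this statement at all---it quotes Lusin's theorem verbatim from the cited reference (Evans--Gariepy, Theorem 1.14)---and your argument is essentially the standard proof given there: restrict $\mu$ to $X$ so that the restriction is a finite Radon measure (hence inner regular by compact sets), partition $\bR^m$ into countably many Borel pieces of diameter at most $1/i$, extract compacta $K_{ij}$ with a summable error budget, and conclude on $E=\bigcap_i E_i$ by uniform approximation with locally constant continuous maps.
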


\begin{proof}[Proof of Theorem~\ref{thm:GNNapproxPhi_feas}]
    Since $X\subset \calG_{m,n}\times\calH^V_m\times\calH^W_n$ is measurable with finite measure, according to Lusin's theorem, there is a compact set $E\subset X$ such that $\Phi_{\text{feas}}|_E$ is continuous with $\text{Meas}(X\backslash E)<\epsilon$. By Lemma~\ref{lem:LP_sameWL2samefeasibility} and Theorem~\ref{thm:uniapprox_scal}, there exists $F\in\calF_{\text{GNN}}$ such that
    \begin{equation*}
        \sup_{(G,H)\in E} |F(G,H) - \Phi_{\text{feas}}(G,H)| < \frac{1}{2},
    \end{equation*}
    which implies that
    \begin{equation*}
        \mathbb{I}_{F(G,H)>1/2} = \Phi_{\text{feas}}(G,H),\quad\forall~(G,H)\in E.
    \end{equation*}
    Therefore, one obtains
    \begin{equation*}
		\text{Meas}\left(\left\{(G,H)\in X:\mathbb{I}_{F(G,H)>1/2}\neq \Phi_{\text{feas}}(G,H)\right\}\right)\leq \text{Meas}(X\backslash E)<\epsilon,
	\end{equation*}
	and the proof is completed.
\end{proof}

\begin{proof}[Proof of Corollary~\ref{cor:GNNapproxPhi_feas}]
    As a finite set, $\calD$ is compact and $\Phi_{\text{feas}}|_\calD$ is continuous. The rest of the proof is similar to that of Theorem~\ref{thm:GNNapproxPhi_feas}, using Lemma~\ref{lem:LP_sameWL2samefeasibility} and Theorem~\ref{thm:uniapprox_scal}.
\end{proof}

\begin{proof}[Proof of Theorem~\ref{thm:GNNapproxPhi_obj}] 
    (i) The proof follows the same lines as in the proof of Theorem~\ref{thm:GNNapproxPhi_feas}, with the difference that we approximate
    \begin{equation*}
        \Phi(G,H) = \begin{cases}1, & \text{if }\Phi_{\text{obj}}(G,H) \in \bR, \\
        0, &\text{otherwise},\end{cases}
    \end{equation*}
    instead of $\Phi_{\text{feas}}$ and we use Lemma~\ref{lem:LP_sameWL2sameobj} instead of Lemma~\ref{lem:LP_sameWL2samefeasibility}.
    
    (ii) The proof is still similar to that of Theorem~\ref{thm:GNNapproxPhi_feas}. By Lusin's theorem, there is a compact set $E\subset X\cap \Phi_{\text{obj}}^{-1}(\bR)$ such that $\Phi_{\text{obj}}|_E$ is continuous with $\text{Meas}\left((X\cap \Phi_{\text{obj}}^{-1}(\bR))\backslash E\right)<\epsilon$. Using Lemma~\ref{lem:LP_sameWL2sameobj} and Theorem~\ref{thm:uniapprox_scal}, there exists $F_2\in \calF_{\text{GNN}}$ with
    \begin{equation*}
        \sup_{(G,H)\in E} |F_2(G,H) - \Phi_{\text{obj}}(G,H)| < \delta,
    \end{equation*}
    which implies that
    \begin{equation*}
		\text{Meas}\left(\left\{(G,H)\in X:| F_2(G,H) -  \Phi_{\text{obj}}(G,H)|>\delta\right\}\right)\leq \text{Meas}\left((X\cap \Phi_{\text{obj}}^{-1}(\bR))\backslash E\right)<\epsilon.
	\end{equation*}
\end{proof}

\begin{proof}[Proof of Corollary~\ref{cor:GNNapproxPhi_obj}]
    The results can be proved using similar techniques as in Theorem~\ref{thm:GNNapproxPhi_feas} and Theorem~\ref{thm:GNNapproxPhi_obj} by noticing that any finite dataset is compact on which any real-valued function is continuous.
\end{proof}

\begin{proof}[Proof of Theorem~\ref{thm:GNNapproxPhi_solu}]
    Without loss of generality, we can assume that $X\subset \Phi_{\text{obj}}^{-1}(\bR)\subset \calG_{m,n}\times\calH^V_m\times \calH^W_n$ is closed under the action of $S_m\times S_n$; otherwise, we use $\bigcup_{(\sigma_V,\sigma_W)\in S_m\times S_n} (\sigma_V,\sigma_W)\ast X$ to replace $X$. By Lusin's theorem, there exists a compact subset $E'\subset X$ such that $\text{Meas}(A\backslash X)<\epsilon/|S_m\times S_n|$ and that $\Phi_{\text{solu}}|_{E'}$ is continuous. Define another compact set:
    \begin{equation*}
        E = \bigcap_{(\sigma_V,\sigma_W)\in S_m\times S_n} (\sigma_V,\sigma_W)\ast E'\subset X,
    \end{equation*}
    which is closed under the action of $S_m\times S_n$ and satisfies
    \begin{align*}
        \text{Meas}(X\backslash E) 
        & \leq \sum_{(\sigma_V,\sigma_W)\in S_m\times S_n} \text{Meas}(X\backslash (\sigma_V,\sigma_W)\ast E') 
        \\
        &= \sum_{(\sigma_V,\sigma_W)\in S_m\times S_n} \text{Meas}((\sigma_V,\sigma_W)\ast X\backslash (\sigma_V,\sigma_W)\ast E')
        \\
        & =\sum_{(\sigma_V,\sigma_W)\in S_m\times S_n} \mu((\sigma_V,\sigma_W)\ast (X\backslash E'))
        \\
        &= \sum_{(\sigma_V,\sigma_W)\in S_m\times S_n} \mu(X\backslash E') 
        \\
        & < \sum_{(\sigma_V,\sigma_W)\in S_m\times S_n} \frac{\epsilon}{|S_m \times S_n|} 
        \\
        & = \epsilon.
    \end{align*}
    Note that the three conditions in Theorem~\ref{thm:uniapprox_vec} are satisfied by the definition of $\Phi_{\text{solu}}$, Corollary~\ref{cor:LP_sameWL2samesolu}, and Corollary~\ref{cor:LP_sameWL2samecomponent}, respectively. Using Theorem~\ref{thm:uniapprox_vec}, there exists $F_W\in\calF_{\text{GNN}}^W$ such that 
    \begin{equation*}
        \sup_{(G,H)\in E} \|F_W(G,H) - \Phi_{\text{solu}}(G,H)\|<\delta.
    \end{equation*}
    Therefore, it holds that
    \begin{equation*}
        \text{Meas}\left(\left\{(G,H)\in X:\| F_W(G,H) -  \Phi_{\text{solu}}(G,H)\|>\delta\right\}\right)\leq \text{Meas}(X\backslash E)<\epsilon,
    \end{equation*}
    which completes the proof.
\end{proof}

\begin{proof}[Proof of Corollary~\ref{cor:GNNapproxPhi_solu}]
    One can assume that $\calD$ is closed under the action of $S_m\times S_n$; otherwise, a larger but still finite dataset, $\bigcup_{(\sigma_V,\sigma_W)\in S_m\times S_n}(\sigma_V,\sigma_W)\ast\calD$, can be considered instead of $\calD$. The rest of the proof is similar to that of Theorem~\ref{thm:GNNapproxPhi_solu} since $\calD$ is compact and $\Phi_{\text{solu}}|_\calD$ is continuous.
\end{proof}

\section{Details of the Numerical Experiments and Extra Experiments}
\label{sec:apx_numerics}

\paragraph{\textbf{LP instance generation}} We generate each LP with the following way. We set $m = 10$ and $n = 50$. Each matrix $A$ is sparse with $100$ nonzero elements whose positions are sampled uniformly and values are sampled normally. Each element in $b,c$ are sampled i.i.d and uniformly from $[-1,1]$. Additionally, each element in $c$ is scaled by $0.01$. The variable bounds $l,u$ are sampled with $\mathcal{N}(0,10)$. If $l_j > u_j$, then we swap $l_j$ and $u_j$ for all $1\leq j \leq n$. Furthermore, we sample $\circ_i$ i.i.d with $\mathbb{P}(\circ_i = ``\leq ") = 0.7$ and $\mathbb{P}(\circ_i = `` = ") = 0.3$. With the generation approach above, the probability that each LP to be feasible is around $0.53$. 

\paragraph{\textbf{MLP architectures}} As we mentions in the main text, all the learnable functions in GNN are taken as MLPs. The input functions $f_{\mathrm{in}}^V,f_{\mathrm{in}}^W$ have one hidden layer and other functions 
$f_\text{out}, f^W_{\text{out}}, \{f_l^V,f_l^W,g_l^V,g_l^W\}_{l=0}^{L}$ have two hidden layers. The embedding size $d_0,\cdots,d_L$ are uniformly taken as $d$ that is chosen from $\{2,4,8,16,32,64,128,256,512\}$. All the activation functions are ReLU.

\paragraph{\textbf{Training settings}} We use Adam~\cite{kingma2014adam} as our training optimizer with learning rate of $0.0003$. The loss function is taken as mean squared error. All the experiments are conducted on a Linux server with an Intel Xeon Platinum 8163 GPU and eight NVIDIA Tesla V100 GPUs.

\paragraph{\textbf{Extra experiments on generalization}} We generate the testing set consisting of 1000 LP problems using the same distribution as that of the training set. The performance of the trained GNNs on training set and testing set is presented in Table~\ref{tab:gen_feas}, \ref{tab:gen_obj}, and \ref{tab:gen_solu} for feasibility, optimal objective value, and optimal solution, respectively. The metric in Table \ref{tab:gen_feas} is the rate of classification errors; the metric in Table \ref{tab:gen_obj} is a relative error defined as $|F-\Phi_{\text{obj}}|/(|\Phi_{\text{obj}}|+1)$; the metric in Table \ref{tab:gen_solu} is defined with $\|F_W-\Phi_{\text{solu}}\|/(\|\Phi_{\text{solu}}\|+1)$.

\begin{table}[htb!]
    \centering
    \begin{tabular}{c|c|c|c}
         Number of Training Samples& 100 & 500 & 2500 \\
         \hline
         The Error on the training set & 0& 0 & 0.067 \\
         \hline
         The Error on the testing set & 0.454 & 0.339 &  0.175
    \end{tabular}
    \caption{Generalization for feasibility (Num. GNN parameters: 1254)}
    \label{tab:gen_feas}
\end{table}

\begin{table}[htb!]
    \centering
    \begin{tabular}{c|c|c|c}
         Number of Training Samples& 100 & 500 & 2500 \\
         \hline
         The Error on the training set & 1.9e-6 & 0.080 & 0.128 \\
         \hline
         The Error on the testing set & 0.790 & 0.591 &  0.173
    \end{tabular}
    \caption{Generalization for optimal objective value (Num. GNN parameters: 1254)}
    \label{tab:gen_obj}
\end{table}

\begin{table}[htb!]
    \centering
    \begin{tabular}{c|c|c|c}
         Number of Training Samples& 100 & 500 & 2500 \\
         \hline
         The Error on the training set & 0.141 & 0.193	 & 0.205 \\
         \hline
         The Error on the testing set & 0.550 & 0.351 & 0.274
    \end{tabular}
    \caption{Generalization for optimal solution (Num. GNN parameters: 7888)}
    \label{tab:gen_solu}
\end{table}

One can observe that, for a GNN with fixed size, its generalization performance, i.e., the performance on the testing set is increasing if it is trained with more training samples. Given these numerical results, we believe that understanding the generalization quantitatively and theoretically deserves future research.

\end{document}